\newcommand{\R}{{\mathbb R}}
\newcommand{\N}{{\mathbb N}}
\newcommand{\cY}{{\cal Y}}
\newcommand{\cX}{{\cal X}}
\newcommand{\bE}{\mathbf{E}}
\newcommand{\cT}{\mathcal{T}}
\newcommand{\cG}{{\cal G}}
\newcommand{\bx}{{\bf x}}
\newcommand{\bX}{{\bf X}}
\newcommand{\bY}{{\bf Y}}
\newcommand{\bZ}{{\bf Z}}
\newcommand{\bV}{{\bf V}}
\newcommand{\bz}{{\bf z}}
\newcommand{\by}{{\bf y}}
\newcommand{\bW}{{\bf W}}
\newcommand{\bC}{{\bf C}}
\newcommand{\Adj}{\operatorname{Adj}}
\newcommand{\PA}{\operatorname{PA}}
\newcommand{\pa}{\operatorname{pa}}
\newcommand{\CI}{\operatorname{CI}}
\newcommand\independent{\protect\mathpalette{\protect\independenT}{\perp}}
\def\independenT#1#2{\mathrel{\rlap{$#1#2$}\mkern2mu{#1#2}}}
\newcommand{\ind}{\independent}
\newcommand{\MD}{\operatorname{MD}}
\DeclareMathOperator*{\argmin}{arg\,min}
\newcommand\dominik[1]{%
  \iftoggle{printcomments}{%
    \textcolor{cyan}{Dominik: #1}%
  }{}%
}
\newcommand\philipp[1]{%
  \iftoggle{printcomments}{%
\textcolor{red}{Philipp: #1} % Notes for Leena
  }{}%
}
\newcommand\todo[1]{%
  \iftoggle{printcomments}{%
\textcolor{red}{#1} % Notes for Leena
  }{}%
}
\definecolor{lightgray}{gray}{0.85}
\tikzset{>=stealth'} 
\tikzset{node distance=1.5cm}
\tikzstyle{graphnode} = 
\tikzstyle{var}   =[graphnode,fill=white]
\tikzstyle{vardashed}   =[graphnode,draw=gray,fill=white]
\tikzstyle{obs}   =[graphnode,fill=black,text=white]
\tikzstyle{obsgrey}   =[graphnode,draw=white,fill=lightgray,text=black]
\tikzstyle{par}    =[graphnode,draw=white,fill=red,text=black] 
 \tikzstyle{crucial} =[graphnode,draw=white,fill=yellow,text=black] 
\tikzstyle{fac}   =[rectangle,draw=black,fill=black!25,minimum size=5pt]
\tikzstyle{facprior} =[rectangle,draw=black,fill=black,text=white,minimum size=5pt]
\tikzstyle{edge}  =[draw=white,double=black,very thick,-]
\tikzstyle{blueedge}  =[draw=white,double=blue,very thick,-]
\tikzstyle{rededge}  =[draw=white,double=red,very thick,-]
\tikzstyle{prior} =[rectangle, draw=black, fill=black, minimum size=
\tikzstyle{dirprior} = [circle, draw=black, fill=black, minimum
\tikzstyle{dot_node}=[draw=black,fill=black,shape=circle]
\setlist[itemize]{noitemsep,leftmargin=*,nosep}
\setlist[enumerate]{noitemsep,leftmargin=*,nosep}
\newtheorem{assumption}{Assumption}
\definecolor{tabgreen}{HTML}{2ca02c}
\definecolor{tabred}{HTML}{d62728}
\definecolor{tabblue}{HTML}{1f77b4}
\definecolor{taborange}{HTML}{ff7f0e}
\begin{document}

% If your paper is accepted and the title of your paper is very long,
% the style will print as headings an error message. Use the following
% command to supply a shorter title of your paper so that it can be
% used as headings.
%
%\runningtitle{I use this title instead because the last one was very long}

% If your paper is accepted and the number of authors is large, the
% style will print as headings an error message. Use the following
% command to supply a shorter version of the author names so that
% they can be used as headings (for example, use only the surnames)
%
%\runningauthor{Surname 1, Surname 2, Surname 3, ...., Surname n}

\twocolumn[

\aistatstitle{On Different Notions of Redundancy in Conditional-Independence-Based Discovery of Graphical Models}
% [Redundancy in Constraint-Based Discovery of Graphical Models]

\aistatsauthor{ Philipp M. Faller \And Dominik Janzing$^{*}$}

\aistatsaddress{ Karlsruhe Institute of Technology\\
	              Karlsruhe, Germany\\ 
                \texttt{philipp.faller@partner.kit.edu} 
            \And  Amazon Research\\
	            Tübingen, Germany\\
                \texttt{janzind@amazon.com}} ]

\begin{abstract}
Conditional-independence-based discovery uses statistical tests to identify a graphical model that represents the independence structure of variables in a dataset.
These tests, however, can be unreliable, and algorithms are sensitive to errors and violated assumptions.
 Often, there are tests that were not used in the construction of the graph.
In this work, we show that these \emph{redundant} tests have the potential to \emph{detect} or sometimes \emph{correct} errors in the learned model.
But we further show that not all tests contain this additional information and that such redundant tests have to be applied with care.
Precisely, we argue that the conditional (in)dependence statements that hold for every probability distribution are unlikely to detect and correct errors---in contrast to those that follow only from graphical assumptions.
\end{abstract}

\section{INTRODUCTION}
Graphical models have become an indispensable tool for understanding complex systems and making informed decisions in various scientific disciplines \citep{lauritzen1996graphical}. 
They provide insights into the structure within a system, and under some additional assumptions, they can be interpreted as \emph{causal models} \citep{pearl2009causality,spirtes2000causation}.

Conditional independence (CI) statements are utilised to infer the graphical structure by algorithms such as PC \citep{spirtes2000causation} or SP \citep{raskutti2018learning}.
However, a key challenge arises from the statistical hardness of conditional independence tests. 
As shown by \citet{shah2020hardness}, CI-tests cannot have a valid false positive control and power against arbitrary alternatives simultaneously. \todo{double check if this is what the paper says}
Additionally, constraint-based algorithms often rely on assumptions like \emph{faithfulness}, which means that a graph not only implies \emph{in}dependences but also dependences.
\citet{uhler2013geometry} showed that this assumption can be problematic in the finite-sample regime since even faithful distributions can be close enough to unfaithful ones for a CI-test to fail.
It is also violated whenever systems evolve to equilibrium states, as in many biological settings \citep{andersen2013expect}.
On the other hand, even a single wrong result of a CI-test can result in arbitrarily large changes in the resulting graphical model (as shown in \cref{ex:wrong_v_structure} in \cref{sec:additional_examples}).
%This is especially worrisome if the graphical model is interpreted as a causal model.

In the worst case, CI-based discovery of graphical models requires exponentially many CI-tests in the number of nodes \citep{korhonen2024structural,zhang2024membership}.
%TODO write better. This makes it sound as if our idea is trivial
Despite the large number of required tests, the set of possible graphical models can still be small compared to the set of possible combinations of CI-statements.
% as \cite{janzing2023reinterpreting} have highlighted.
%They show that, in fact, the independence structures implied by graphical models are indeed a set of finite VC dimension in the learning scenario they formalize.
While e.g. \citet{shiragur2024causal} proposed a method to reduce the number of required tests (while sacrificing details of the models),
in this paper we will advocate for using additional CI-tests to evaluate the graphs, which has been proposed implicitly or explicitly before  \citep{textor2016robust,eulig2025toward,janzing2023reinterpreting}.
In spirit, this follows \citet{raskutti2018learning}, who have hypothesised that there exists a statistical/computational trade-off for causal discovery. 
We will argue that not all CI-tests carry much additional information (and can therefore be used to evaluate a graphical model), but only those tests for which the result follows from \emph{graphical} restrictions instead of the laws of probability. 
%We propose to use the latter to detect and correct errors that occurred during learning.

\begin{example}[non-generic collider]
\label{ex:motivation}
	Consider a probability distribution that is Markovian and faithful to the graph $X_1\to Y\leftarrow X_2$ for random variables $X_1, X_2, Y$.
	Suppose we use the PC algorithm to recover the graph.
	The algorithm will conduct all pairwise marginal independence tests.
	These tests already identify the given DAG.
    But clearly, the graph also entails $X_1\not\ind X_2\mid Y$  under the faithfulness assumption.
    %\footnote{In fact, this already follows e.g. from the weaker \emph{orientation faithfulness} \cite{ramsey2006adjacency}}
	On the other hand, this dependence does not follow for all probability distributions,
    as we will see by constructing a counterexample. 
    We will use similar constructions multiple times.
	Assume\footnote{%If the reader is uncomfortable with the vector-valued random variable $Y$, 
    One could also think about a variable $Y$ taking values in the natural numbers, where disjoint sets of bits in the binary expansion of $Y$ depend on $X_1$ and $X_2$, respectively.} $Y=(Y_1, Y_2)\in \R^2$. Now assume $Y_1$ only depends on $X_1$ and $Y_2$ only on $X_2$ as in \cref{fig:three_v_structures}.
	Then we would get the same marginal (in)dependences as before, but not the conditional dependence from above.
     On the other hand, if we have $X_1\not\ind Y$ and $X_1\ind X_2$ we also have $X_1
    \not\ind Y\mid X_2$ for all probability distributions (see \cref{sec:small_claims} for a derivation).
    This follows from the Graphoid axioms that we will refer to throughout the paper (see \cref{def:graphoid}).
    In other words, there is a dependence, $X_1\not\ind X_2\mid Y$, that follows from the assumption that the underlying distribution can be represented by a \emph{faithful} DAG, but the dependence does not hold for all distributions.
At the same time, there is a CI-statement that carries no additional information, namely $X_1
    \not\ind Y\mid X_2$. \todo{Set tikz nodes size back to 22pt}

We can use tests like $X_1\ind X_2\mid Y$ in two ways: either to make the result of graph discovery more robust, or for evaluation---similar to held-out samples in statistical cross-validation \citep{bishop2006pattern} or additional bits in an error-detecting code (see also \cref{subsec:noisy_channel}).
Hence, we will call them \emph{redundant}.
%While the former has been proposed before to some extent (as we will discuss later),  we are the first to propose the latter.
But we will see throughout the paper that these redundant tests can detect and correct (among others) errors from faithfulness violations, which is impossible with methods like statistical cross-validation.% or bootstrapping.

\paragraph{Contributions}
This paper aims to provide a novel perspective on CI-based discovery of graphical models. Precisely,
%Precisely,
\begin{itemize}
	\item to the best of our knowledge, we are the first to point out that the dependence between CI-statements impacts which tests should be used in graph discovery, %we propose two notions of redundancy of CI-statements: the ones that already follow for all distributions and the ones that are only implied by graphical assumptions. 
	\item we show  that tests which already follow from previous ones in all distributions can give a misleading impression of evidence for a graphical model, while the ones that only follow through graphical assumptions are more likely to falsify the model,
	\item we show why the tests that follow for all distributions cannot be used to correct errors, while the ones that follow from graphical assumptions alone can, %Specifically,% \begin{itemize}
         %we argue that our theory allows us to consider additional CI-tests to evaluate a model, and
    	\item we show how our novel perspective generalises previous results on the robustness of graph discovery.
       % \end{itemize}
\end{itemize}
%There are known observations in the same spirit (e.g. \citet{ramsey2006adjacency,colombo2014order,raskutti2018learning}).
%To the best of our knowledge, 
In other words, 
we are the first to systematically investigate and categorise the redundancy of 
CI-statements, and particularly to use these notions to evaluate and 
improve graphs with CI-tests that are \enquote{held-out} in the sense that they 
do not follow from the tests used for the graph discovery.
This work aims to contribute to the discussion on how graphical models should be evaluated and to question which empirical observations are real evidence and thus capable of corroborating a model.

\begin{figure}[t]
\centering
    \begin{tikzpicture}
	% Original nodes
	\node[obs] (A) {$X_1$};
	\node[obs, right=of A] (B1) {$Y_1$};
	\node[obs, right=of A, xshift=1cm] (B2) {$Y_2$};
	
	% Circle enclosing B1 and B2
	\begin{scope}[on background layer]
		\node[draw, ellipse, dashed, fit=(B1) (B2), inner sep=0.05cm] (B_group) {};
	\end{scope}
	
	\node[obs, right=of B2] (C) {$X_2$} edge[->] (B2);
	%\node[obs, below =of B_group] (D) {$D$} edge[<-] (B2);
	%\node[vardashed, below=of A] (L1) {$L_1$} edge[->] (A) edge[->] (D);
	%\node[vardashed, below=of C] (L2) {$L_2$} edge[->] (C) edge[->] (D);
	
	% Connection from A to B1 and B2
	\draw[->] (A) -- (B1);
	%\draw[->] (B1) -- (D);

	% Add a label for the group
	%\node[below=0.1cm of B_group.south] {$B$};
\end{tikzpicture}
	\caption{The marginal independence tests identify the faithful DAG (with $Y$ as a single variable). But the collider structure $X_1\to Y\leftarrow X_2$ implies $X_1\not\ind X_2 \mid Y$ for all \emph{faithful} distributions. This does not hold for every distribution. On the contrary, given the marginal tests we have, e.g., $X_1\not \ind Y\mid X_2$ for all distributions.}
	\label{fig:three_v_structures}
\end{figure}
\end{example}

\section{REDUNDANCY OF CIs}
\paragraph{Notation}
We will now introduce some notation and basic concepts.
For more detailed definitions, we refer the reader to \cref{sec:further_definitions}.
We denote a random variable with an upper case letter $X$. 
A set of random variables is denoted with boldface letter $\bX$.
Let $\bV$ be a finite set of variables.
An independence model $M$ over $\bV$ is a set of triplets $(\bX, \bY, \bZ)$, where $\bX, \bY, \bZ\subseteq \bV$, $\bX\neq\emptyset \neq\bY$ and $\bX, \bY, \bZ$ are disjoint.
\dominik{a model does not come with indep/dep information - it is just a triple?} \philipp{The triplets define the independence and dependence is defined implicitly as the complement.}
We say $\bX$ is \emph{independent} from $\bY$ given $\bZ$ and write $\bX\ind_{\mspace{-8mu} M}\bY\mid \bZ$, where we often omit the subscript.
If $(\bX, \bY, \bZ) \not\in M$ we say they are dependent and write  $\bX\not\ind_{\mspace{-8mu} M}\bY\mid \bZ$.
A \emph{CI-statement} is a quadruple of $\bX, \bY, \bZ$ and a boolean value, indicating whether the independence holds.
For a set of CI-statements $L$ we slightly abuse notation and write $L\subseteq M$ if for $(\bX, \bY, \bZ, b)\in L$ we have $b=((\bX,\bY,\bZ)\in M$).
We also sometimes write a CI-statement as function ${\CI: \bX, \bY, \bZ\mapsto (\bX, \bY, \bZ, b)}$. % or $\bX \ind \bY\mid \bZ$ if $(\bX, \bY, \bZ)\in M$ and $\bX \not\ind \bY\mid \bZ$ if $(\bX, \bY, \bZ)\not\in M$.
Note that with \emph{in}dependences we refer to statements of the form $\bX\ind \bY\mid \bZ$ and with dependences to $\bX\not\ind \bY\mid \bZ$, while with CI-statement we refer to both of them.
A probability distribution over $\bV$ induces an independence model via probabilistic conditional independence.
Graphical models can represent independence models, and we denote a model induced by a graph $G$ as $M_G$.
For an undirected graph $G$ we define $(\bX, \bY, \bZ) \in M_G$ iff $\bX$ is separated from $\bY$ given $\bZ$.
For DAGs $(\bX, \bY, \bZ) \in M_G$ iff $\bX$ is $d$-separated from $\bY$ given $\bZ$.
In both cases, we also write $\bX\perp_G \bY\mid \bZ$.
By \emph{graphical model} we refer to \emph{either} an undirected graph or a DAG (and its respective independence model).\footnote{%We restrict our attention to undirected graphs and DAGs. But 
Our insights can be applied to any model with a notion of independence between nodes, such as chain graphs \\ \citep{lauritzen1996graphical}, completed partial DAGs, maximal ancestral graphs, partial ancestral graphs \citep{spirtes2000causation}, or acyclic directed mixed graphs \citep{richardson2003markov}.} \todo{remove line break}
A graph $G$ is \emph{Markovian} to an independence model $M$ if $(\bX, \bY, \bZ) \in M_G \implies(\bX, \bY, \bZ) \in M$. It is \emph{faithful} if $(\bX, \bY, \bZ) \in M \implies (\bX, \bY, \bZ) \in M_G$.
Again, we abuse notation and say $G$ is Markovian to a set of CI-statements $L$ when all \emph{in}dependences in $G$ are contained and true in $L$.
As shorthand, we use 
%\begin{displaymath}
 $   \CI(\bV) := \{(X, Y, \bZ) :\: X, Y\in \bV,\, \bZ\subseteq \bV\setminus\{X, Y\},\, X\neq Y\}$ for triplets of nodes that often occur in our CI-statements.
%\end{displaymath}
% we define the \emph{Markov distance} with respect to $S\subseteq \CI(V)$ via
%\begin{displaymath}
%.
%\end{displaymath}
%We omit the subscript if $S=\CI(\bV)$.
%In this case, \citet{wahl2024metrics} call this \emph{s/c-metric} and show that it is a proper metric on Markov equivalence classes.
%We extend the definition to graphs by considering their induced independence model.

%Maybe present Shannon as lower bound for amount of redundancy if additional tests are supposed to help.
%Then say that tests can be dependent, which is why we look at graphoids next.

\subsection{Graphoids, graphs and redundancy}
The central observation of this paper is that a graphical model usually entails more CI-statements than the ones necessary to identify its Markov equivalence class, as in  \cref{ex:motivation}.
In other words, the space of independence models entailed by a graphical model is typically smaller than the space of possible independence models.
The additional tests are our candidates to be used for error detection and correction, which motivates the following definition.

\begin{definition}[GM-redundancy]
	\label{def:graphical_redundancy}
	Let $L$ be a set of CI-statements  and $s\not\in L$ be another CI-statement.
	Let $\cG$ be a set of graphical models. %and $\cM_\cG$ the set of respective independence models.
	We call $s$ \emph{graphical-model-redundant} (GM-redundant) w.r.t. $\cG$ and $L$ if $\{s\}\subseteq M_G$ whenever $L\subseteq M_G$ for any graph $G\in \cG$.
	%If $\cG$ is the set of DAGs we call $s$ \emph{DAG-redundant.} 
\end{definition}
\todo{In camera-ready write out wrt.}
Note that this definition is with respect to a set of (previous) CI-tests \emph{and} a class of graphical models.
The former depends on the specific algorithm used for discovery, while the latter depends on the assumptions that we make about the data. \dominik{these remarks before and after are a bit confusing, I would be more explicit by saying that ${\cal G}$ is the set of graphs consistent with the tests made, subject to faithfulness.}
In \cref{ex:motivation}, both $X_1\not\ind X_2\mid Y$ and $X_1\not\ind Y\mid X_2$ are GM-redundant with respect to the marginal independence tests, as they hold in every DAG that represents the given marginal CI-statements faithfully.

\label{subsec:redundancy}
%Show Graphoid axioms, say they are not complete, show how Graphical models reduce space of possible independences\\

In coding theory, one can often assume that bit errors occur independently.
But it is well-known that the output of CI-tests on the same dataset can be dependent.
In particular, there are cases where some (in)dependence statements follow from a set of (in)dependence statements for \emph{all} probability distributions.
\citet{pearl2022graphoids} have provided a sound set of logical rules to derive independences: the (semi) Graphoid axioms.
\begin{definition}[Graphoid Axioms]
	\label{def:graphoid}
	Let $M$ be an independence model over variables $\bV$ and $\bX, \bY, \bZ, \bW\subseteq \bV$ be disjoint with $\bX\neq\emptyset\neq \bY$. We call $M$ a \emph{semi-Graphoid} if the following properties hold
	\begin{enumerate}
		\item Symmetry: $\bX\ind \bY\mid \bZ \iff \bY\ind \bX\mid  \bZ$
		\item Decomposition: $\bX\ind  \bY\cup \bW \mid \bZ \\\implies \bX \ind \bY \mid\bZ \ \land\  \bX \ind \bW \mid \bZ$
		\item Weak Union: $\bX\ind \bY\cup \bW \mid \bZ  \\\implies \bX\ind  \bY \mid\bZ\cup \bW\ \land\  \bX\ind\bW \mid \bZ\cup \bY$
		\item Contraction: $\bX\ind\bY \mid \bZ \ \land\  \bX \ind\bW \mid \bZ\cup \bY \\\implies \bX\ind\bY\cup \bW \mid \bZ $
	\end{enumerate}
	$M$ is called \emph{Graphoid} if we further have
	\begin{enumerate}[resume]
		\item Intersection: $\bX\ind\bY \mid \bZ\cup \bW \ \land\  \bX\ind\bW\mid \bZ\cup \bY \\\implies \bX\ind\bY\cup \bW\mid \bZ$.
	\end{enumerate}
\end{definition}

The independence model of a distribution is a semi-Graphoid, and it is a Graphoid  if the distribution is positive \citep{lauritzen1996graphical}.
Although these rules are sound, they are not complete, and there cannot be a finite, sound, and complete set of axioms to describe conditional independence in probability distributions \citep{studeny1992conditional}.
Since the semi-Graphoid rules hold for every distribution, they also hold for every empirical distribution.
Moreover, the following proposition shows that for conditional mutual information (which is zero if and only if \emph{in}dependence holds),  the Graphoid axioms have a certain continuity property.

\begin{figure}[t]
    \centering
    \begin{tikzpicture}
        % Outer set (Universal Set or a larger subset)
       % \node[draw, thick, minimum width=\columnwidth, minimum height=5\baselineskip] (rect) {};
      
        \begin{scope}
            \node[ellipse, draw, thick, fill=tabblue!30, minimum width=0.95\columnwidth, minimum height=6\baselineskip] (graphically) {};
        \end{scope}

        % Inner set (Subset of the outer set)
        \begin{scope}[shift={(0,-0.1\baselineskip)}]
            \node[ellipse, draw, thick, fill=taborange!30, minimum width=0.9\columnwidth, minimum height=4\baselineskip] (probabilistic) {};
        \end{scope}
        
        % Inner set (Subset of the outer set)
        \begin{scope}[shift={(0,-0.2\baselineskip)}]
            \node[ellipse, draw, thick, fill=tabgreen!30, minimum width=0.85\columnwidth, minimum height=2\baselineskip] (graphoid) {};
        \end{scope}

        % Labels
        \node at (graphoid) {GA-Redundant};
        \node[below] at (graphically.north) {GM-Redundant};
        \node[below] at (probabilistic.north) {P-Redundant};
        % Label inside bottom-left corner of rectangle
%\node[anchor=south east, inner sep=2pt] at (rect.south east) {CI-statements};
    \end{tikzpicture}
    \caption{Hierarchy of \cref{def:graphical_redundancy,def:graphoid_redundancy,def:prob_redundant}. We argue to use GM- but not P-redundant CI-statements.}
\end{figure}

\begin{proposition}[continuity of a Graphoid]
    \label{prop:continuity_mi}
    Let $\bX, \bY, \bZ, \bW\subseteq \bV$ be disjoint and $\epsilon, \delta > 0$.
    Then
    \begin{enumerate}
		\item $I(\bX:\bY\mid \bZ) \le \epsilon \iff I(\bY:\bX\mid \bZ)\le \epsilon$
		\item $I(\bX:\bY\cup \bW\mid \bZ) \le \epsilon \\\implies  I(\bX:\bY\mid \bZ) \le \epsilon \ \land\  I(\bX:\bW\mid  \bZ) \le \epsilon$
		\item $I(\bX:\bY\cup \bW\mid \bZ) \le \epsilon \\ \implies I(\bX:\bY\mid \bZ\cup \bW) \le \epsilon \ \land\  I(\bX:\bW\mid \bZ\cup \bY) \le \epsilon$
		\item $I(\bX:\bY\mid \bZ) \le \epsilon \ \land\  I(\bX:\bW\mid \bZ\cup \bY) \le \epsilon \\ \implies I(\bX:\bY\cup \bW\mid \bZ) \le 2\epsilon$
	\end{enumerate} 
    If we further assume $I(\bW:\bY \mid \bZ) \le \delta$ we get
	\begin{enumerate}[resume]
		\item $I(\bX:\bY \mid \bZ\cup \bW) \le \epsilon \land I(\bX:\bW \mid \bZ\cup \bY) \le \epsilon \\\implies I(\bX : \bY\cup \bW \mid \bZ) \le \delta + 2\epsilon.$
	\end{enumerate}
    %where $I(\cdot, \cdot\mid \cdot)$ denotes the conditional mutual information \citep{mackay2003information}.
\end{proposition}
All proofs are in \cref{sec:proofs}. 
This means that even if some CI-statements hold only approximately, they are likely to influence other test results according to the Graphoid axioms.
%In the analogy to the noisy-channel model from \cref{subsec:noisy_channel} 
In other words, even a very weak dependence that is not detected by a test still influences other tests almost as if there were no dependence.
Accordingly, the influenced tests contain less (or no) redundant information in the sense that we are interested in.
Therefore, we want to differentiate between tests that are implied by the graph alone and tests that already follow for all probability distributions.

\begin{definition}[P-redundancy]
	\label{def:prob_redundant}
	Let $L$ be a set of CI-statements  and $s\not\in L$ be another CI-statement.
	We call $s$ \emph{probabilistically redundant} (P-redundant) with respect to $L$ if $\{s\}\subseteq M$ for any independence model induced by a probability distribution with $L\subseteq M$. 
\end{definition}
The CI-statements we are interested in are the ones that are not P-redundant. Yet, it is hard to operationalise this definition.
Although \citet{niepert2009logical} shows that the problem of whether a CI-statement follows from a given set of other CI-statements is decidable for variables with finite support, %their proposed method is not efficient. Further, 
to the best of our knowledge, there are no results on decidability for continuous variables.
To render the problem decidable in any case, we will restrict ourselves to CI-statements that follow via the Graphoid axioms.
\begin{definition}[GA-redundancy]
	\label{def:graphoid_redundancy}
	Let $L$ be a set of CI-statements  and $s\not\in L$ be another CI-statement.
	We call $s$ \emph{Graphoid-axiom-redundant} (GA-redundant) with respect to $L$ if $\{s\}\subseteq M$ for any Graphoid independence model with $L\subseteq M$. 
\end{definition}

Since the Graphoid axioms are sound, GA-redundancy\footnote{For simplicity, we do not further distinguish between semi-Graphoid-axiom-redundant and Graphoid-axiom-redundant CI-statements.} is a sufficient condition for a CI-statement to also be P-redundant.
In \cref{ex:prob_but_not_graphoid} in \cref{sec:additional_examples}, we show CI-statements that are P-redundant but not GA-redundant.
%For a set $L$ of CI-statements we denote with $C(L)$ the \emph{Graphoid-closure} of $L$, i.e. the set of all CI-statements that can be derived from $L$ by finitely many applications of the rules in \cref{def:graphoid}.
%Clearly, conducting additional, Graphoid-redundant tests does not give us additional evidence for or against a certain graphical model given a known set $L$, as the result is already implied (with high probability) by $L$. 
%And as every non-negative distribution is a Graphoid, 

The following definition captures the CI-statements that are not already implied by the Graphoid axioms but follow solely from graphical assumptions.
\begin{definition}[PGM-redundancy]
	\label{def:purely_graphical_red}
	Let $L$ be a set of CI-statements  and $s\not\in L$ be another CI-statement.
	We call $s$ \emph{purely graphical-model-redundant} (PGM-redundant) with respect to $L$ if $s$ is GM-redundant but not GA-redundant. 
\end{definition}
\dominik{one could argue against this definition because a P-redundant but not GA-redundant CI is not 'purely' graphically redundant.}
%Note, that every Graphoid-redundant test is also graphically redundant (for the graphical model we consider).
In \cref{ex:motivation}, only $X_1\not\ind X_2\mid Y$ is PGM-redundant with respect to the marginal CI-statements. 

\subsection{Graphical Criteria for Redundancy}
These definitions beg the question of whether there is a criterion to find the PGM-redundant CI-statements.
We will first present results showing which CI-statements cannot be PGM-redundant, and then see a sufficient graphical criterion that covers a broad class of examples where they are.
In particular, \cref{prop:markov_network_markovian,prop:protocol_graph_markovian} show cases where all \emph{in}dependences are GA-redundant.
%\todo{Explicitly say that CI-statemnt is either ind or dep and independence and dependence is explicit.}
\begin{corollary}[\citet{verma1990causal} Thm. 2]
	\label{prop:protocol_graph_markovian}
	Let $M$ be a Graphoid independence model 
    over a set of nodes $\bV$, and $\pi:\bV\to \N$ be an ordering of $\bV$.
	Let $L_\pi$ be a list of CI-statements from $M$ such that  for $X, Y, \bZ \in \CI(\bV)$ we have $\CI(X, Y\mid \bZ) \in L_\pi$ iff $\pi(X) < \pi(Y)$ and
	\begin{displaymath}
		\bZ = \{Z\in\bV :\: \pi(X) \neq \pi(Z) < \pi(Y)\}. %\todo{Insert space for readability}
	\end{displaymath}
	Let $G(\pi)$ be the DAG over $\bV$ with an edge from $X\in\bV$ to $Y\in \bV$ iff $(X\not\ind Y \mid \bZ) \in L_\pi$ for some $\bZ\subseteq \bV$.
	%Then $G_\pi$ is Markovian to $C(L_\pi)$.
    Then every \emph{in}dependence in $M_{G(\pi)}$ is GA-redundant with respect to $L_\pi$.
\end{corollary}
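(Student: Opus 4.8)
This corollary restates \citet{verma1990causal}'s theorem that the Graphoid closure of a causal input list coincides with the $d$-separation statements of the DAG it induces, so the plan is to recall that argument in the present notation. Write $P_Y := \{Z \in \bV : \pi(Z) < \pi(Y)\}$ for the set of $\pi$-predecessors of a node $Y$, and let $\pa(Y)$ denote the parents of $Y$ in $G(\pi)$. Unpacking the construction: for each ordered pair $L_\pi$ contains exactly the entry $(X, Y, P_Y \setminus \{X\})$, so by definition of $G(\pi)$ a predecessor $X$ of $Y$ is a parent iff that entry is a dependence; equivalently, the \emph{in}dependences in $L_\pi$ are precisely $X \ind Y \mid P_Y \setminus \{X\}$ for $X \in P_Y \setminus \pa(Y)$. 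Thus what must be shown is that every $d$-separation statement $\bX \perp_{G(\pi)} \bY \mid \bZ$ holds in every Graphoid model $M'$ with $L_\pi \subseteq M'$, i.e.\ is derivable from $L_\pi$ via the Graphoid axioms of \cref{def:graphoid} (for a statement already lying in $L_\pi$ there is nothing to show, and for the remaining ones this is exactly Graphoid-redundancy in the sense of \cref{def:graphoid_redundancy}).

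\textbf{Step 1: the ordered local Markov statements.}
First I would derive, for each $Y$, the block statement $Y \ind (P_Y \setminus \pa(Y)) \mid \pa(Y)$ from $L_\pi$. Enumerate the non-parent predecessors of $Y$ as $X_1, \dots, X_m$. By induction on $j$ one proves $Y \ind \{X_1, \dots, X_j\} \mid \pa(Y) \cup \{X_{j+1}, \dots, X_m\}$: the inductive step combines the hypothesis with the list \emph{in}dependence $Y \ind X_{j+1} \mid P_Y \setminus \{X_{j+1}\}$ (rewritten with Symmetry; its conditioning set is $\pa(Y) \cup \{X_1,\dots,X_j\} \cup \{X_{j+2},\dots,X_m\}$) through the Intersection axiom. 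Taking $j = m$ yields the claim. This is the one place where positivity of $M$ is used --- for a mere semi-Graphoid a pairwise list does not entail the corresponding block statements --- which is why the hypothesis assumes $M$ is a Graphoid rather than only a semi-Graphoid.

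\textbf{Step 2: from the ordered local basis to every $d$-separation.}
It remains to derive an arbitrary $\bX \perp_{G(\pi)} \bY \mid \bZ$ from the Step-1 statements using only the semi-Graphoid axioms; this is the classical fact that $d$-separation is the (semi-)Graphoid closure of the ordered local Markov basis. I would argue by induction on $|\bV|$, deleting the $\pi$-maximal node $W$: since every edge of $G(\pi)$ points from a smaller to a larger $\pi$-value, $W$ is a sink, $G(\pi) - W = G(\pi|_{\bV \setminus \{W\}})$, and the causal input list of the latter is obtained from $L_\pi$ by discarding the entries whose second coordinate is $W$, so the induction hypothesis applies to $G(\pi) - W$. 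If $W \notin \bX \cup \bY \cup \bZ$, then every path of $G(\pi) - W$ is a (blocked) path of $G(\pi)$, hence $\bX \perp_{G(\pi) - W} \bY \mid \bZ$ and we are done by the induction hypothesis. If $W \in \bX \cup \bY \cup \bZ$, one combines the Step-1 statement at $W$, namely $W \ind (\bV \setminus (\{W\} \cup \pa(W))) \mid \pa(W)$, with a separation statement about $G(\pi) - W$ supplied by the induction hypothesis, via Decomposition, Weak Union and Contraction, in a case split on whether $W$ lies in $\bX$, in $\bY$ or in $\bZ$.

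\textbf{Main obstacle.}
The hard part is the bookkeeping of this last case, in particular $W \in \bZ$: one must pick the right statement about $G(\pi) - W$ to feed into the induction hypothesis --- conditioning on $(\bZ \setminus \{W\})$ enlarged by $\pa(W)$, after verifying that the corresponding separation actually holds in $G(\pi) - W$ --- and then re-assemble $\bZ$ around $W$ without leaving the reach of the semi-Graphoid axioms. This step parallels the standard proof that a recursive factorization implies the global directed Markov property, and I would follow that template; carrying it out also confirms that nothing beyond the semi-Graphoid axioms is used in Step 2, so the full derivation uses exactly the Graphoid axioms.
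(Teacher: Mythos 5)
Your proposal is correct and, in its essential structure, matches the paper's argument. Your Step 1 is exactly the paper's \cref{lem:algo_give_causal_input_list}: the pairwise list entries $Y\ind X\mid P_Y\setminus\{X\}$ are merged into the ordered local Markov statements $Y\ind P_Y\setminus\pa(Y)\mid\pa(Y)$ by iterated application of the Intersection axiom, and you correctly identify this as the one place where the Graphoid (rather than semi-Graphoid) hypothesis is needed. The only divergence is in Step 2: the paper does not re-derive the passage from the ordered local basis to all $d$-separations, but simply invokes Theorem 2 and Corollary 2 of \citet{verma1990causal} (the statement is explicitly labelled as a corollary of that theorem), concluding that $G(\pi)$ is a (minimal) I-map of any Graphoid model containing $L_\pi$ and hence that every independence in $M_{G(\pi)}$ is Graphoid-redundant. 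You instead sketch a proof of that classical soundness result by induction on $|\bV|$, deleting the $\pi$-maximal sink; the outline is the standard one, but as you acknowledge, the $W\in\bZ$ case is only gestured at, so if you wanted a self-contained argument you would still have to carry out that bookkeeping --- the paper's citation is precisely what spares it this work, and your version buys independence from the reference at the cost of reproducing a nontrivial classical proof. One small remark in your favour: you prove derivability of the $d$-separations from $L_\pi$ via the axioms, which is (by soundness of the axioms) sufficient for, and formally a bit stronger than, the definition of Graphoid-redundancy used in the paper.
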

%\todo{Write sentence on relevance of theorem.}
\begin{corollary}[\citet{geiger1993logical} Thm. 12]
	\label{prop:markov_network_markovian}
	Let $M$ be a Graphoid independence model 
    over a set of nodes $\bV$.
	Let $L$ be the set of CI-statements
	\begin{displaymath}
		L = \{\CI(X, Y\,|\, \bV\setminus\{X, Y\}) :\: X, Y\in \bV,\, X\neq Y\}
    \end{displaymath}
    from $M$.
	Let $G(L)$ be the undirected graph over $\bV$ with an edge between $X, Y\in \bV$ iff $(X\not\ind Y \mid \bV\setminus\{X, Y\}) \in L$.
	Then every \emph{in}dependence in $M_{G(L)}$ is GA-redundant with respect to $L$.
\end{corollary}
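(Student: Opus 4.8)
The plan is to recognise that $L$ is nothing but the set of \emph{pairwise Markov} statements of the graph $G(L)$: by construction $(X,Y,\bV\setminus\{X,Y\})$ is recorded as an independence in $L$ exactly when $X$ and $Y$ are non-adjacent in $G(L)$. Unfolding \cref{def:graphoid_redundancy}, the claim ``every independence of $M_{G(L)}$ is Graphoid-redundant w.r.t.\ $L$'' is therefore precisely the assertion that the pairwise Markov property of $G(L)$ entails its global Markov property \emph{through the Graphoid axioms alone}. This is the classical equivalence of Markov properties (cf.\ \citet{geiger1993logical} Thm.~12 and \citet{lauritzen1996graphical} Thm.~3.7), so the real work is to reproduce that derivation and verify that every inference step is one of the Graphoid rules of \cref{def:graphoid}; in fact only symmetry, decomposition, and intersection will be needed.

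First I would derive the \emph{local} Markov property from $L$: for every node $v$, writing $\mathrm{ne}(v)$ for its neighbourhood in $G(L)$ and $\mathrm{cl}(v)=\{v\}\cup\mathrm{ne}(v)$, the independence $v\ind\bV\setminus\mathrm{cl}(v)\mid\mathrm{ne}(v)$ should follow from $L$ by Graphoid reasoning. I would prove, by induction on $j$, the stronger claim that $v\ind S\mid\bV\setminus(\{v\}\cup S)$ for every $S\subseteq\bV\setminus\mathrm{cl}(v)$ with $|S|\le j$. The base case $j=1$ is literally a member of $L$ (a non-adjacent pair). For the step, split $S=S'\cup\{w\}$; apply the induction hypothesis to $S'$, rewriting its conditioning set as $(\bV\setminus(\{v\}\cup S))\cup\{w\}$, and apply the pairwise statement $v\ind w\mid\bV\setminus\{v,w\}$, rewriting its conditioning set as $(\bV\setminus(\{v\}\cup S))\cup S'$; the intersection axiom then yields $v\ind S'\cup\{w\}\mid\bV\setminus(\{v\}\cup S)$. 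Taking $S=\bV\setminus\mathrm{cl}(v)$ gives the local property, the only operations used being set-arithmetic rewrites and intersection.

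Next I would derive the \emph{global} property: for disjoint non-empty $\bX,\bY$ and a set $\bZ$ separating them in $G(L)$, the statement $\bX\ind\bY\mid\bZ$ follows from $L$. This goes by induction on $|\bV\setminus\bZ|$. When $|\bV\setminus\bZ|=2$ both sides are singletons and separation just says they are non-adjacent, so the statement lies in $L$. For the inductive step there are two cases. If $\bX\cup\bY\cup\bZ=\bV$, at least one side, say $\bX$, has two or more elements; fix $\alpha\in\bX$, observe that $\bZ\cup\{\alpha\}$ separates $\bX\setminus\{\alpha\}$ from $\bY$ while $\bZ\cup(\bX\setminus\{\alpha\})$ separates $\{\alpha\}$ from $\bY$, apply the induction hypothesis to both (each has strictly smaller $|\bV\setminus(\cdot)|$), and recombine the two independences with the intersection axiom (and symmetry) to obtain $\bX\ind\bY\mid\bZ$. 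If $\bX\cup\bY\cup\bZ\neq\bV$, pick $\gamma$ outside it; then $\bZ\cup\{\gamma\}$ still separates $\bX$ from $\bY$, and since the connected component of $\gamma$ in $G(L)\setminus\bZ$ cannot meet both $\bX$ and $\bY$ — say it misses $\bX$ — the set $\bZ\cup\bY$ separates $\bX$ from $\gamma$; the induction hypothesis applies to both, intersection gives $\bX\ind\bY\cup\{\gamma\}\mid\bZ$, and decomposition yields $\bX\ind\bY\mid\bZ$. Since every Graphoid model $M'$ with $L\subseteq M'$ is closed under the rules used, the derived independence $(\bX,\bY,\bZ)$ is contained in $M'$; this is exactly Graphoid-redundancy w.r.t.\ $L$.

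The step I expect to cost the most care is not any single inference but the bookkeeping inside the two inductions: arranging the separating/conditioning sets into the exact shape required so that the intersection axiom applies verbatim, and justifying the graph-theoretic facts in the global step (that adding vertices to a separator keeps it separating, and that $\gamma$'s component in $G(L)\setminus\bZ$ is disjoint from at least one of $\bX,\bY$). Two further remarks: the hypothesis that $M$ is a Graphoid is what makes $L$ a consistent record and, more to the point, guarantees that the ambient models $M'$ we quantify over satisfy the \emph{intersection} axiom, which is exactly the rule — absent from the semi-Graphoid list — that drives both inductions; and if $(\bX,\bY,\bZ)$ happens already to lie in $L$ the claim is trivial, so we may assume $s\notin L$ as in \cref{def:graphoid_redundancy}.
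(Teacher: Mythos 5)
Your proof is correct, but it takes a more self-contained route than the paper. The paper's own proof is a one-liner: it applies Theorem~12 of \citet{geiger1993logical} directly, concluding that $G(L)$ is an I-map of any Graphoid containing the pairwise statements $L$, hence every separation in $G(L)$ is Graphoid-redundant. You instead reconstruct the content of that theorem from scratch: you recognise $L$ as the pairwise Markov basis of $G(L)$ and reproduce the classical Pearl--Paz/Lauritzen induction showing that pairwise plus the intersection axiom yields the global Markov property, checking that every inference is one of the Graphoid rules. The bookkeeping in both inductions is right (the rewrites of conditioning sets match the intersection axiom verbatim, the graph-theoretic facts about enlarging separators and about $\gamma$'s component hold), and your closing observation is a genuine plus: Graphoid-redundancy quantifies over \emph{every} Graphoid model $M'$ with $L\subseteq M'$, and your derivation makes explicit that only Graphoid closure is used, a point the paper's citation-based proof leaves implicit. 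Two minor remarks: your pairwise-to-local step is never actually used in the global induction (the base case and the recursion work directly from the pairwise statements), so it could be dropped; and in the case analysis for $\gamma$ you only treat the subcase where its component misses $\bX$, which is fine since the other subcase is symmetric, but it deserves a sentence.
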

%Further, a similar result for chain graphs has been shown by \citet{studeny1998chain}. 
%So for the procedures described in these theorems no \emph{in}dependence can be purely graphically redundant.

No graph discovery algorithm can rely on the Markov assumption alone.
Most of them also use the faithfulness assumption. 
The latter is especially troublesome, as there are many applications where it may be violated, as we have mentioned before.
\dominik{do we have references there?}
But in the cases above, none of the \emph{in}dependence statements can be used as additional redundancy.
So here, it is only due to faithfulness (and similar assumptions) that we can have PGM-redundancy at all.
But also, not all of the dependences are good candidates for error detection and correction.
As \citet{bouckaert1995bayesian} shows, there are also dependences that follow from the Graphoid axioms as their contrapositives.
%See \cref{def:coupling} in \cref{sec:further_definitions} for a definition of \emph{coupling}.
In \cref{prop:coupling} in \cref{sec:graphical_crit_dependences}, we show how these insights can be applied in the situations of \cref{prop:markov_network_markovian,prop:protocol_graph_markovian}.
%\citet{bouckaert1995bayesian} shows that this graphical criterion gives dependences that follow for all Graphoids. %independence models.

%This holds especially in the graphs from \cref{prop:markov_network_markovian,prop:protocol_graph_markovian}.
On the other hand, we will now give a criterion under which a  dependence statement is guaranteed to be PGM-redundant.
So whenever a graphical model implies such a dependence,  it is a good candidate for additional redundancy.
%In \cref{cor:s_coupling_ness_and_suff} in \cref{sec:additional_examples}, we further show that this criterion is sufficient \emph{and} necessary in the scenario of \cref{prop:markov_network_markovian,prop:protocol_graph_markovian}.

\begin{figure*}[t]
    \centering
    \begin{tikzpicture}
        \node[obs] (A) {$A$} ;
        \node[obs, right=of A] (X) {$X$} edge[-] (A) ;
        \node[obs, right=of X] (V) {$Z$} edge[-] (X) ;
        \node[obs, right=of V] (Y) {$Y$} edge[-] (V) ;
        \node[obs, right=of Y] (B) {$B$} edge[-] (Y) ;
        \draw (X) edge[-, bend left=20] (Y); 
        
    \end{tikzpicture}
    \caption{$A$ and $B$ are \emph{coupled over} $(X, Y, \emptyset)$ given $\emptyset$ but are not coupled over $(X, Y, Z)$ given $\emptyset$, since the path $X-Z-Y$ stays active given $\emptyset$ if we \enquote{deactivate} all paths between $X$ and $Y$ that are active given $Z$.}
    \label{fig:s_coupling_example}
\end{figure*}

%First, we introduce some notation.
\begin{definition}[coupling over nodes]
\label{def:s_coupling}
    Let  $(X, Y, \bZ)$, $(A, B, \bC) \in \CI(\bV)$, $G$ be a graphical model over $\bV$, and $s:=(X, Y, \bZ)$.
We say a path $(A, \dots, B)$ is \emph{$s$-active} given $\bC$ if 
\begin{enumerate}
    \item 
    it is active given $\bC$ (according to the respective graphical separation) and
    \item  
    there is no sub-path $(X, \dots, Y)$ that is active given $\bZ$.
\end{enumerate}
Further, we say $A$ and $B$ are \emph{coupled over} $s$ given $\bC$ iff there is at least one active but no $s$-active path
between $A$ and $B$ given $\bC$.
\end{definition}
Intuitively, an $s$-active path stays active if we \enquote{deactivate} all paths between $X$ and $Y$, and $A, B$ being coupled over $s$ means that all paths between $A$ and $B$ are \enquote{mediated} by $X$ and $Y$.
See \cref{fig:s_coupling_example} for an example.
\begin{proposition}[sufficient criterion] %purely graphical redundancy]
	\label{prop:sufficient_cond_non_graphoid}
	Let $M$ be the independence model of a positive distribution, $L$ be a set of CI-statements from $M$ and $s:=(X, Y, \bZ) \in \CI(\bV)$ with $(X\not\ind Y\mid \bZ)\not\in L$.
	Let $G$ be a graphical model %over $\bV$ 
    such that $G$ is Markovian to $L$ and $X\not\perp_{G} Y\mid \bZ$.
	If there is no $(A \not\ind B\mid \bC) \in L$ such that $A$ and $B$ are coupled over $s$ given $\bC$, then $X\not\ind Y\mid \bZ$ is PGM-redundant given $L$.
\end{proposition}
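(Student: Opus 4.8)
The statement asserts two things about $s:=(X\not\ind Y\mid\bZ)$: that it is graphically redundant, and that it is \emph{not} Graphoid-redundant, with respect to $L$. The first is immediate from $X\not\perp_G Y\mid\bZ$, which says $\{s\}\subseteq M_G$, so \cref{def:graphical_redundancy} holds for the class of graphical models in question. All the work is in the second claim: I need to exhibit a single (semi-)Graphoid independence model $M'$ with $L\subseteq M'$ in which $X\ind_{M'} Y\mid\bZ$; such an $M'$ witnesses that $s$ is not forced by $L$ in every Graphoid, which is the negation of Graphoid-redundancy. Since the independence model of any probability distribution is a semi-Graphoid, and the model $M_H$ of any graph $H$ is a Graphoid, it suffices to build $M'$ as $M_H$ (or as the model of a distribution) by a controlled modification of $G$; the distribution $M$ in the hypothesis enters only to guarantee that $L$ is consistent, so that $G$ and the modified models exist.

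The plan for building $M'$ is to modify $G$ so that $X$ becomes separated from $Y$ given $\bZ$ while every $s$-active path stays active -- exactly the behaviour of the intuition following \cref{def:s_coupling}. Concretely, I would cut the ``bottleneck'' through which every path between $X$ and $Y$ that is active given $\bZ$ must pass -- an edge cut inside the induced subgraph $G[\bV\setminus\bZ]$ in the undirected case -- to obtain $M'$, and then verify two things. First, $X$ is separated from $Y$ given $\bZ$ in $M'$, so $X\ind_{M'} Y\mid\bZ$. Second, $L\subseteq M'$: the modification only weakens connectivity, so every independence of $G$ -- hence, since $G$ is Markovian to $L$, every independence recorded in $L$ -- still holds in $M'$; and for every recorded dependence $(A\not\ind B\mid\bC)\in L$, the hypothesis that $A$ and $B$ are not coupled over $s$ given $\bC$, together with $A\not\perp_G B\mid\bC$ (which follows from $G$ being Markovian to $L$), yields an $s$-active $A$--$B$ path given $\bC$; by \cref{def:s_coupling} such a path contains no $X$--$Y$ sub-path active given $\bZ$, so it is untouched by the cut and remains active, giving $A\not\perp_{M'} B\mid\bC$. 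Hence $L\subseteq M'$ and $X\ind_{M'} Y\mid\bZ$, and with the first claim $s$ is purely graphically redundant.

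The main obstacle is turning ``cut the bottleneck while sparing every $s$-active path'' into a concrete operation for which these two checks really go through; the definition of $s$-active is reverse-engineered for this, so the difficulty is to match it exactly. In the undirected case the cut must be chosen to avoid every $s$-active path, not merely to separate $X$ from $Y$ given $\bZ$, which is possible but needs an argument. The directed case is harder: deleting edges from a DAG is non-monotone for $d$-separation -- removing the only descendant in $\bZ$ of a collider can \emph{activate} a previously blocked path -- and the clause ``no sub-path $(X,\dots,Y)$ active given $\bZ$'' must be read carefully, because the collider status of the sub-path's endpoints $X$ and $Y$ differs inside the sub-path and inside the host path. The fix I would pursue is to split the minimal vertex bottleneck of the active $X$--$Y$ paths into an ``$X$-side'' and a ``$Y$-side'' copy of each of its nodes -- carried out at the level of a distribution, using factored value spaces so that the node set $\bV$ is unchanged, exactly as in the footnote of \cref{ex:motivation} -- an operation that should preserve precisely the connections not routed through the bottleneck, i.e.\ the $s$-active ones, while destroying the $X$--$Y$ connection given $\bZ$. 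Establishing this equivalence -- a connection survives the split iff it is $s$-active -- together with the separation it creates, is the technical heart of the argument.
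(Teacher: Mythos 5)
Your plan coincides with the paper's own proof: there, non-Graphoid-redundancy is likewise shown by a counterexample distribution obtained by duplicating every node on the active $X$--$Y$ paths given $\bZ$ (rather than only a minimal bottleneck) into an $X$-side and a $Y$-side copy, copying the original Markov kernels, and re-bundling the copies into vector-valued variables as in the footnote of \cref{ex:motivation}, so that $X\ind Y\mid \bZ$ holds while every dependence $(A\not\ind B\mid\bC)\in L$ survives along the $s$-active path guaranteed by the non-coupling hypothesis, and the graphical-redundancy half comes from $X\not\perp_G Y\mid\bZ$ just as you say. The step you defer as the ``technical heart'' is exactly what the paper's explicit construction of $G'$, $P'$ and $P''$ carries out, at roughly the level of detail you sketch.
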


Note that the PC algorithm is not guaranteed to return a graph that is Markovian to the conducted tests if its assumptions are violated (see \cref{ex:pc_not_markov} in \cref{sec:additional_examples}) and therefore \cref{prop:sufficient_cond_non_graphoid} cannot always be applied there.

\dominik{Isn't this a crucial result? It is presented as a minor one.}

\todo{Remove? 
\begin{remark}
Given that the PGM-redundant tests depend on the ground truth graphical model, one might wonder whether one can define a \emph{condition number} of a graph in analogy to the condition number of a matrix, where the numerical stability of matrix inversion depends on the matrix itself. But clearly here things are even more involved as also the given CI-statements (i.e. the discovery algorithm that already conducted tests) is relevant.
\end{remark}}

\todo{Necessary?}
%In the following we will apply these results in order to \emph{detect} and \emph{correct} errors, in analogy to coding theory.
\todo{Put back in bridge sentence?}
%For each of these tasks, we highlight examples for two cases of graphical model assumptions. 
%The former is primarily an illustrative example, although the results might be of interest on their own.
\section{ERROR DETECTION AND CORRECTION}
\label{sec:error_detection_correction}

%Previously we have seen that Graphoid-redundant CI-statements contain little additional information and how we can spot purely graphically redundant statements. 
Since GA-redundant tests are unlikely to contradict the given tests, they are unlikely to reveal errors in the given tests (which does not mean that such rare contradictions are uninformative). Moreover, the absence of contradictions might even give the misleading impression of evidence for a model \dominik{and, for instance, way suggest evidence for a causal DAG although there isn't},
as we will discuss more formally in \cref{ex:graphoid_hurts_correction}.
Therefore, we will \emph{detect} and \emph{correct} errors with PGM-redundant tests in the following.
\todo{Put back sentence to end of previous section?}
\subsection{Error detection}

At the beginning of this section, we will focus on spanning trees\footnote{A spanning tree is an undirected graph, in which every pair of nodes is connected by exactly one path.} for two reasons. First, they are a simple class of examples. But also, as we will see, they impose particularly strong graphical assumptions.

\paragraph{Error detection in spanning trees}
  %, i.e. the set of graphs %with $n$ nodes 
% such that every pair of nodes is connected by exactly one path.
%We will omit $n$ in the following.
%Let $P$ be a distribution that is Markovian and faithful to some $T\in\cT$.

%Since we have seen in \cref{prop:continuity_partial_corr} that graphoid-redundant tests are of little interest if we want to detect errors, we will only consider a subset of possible CI-tests in the following section.
%The particularly strong restrictions of a spanning tree allow us to ignore many possible tests and still achieve identifiability.
%Moreover, we can also still get purely graphical redundancy.
%An example that is too simple to achieve this is the following modification of the Chow-Liu-Algorithm \cite{chow1968approximating}.
Suppose we use the CI-statements from \cref{prop:markov_network_markovian} to find an undirected graph.
Although this approach is quite efficient, it is sensitive to errors.
Each of the conducted CI-tests directly corresponds to the presence or absence of an edge.
In other words, a single error already makes it impossible to find the correct graph.
But, in this case, \cref{prop:sufficient_cond_non_graphoid} equips us with plenty of CI-statements that could be used to falsify the result under the faithfulness assumption.
The following corollaries show general cases where \cref{prop:sufficient_cond_non_graphoid} can be applied.
\begin{corollary}[transitive dependence]
\label{ex:transitive_dependence}
    Consider the independence model that is Markovian and faithful to the undirected graph in \cref{fig:transitive_dependence}.
    Let $L$ be the set of CI-statements from \cref{prop:markov_network_markovian}.
    %Let $i,j\in\N$ such that the nodes $X_i$ and $X_j$ are not connected by an edge.
    Then every dependence between non-adjacent nodes is PGM-redundant, and all \emph{in}dependences are GA-redundant.

\end{corollary}
This can be seen as follows: by construction, $L$ does not contain a dependence statement involving non-adjacent nodes $X_i$ and $X_j$ where $i, j\in\N$.
In particular, $L$ does not contain a dependence between $X_k$ and $X_l$ with $k, l\in\N$ such that $X_k$ and $X_l$ are only connected over a path with more than one edge.
By \cref{prop:sufficient_cond_non_graphoid}, every dependence between non-adjacent nodes is PGM-redundant.\footnote{In practice, 
especially the dependences over short paths are of interest. As the data processing inequality \citep{mackay2003information} bounds the mutual information along a path, the dependences become weaker the longer a (non-deterministic) path is. %(unless all links are deterministic).
}
The \emph{in}dependences follow via \cref{prop:markov_network_markovian}.
The next corollary shows that the PGM-redundant statements are imposed by \emph{strong} graphical assumptions.
\begin{corollary}[implied connectedness]
\label{ex:implied_connectedness}
    Suppose again our independence model $M$ is Markovian and faithful to the graph in \cref{fig:transitive_dependence}.
    Now let \begin{displaymath}
        L = \{(X_i\ind X_n \,|\, \bV\setminus\{X_i, X_n\}) :\: i \in \{1, \dots, n-2\}\}
    \end{displaymath}
    for $n\in\N_{>2}$.
    If $\cG$ is the set of spanning trees over $\bV$, the statement $X_{n-1} \not\ind X_n\mid \bV\setminus\{X_{n-1}, X_n\}$ is PGM-redundant.
    %According to \cref{prop:sufficient_cond_non_graphoid} it is even purely graphically redundant.
 
\end{corollary}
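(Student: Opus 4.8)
The plan is to verify the two clauses of \cref{def:purely_graphical_red} in turn — graphical redundancy, then failure of Graphoid-redundancy — taking the natural reference class $\cG$ in \cref{def:graphical_redundancy} to be the set of \emph{all} spanning trees over $\bV$. This is the content-bearing reading: restricting $\cG$ to the spanning trees that are Markovian and faithful to $M$ leaves only the path of \cref{fig:transitive_dependence} and trivialises the claim, whereas enlarging $\cG$ to all undirected graphs breaks it, since the edgeless graph satisfies $L$ yet has $X_{n-1}\perp X_n\mid\bV\setminus\{X_{n-1},X_n\}$ — this sensitivity to the spanning-tree assumption is exactly the point the surrounding text makes. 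Throughout, write $s$ for the dependence statement $X_{n-1}\not\ind X_n\mid\bV\setminus\{X_{n-1},X_n\}$.

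First I would record an elementary separation fact: in any spanning tree $T$ over $\bV$ and distinct $U,W\in\bV$, one has $U\perp_T W\mid\bV\setminus\{U,W\}$ if and only if $U$ and $W$ are non-adjacent in $T$. (If $U-W\in T$, that edge is an active path disjoint from the conditioning set; if $U-W\notin T$, the unique path from $U$ to $W$ in $T$ has an internal node, which lies in $\bV\setminus\{U,W\}$ and blocks it.) Consequently, for a spanning tree $T$, the hypothesis $L\subseteq M_T$ says exactly that $X_n$ is non-adjacent to each of $X_1,\dots,X_{n-2}$ in $T$.

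Graphical redundancy is then a one-line counting argument. Let $T\in\cG$ with $L\subseteq M_T$. Since $T$ spans $n>2$ nodes it is connected, so $X_n$ has at least one neighbour in $T$; by the preceding paragraph that neighbour is none of $X_1,\dots,X_{n-2}$, hence it can only be $X_{n-1}$. Thus $X_{n-1}-X_n\in T$, i.e.\ $X_{n-1}\not\perp_T X_n\mid\bV\setminus\{X_{n-1},X_n\}$, which is the assertion $\{s\}\subseteq M_T$. As $T$ was an arbitrary spanning tree consistent with $L$ (and the path of \cref{fig:transitive_dependence} is one such tree, so the class is non-empty), $s$ is graphically redundant w.r.t.\ $\cG$ and $L$. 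One could equivalently appeal to \cref{prop:sufficient_cond_non_graphoid}, whose coupling hypothesis is vacuous here because $L$ contains no dependence statement at all.

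For the second clause I would exhibit a single Graphoid model in which $s$ fails. Let $M'$ be the independence model of $n$ mutually independent, non-degenerate real random variables, i.e.\ the model in which \emph{every} admissible triplet is an independence. It is induced by a positive distribution, hence is a Graphoid (alternatively the graphoid axioms hold trivially, since every right-hand side is again an admissible triplet). Clearly $L\subseteq M'$, whereas $(X_{n-1},X_n,\bV\setminus\{X_{n-1},X_n\})\in M'$, so $s$ does not hold in $M'$, i.e.\ $\{s\}\not\subseteq M'$; therefore $s$ is not Graphoid-redundant w.r.t.\ $L$. Combined with graphical redundancy, $s$ is purely graphically redundant. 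The only point I would treat with care is pinning down $\cG$ as above; the tree-separation fact and the graphoid check for $M'$ are both immediate.
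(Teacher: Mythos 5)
Your proof is correct and follows essentially the same route as the paper's own (in-text) justification: connectedness of a spanning tree together with the independences in $L$ forces the adjacency $X_{n-1}-X_n$, giving graphical redundancy, while the disconnected/all-independent scenario (your fully independent Graphoid model $M'$) witnesses that $s$ is not Graphoid-redundant. You merely make explicit the tree-separation-iff-non-adjacency fact and the choice of $\cG$ as all spanning trees, which the paper leaves implicit.
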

Would $X_n$ be disconnected from the graph we could still get $X_{n-1} \perp X_n \mid \bV \setminus \{X_{n-1}, X_n\}$. But this is not possible in a spanning tree. So here, the PGM-redundancy comes from the spanning tree property.
But if we only assume the underlying graph is any undirected graph, this would not follow, as $X_n$ could indeed be disconnected.
In terms of error detection, this means that a test with result $X_{n-1}\ind X_n\mid \bV\setminus\{X_{n-1}, X_n\}$ would indicate that either this test, one of the tests in $L$, or our graphical assumptions are wrong.

\todo{Add example with PC-like tests?}

\paragraph{Error detection in DAGs}
First note that in DAGs we can also have PGM-redundancies along (now directed) paths as in \cref{ex:transitive_dependence}. 
But just as in \cref{ex:implied_connectedness}, there are also dependences that follow from the particular graphical assumption.
To see this, consider the following variant of \cref{fig:three_v_structures}.
\begin{corollary}[multiple colliders]
    Consider an independence model that is Markovian and faithful to the DAG with edges $X_i\to Y$ for $i=1, \dots, n\in\N_{>1}$.
    Assume we learn a DAG using a given topological ordering and the CI-statements from \cref{prop:protocol_graph_markovian}.
    Then all tests $X_i\not\ind X_j\mid Y$ with $i, j\in \{1, \dots, n\}, i\neq j$ are PGM-redundant. %according to \cref{prop:sufficient_cond_non_graphoid}.
    Conversely, all  \emph{in}dependences %(\cref{prop:protocol_graph_markovian}) 
    and all statements $X_i\not\ind Y\mid \{X_1, \dots, X_n\}\setminus\{X_i\}$ for $i=1, \dots, n$ %(\cref{prop:coupling}) 
    are GA-redundant.
\end{corollary}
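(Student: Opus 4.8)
The plan is to split the statement into three parts and handle each by an appropriate combination of the earlier results. First, for the claim that every $X_i \not\ind X_j \mid Y$ (with $i\ne j$) is purely graphically redundant, I would verify the two conditions of \cref{def:purely_graphical_red}: that these statements are graphically redundant, and that they are not Graphoid-redundant. Graphical redundancy is immediate: the DAG $G^*$ with edges $X_i\to Y$ has $X_i \not\perp_{G^*} X_j \mid Y$ (the collider $X_i\to Y\leftarrow X_j$ is opened by conditioning on $Y$), and by \cref{prop:protocol_graph_markovian} any DAG $G$ that is Markovian to the causal input list $L_\pi$ coming from the topological order $X_1,\dots,X_n,Y$ must in fact have the same skeleton and v-structures, hence the same collider at $Y$; so $X_i\not\perp_G X_j\mid Y$ holds in all such $G$. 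For ``not Graphoid-redundant,'' I would apply \cref{prop:sufficient_cond_non_graphoid} with $s:=(X_i,X_j,\{Y\})$ and $G:=G^*$: $G^*$ is Markovian to $L$, and $X_i\not\perp_{G^*} X_j\mid Y$. The remaining hypothesis is that $L$ contains no dependence $A\not\ind B\mid \bC$ such that $A,B$ are coupled over $s$ given $\bC$. Here I must inspect $L = L_\pi$: its dependence statements are exactly the ones asserting the edges $X_i\to Y$, i.e. statements of the form $X_i\not\ind Y\mid \bZ_i$ where $\bZ_i = \{X_k : k<i\}$. For such a statement, every active path from $X_i$ to $Y$ given $\bZ_i$ is just the edge $X_i\to Y$ itself (any longer path would pass through another collider $Y$ already conditioned on, or through a conditioned-on parent), so the sub-path condition in \cref{def:s_coupling} is trivially satisfiable and $X_i,Y$ are \emph{not} coupled over $s$ — the direct edge gives an $s$-active path. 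Hence \cref{prop:sufficient_cond_non_graphoid} applies and each $X_i\not\ind X_j\mid Y$ is purely graphically redundant.

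Second, for the claim that all \emph{in}dependences entailed by the learned DAG are Graphoid-redundant, I would invoke \cref{prop:protocol_graph_markovian} directly. That corollary states precisely that for a Graphoid independence model $M$ and a causal input list $L_\pi$, every \emph{in}dependence in $M_{G(\pi)}$ is Graphoid-redundant w.r.t. $L_\pi$. Since the underlying distribution's independence model is a (semi-)Graphoid, and since the learned DAG here is exactly $G(\pi)$ for the chosen order, the conclusion transfers verbatim. The only thing to spell out is that the DAG ``learned using the given topological ordering and the CI-statements from \cref{prop:protocol_graph_markovian}'' coincides with $G(\pi)$, which is how $G(\pi)$ was defined (edge $X\to Y$ iff $L_\pi$ contains a dependence $X\not\ind Y\mid\bZ$).

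Third, for the statements $X_i \not\ind Y \mid \{X_1,\dots,X_n\}\setminus\{X_i\}$ being Graphoid-redundant, I would derive them from the $L_\pi$ dependences $X_i \not\ind Y\mid \{X_k:k<i\}$ by a Graphoid argument: using weak union we pass from $X_i\not\ind Y\mid\{X_k:k<i\}$ together with the marginal independences $X_i\ind X_j$ (for $j\ne i$, which are themselves \emph{in}dependences in $M_{G(\pi)}$, hence Graphoid-redundant by the previous part) to conclude $X_i\not\ind Y\mid \{X_1,\dots,X_n\}\setminus\{X_i\}$ — this is the contrapositive form of weak union / contraction, exactly of the type $X_1\ind X_2 \land X_1\not\ind Y \Rightarrow X_1\not\ind Y\mid X_2$ invoked informally in \cref{ex:motivation}. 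Concretely one adds the later-indexed $X_j$'s to the conditioning set one at a time; at each step the set being added is marginally (and conditionally, given the smaller set) independent of the pair, so the dependence is preserved. I would note that the $X_j$'s are mutually independent and jointly independent of $(X_i)$ in this DAG, which is what makes the chained application of the Graphoid rules valid.

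The main obstacle I anticipate is the coupling check in the first part — verifying rigorously that in the causal input list $L_\pi$ no dependence statement has its two endpoints coupled over $s=(X_i,X_j,\{Y\})$. This requires arguing about all active paths between the endpoints of every dependence in $L_\pi$ and showing each admits an $s$-active refinement (i.e. one avoiding an active $X_i$--$Y$ sub-path given $\{Y\}$), which hinges on the specific star-shaped structure of $G^*$: the only non-source node is $Y$, so any path between two of the $X$'s necessarily routes through $Y$ as a collider, and any ``long'' path between some $X_k$ and $Y$ revisits $Y$ or hits a conditioned parent and is therefore blocked. Making this airtight — rather than hand-wavy — is where the real work lies; everything else is bookkeeping on top of \cref{prop:protocol_graph_markovian,prop:sufficient_cond_non_graphoid} and elementary Graphoid manipulations.
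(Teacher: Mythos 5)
Your parts (1) and (2) are essentially the paper's intended route: the first claim is exactly an instance of \cref{prop:sufficient_cond_non_graphoid} (whose conclusion already gives \emph{purely} graphical redundancy, so your separate skeleton/v-structure argument is unnecessary — and note that \cref{prop:protocol_graph_markovian} says nothing about uniqueness of consistent DAGs; if you want graphical redundancy spelled out, argue directly that the star collider is the only DAG consistent with $L_\pi$), and the second claim is a verbatim application of \cref{prop:protocol_graph_markovian}.

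The genuine gap is your description of $L_\pi$, and it breaks part (3). By the paper's definition of the causal input list, the conditioning set for a pair $(X,Y)$ with $\pi(X)<\pi(Y)$ consists of \emph{all} predecessors of the \emph{later} node $Y$ except $X$. Since $Y$ must be last in any topological ordering of this DAG, the dependences in $L_\pi$ are $X_i\not\ind Y\mid \{X_1,\dots,X_n\}\setminus\{X_i\}$ — not $X_i\not\ind Y\mid\{X_k: k<i\}$ as you write. Consequently, the statements in the third claim are exactly the dependences already contained in $L_\pi$ (equivalently, $X_i$ and $Y$ are \emph{coupled} given the remaining $X_k$'s, so the claim is also immediate from \cref{prop:coupling}, which is the route the paper points to). Your weak-union/contraction chain is therefore not only unnecessary, but as written it is ungrounded: its starting dependences $X_i\not\ind Y\mid\{X_k:k<i\}$ are not in $L_\pi$, and you never establish that they are Graphoid-redundant w.r.t.\ $L_\pi$, so you cannot chain from them. (Deriving a dependence with a \emph{smaller} conditioning set from one with a larger set is not what the contrapositive of contraction gives you.) The same mis-specification appears in your coupling check in part (1), but there it is harmless: the only fact needed is that the direct edge $X_k\to Y$ is an active path containing no $X_i$--$X_j$ sub-path, hence $s$-active, regardless of the conditioning set, so no dependence in $L_\pi$ has its endpoints coupled over $s=(X_i,X_j,\{Y\})$.
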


    \begin{figure}[t]
		\centering
		\begin{tikzpicture}
			\node[obs] (X1) {$X_1$} ;
			\node[obs, right=of X1] (X2) {$X_2$} edge[-] (X1) ;
            \node[right=of X2] (X3) {$\dots$} edge[-] (X2) ;
            \node[obs, right=of X3] (Xn) {$X_n$} edge[-] (X3) ;
			
		\end{tikzpicture}
		\caption{In this graph, every dependence along more than one edge is PGM-redundant given the CI-statements from \cref{prop:markov_network_markovian}.}
		\label{fig:transitive_dependence}
	\end{figure}   

\looseness=-1Although \citet{ramsey2006adjacency} do not study the problem through the lens of redundancy, they observe that there can be several conditional (in)dependences that characterise a collider in a DAG.
They propose to let their \emph{Conservative PC} algorithm (CPC) indicate when these CI-statements contradict each other.\todo{get back footnote for unshielded triplet}
Precisely, suppose PC finds a skeleton $H$ that contains an unshielded triplet %\footnote{I.e. nodes $A, B, C$ such that $A$ is connected to $B$ and $B$ is connected to $C$ but $A$ and $C$ are not connected.} 
$A-B-C$.
They then consider all subsets of  the neighbours of $A$ and $C$ as potential conditioning sets, i.e. $\bZ\subseteq \Adj_H(A)$ or $\bZ\subseteq\Adj_H(C)$.
If the observed independence model is Markovian and faithful to a DAG, either all or none of the sets $\bZ$ with $A\ind C\mid \bZ$ contain $B$.
Indeed, we can phrase this in our framework as the CI-statements $A\not\ind C\mid \bZ\cup\{B\}$ being GM-redundant for the hypothesis that the underlying graph contains the collider $A\to B\leftarrow C$ or $A \ind C \mid \bZ \setminus\{B\}$ otherwise.
Further, the following \cref{ex:cpc_not_graphoid_redundant} shows that  these tests can indeed be GM-redundant. %(although they do not discuss GM-redundancy).
 %The following example shows that the additional tests they consider can be purely graphically redundant, although they do not discuss Graphoid-redundancy.
\begin{example}[CPC and GA-redundancy]
	\label{ex:cpc_not_graphoid_redundant}
	%To see that the contradictions mentioned by \citet{ramsey2006adjacency} are not Graphoid-redundant 
    Consider again the graph given in \cref{fig:three_v_structures} and %, like before, 
    interpret $Y_1$ and $Y_2$ as the components of a vector-valued %variable 
    $Y$.
	Note that any independence model that is Markovian and faithful to this graph entails $X_1\not\ind Y$, $Y\not\ind X_2$, and $X_1\ind X_2$.
	Further, we have the \emph{in}dependence $X_1\ind X_2\mid Y$.
	Since $Y$ is neither in all sets that separate $X_1$ and $X_2$, nor in none of them, CPC would output a contradiction.
	But this example further shows that such a model exists and thus none of the CI-statements is GA-redundant given the others.	\todo{can I just generalise this by adding a conditioning set?}
\end{example}
Note that our perspective also includes CI-statements that cannot be detected by CPC (like nodes connected along longer paths in \cref{ex:transitive_dependence}), which shows that our work generalises the observations of \citet{ramsey2006adjacency}.

\subsection{Error correction}

%Given the analogy to coding theory o
One might wonder whether it is also possible to correct certain errors.
As noted before, e.g., the procedure in \cref{prop:markov_network_markovian}, but also  other algorithms like PC or SP can be sensitive to the test results in the sense that if a single one of the tests had a different result, the output of the algorithm would change.
%Surely, in this setting there is no redundancy to correct any errors.
%Given our insights about error detection, one might wonder whether additional independence tests can be used to correct errors.
As we will see, we can again use redundant CI-statements to correct errors.
%But this does not come without caveats.
And similarly to the case of error detection, conducting GA-redundant tests might be misleading.

\begin{example}[Graphoid prevents correction]
	\label{ex:graphoid_hurts_correction}
	Consider the graph in \cref{fig:graphoid_hurts_correction_example} and suppose we use the PC algorithm to learn it.
	First, the algorithm conducts all marginal CI-tests.
	Suppose they return the correct result except for\footnote{Note that such an error can never be ruled out with standard statistical testing frameworks, as the probability of a type I error cannot be zero for non-trivial procedures. %Further, \cref{prop:continuity_partial_corr} shows that also these errors can propagate analogously to the Graphoid axioms.
    } $Y\not\ind Z$.
	In the next step, the algorithm would conduct the CI-tests with a non-empty conditioning set.
	So, if $Y\not\ind Z$ were the only error, one could hope that $Y\ind Z\mid X$ (implied by the ground truth graph) can still correct this mistake.
	But note that the marginal tests already imply $X\not\ind Y\mid Z$ and $Y\not\ind Z\mid X$.
	Intuitively, this is due to the \emph{in}dependence $X\ind Z$, which prevents conditioning on $X$ from changing the relationship between $Y$ and $Z$.
	This means, according to \cref{prop:continuity_mi}, we are likely to get the wrong test result for the conditional \emph{in}dependence between $Y$ and $Z$.
	
	One might wonder whether this is a shortcoming of the PC algorithm.
	But note how this would also affect our result if we simply select the graph with the fewest contradicting CI-statements to the empirical independence model (which we will discuss in \cref{sec:mmd}).
	If we still get $X\ind Z\mid Y$ right, there are four tests in favour of the actual ground truth model. %, namely 
    %\begin{displaymath}
    %$X\not\ind Y$, $X\not\ind Y\mid Z$, $X\ind Z\mid Y$, $X\ind Z$.
    %\end{displaymath}
    But there are five that would be explained, e.g., by the graph $X\to Y\to Z$ (see \cref{sec:small_claims}). %, which are
    %\begin{displaymath}
    %$X\not\ind Y$, $X\not\ind Y\mid Z$, $X\ind Z\mid Y$, $Y\not\ind Z$, $Y\not\ind Z\mid X$.
    %\end{displaymath}
	In other words, by adding GA-redundant tests into our consideration, we might wrongly conclude that we have more evidence for the wrong graph.
			\begin{figure}[tp]
		\centering
		\begin{tikzpicture}
			\node[obs] (X) {$X$} ;
			\node[obs, right=of X] (Y) {$Y$} edge[<-] (X) ;
			\node[obs, right =of Y] (Z) {$Z$} ;
			
		\end{tikzpicture}
		\caption{A false test $Y\not\ind Z$ may lead to the conclusion that the true graph is $X\to Y\to Z$. If this were the only error, the test $Y\ind Z\mid X$ would correct that. But $Y\not\ind Z\mid X$ follows via Graphoid axioms from the marginal tests.}
		\label{fig:graphoid_hurts_correction_example}
	\end{figure}
\end{example}
\paragraph{Error correction in spanning trees}

The following result shows that we can correct errors if we consider more tests than necessary.
To circumvent the issues raised in \cref{ex:graphoid_hurts_correction}, we only consider tests with a conditioning set of size one.
This way, we get a set of tests that are not restricted by Graphoid axioms (up to axiom 1) but suffice to identify a spanning tree.
\begin{lemma}%[purely graphically redundant tests in trees]
\label{lem:tree_tests_non_graphoid}
		Let $L = \{\CI(X, Y\,|\, Z) :\:X, Y, Z\in V,\, X\neq Y,\, X\neq Z,\, Y\neq Z\}$ such that $\CI(X, Y\mid Z) = \CI(Y, X\mid Z)$ for all distinct $X, Y, Z\in V$. 
        Then $L$ contains no contradictions with respect to the Graphoid axioms and no other CI-statement follows from $L$ via Graphoid axioms.
\end{lemma}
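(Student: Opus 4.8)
The plan is to exploit the very restricted syntactic shape of the members of $L$: each one is of the form $\CI(X,Y\mid Z)$ with $X,Y,Z$ pairwise distinct \emph{single} nodes, so its left set, right set and conditioning set are all singletons. Write $I\subseteq\CI(\bV)$ for the set of triples that $L$ declares independent; by the hypothesis $\CI(X,Y\mid Z)=\CI(Y,X\mid Z)$ this set is symmetric. The main step is to check, rule by rule, that no Graphoid axiom (\cref{def:graphoid}) other than symmetry can ever fire nontrivially on statements of this shape. Decomposition and weak union need a premise $A\ind B\cup W\mid C$; for such a premise to lie in $I$ we would need $|B\cup W|=1$, hence $W=\emptyset$, and then the conclusion coincides with the premise. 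Contraction and intersection each need a second premise whose conditioning set has the form $C\cup B$ (resp.\ $C\cup W$), where $C$ is the conditioning set of another premise in $I$ and $B$ (resp.\ $W$) is nonempty and disjoint from it; since every conditioning set occurring in $I$ is a \emph{nonempty} singleton, such a set has size at least $2$ and therefore never lies in $I$, so these two rules have no applicable instance at all. Symmetry sends $\CI(X,Y\mid Z)$ to $\CI(Y,X\mid Z)\in I$, because $I$ is symmetric. Hence $I$ satisfies all Graphoid axioms, so its Graphoid closure equals $I$ itself (the induction over successive rounds of rule application is trivial once the first round adds nothing new).

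The "no contradiction" claim now follows at once: $I$ is a Graphoid independence model with $L\subseteq I$, since the independences of $L$ are exactly the triples of $I$ and the dependences of $L$ are singleton-conditioning triples that, by definition, do not lie in $I$. Thus $L$ is realised by a Graphoid, and no Graphoid-derivable independence can contradict a declared dependence.

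For irredundancy, fix a triple $(A,B,C)$ of pairwise distinct single nodes and let $s$ be the CI-statement over $(A,B,C)$ whose truth value is \emph{opposite} to the one recorded in $L$; I want to show $s$ is not Graphoid-redundant with respect to $L$. I would do so by producing a Graphoid $M$ with $L\subseteq M$ but $\{s\}\not\subseteq M$: take $M:=I\cup\{(A,B,C),(B,A,C)\}$ if $L$ records $\CI(A,B\mid C)$ as a dependence, and $M:=I\setminus\{(A,B,C),(B,A,C)\}$ if it records it as an independence. In both cases $M$ differs from the Graphoid $I$ only by a symmetric pair of triples of distinct single nodes, and the same rule-by-rule inspection as above shows that adjoining or deleting such a pair neither enables nor disables any instance of a Graphoid rule: the only rule with a single-node/single-node/single-node conclusion is symmetry, which acts within the pair, while decomposition and weak union can only produce such a conclusion in the degenerate case where the conclusion equals a premise, and contraction and intersection cannot produce one at all. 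Hence $M$ is again a Graphoid; by construction $L\subseteq M$ whereas $M$ refutes $s$. This establishes that no CI-statement over a triple of distinct single nodes beyond those already in $L$ follows from $L$ via the Graphoid axioms.

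I expect the crux to be the rule-by-rule bookkeeping in the first paragraph, and in particular ruling out contraction and intersection — the only axioms that can assemble a statement with a larger right set out of smaller ones — which here are blocked purely because the conditioning sets occurring in $L$ are forced to be nonempty singletons. For honesty one should also note that $L$ \emph{does} Graphoid-imply some CI-statements outside its syntactic family (for instance, from a declared dependence $X\not\ind Y\mid Z$ one obtains $X\not\ind Y\cup W\mid Z$ by the contrapositive of decomposition); these are dependences over non-singleton sets, hence not among the conditioning-set-size-one tests under consideration, so they do not affect the assertion that the tests in $L$ are mutually non-redundant up to symmetry.
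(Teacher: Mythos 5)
Your overall route is the same as the paper's: note that every statement in $L$ has singleton operands and a nonempty singleton conditioning set, argue that (essentially) only the symmetry axiom can fire, and read off both claims. The genuine problem is your treatment of the intersection axiom. You claim that contraction and intersection each need a second premise whose conditioning set extends the conditioning set of another premise in $I$ and hence has size at least two; that is right for contraction but wrong for intersection. In axiom 5 the premises are $\bX\ind\bY\mid\bZ\cup\bW$ and $\bX\ind\bW\mid\bZ\cup\bY$, and the shared part $\bZ$ may be empty. With $\bZ=\emptyset$ and $\bX,\bY,\bW$ singletons, the premises $X\ind Y\mid W$ and $X\ind W\mid Y$ are exactly of the shape of statements in $L$; if $L$ happens to declare both as independences, intersection yields the new statement $X\ind\{Y,W\}$, so your claim that $I$ is already Graphoid-closed fails, and further applications (decomposition to get $X\ind Y$, contraction with a third independence $X\ind V\mid Y$, then weak union) can even produce $X\ind Y\mid V$, a statement inside $L$'s own family, so the no-contradiction claim is not automatic either. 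The same oversight breaks your second construction: $M=I\cup\{(A,B,C),(B,A,C)\}$ need not be a Graphoid, since the added independence together with an existing $A\ind C\mid B\in I$ creates an intersection instance whose conclusion is missing from $M$.

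To be fair, this gap is inherited from, not worse than, the paper's own one-line proof, which asserts that all axioms other than symmetry require an operand of size larger than one -- true of intersection's conclusion but not of its premises; the paper's appendix experiments even apply axiom 5 to tests of exactly this form to derive new marginal independences. Your argument (and the lemma) is airtight if "Graphoid axioms" is read as the semi-Graphoid rules 1--4 only, or under the additional assumption that $L$ never declares two independences of the pattern $X\ind Y\mid W$ and $X\ind W\mid Y$ (as holds when $L$ comes from a model faithful to a spanning tree); as stated, for arbitrary symmetric truth assignments and the full Graphoid axioms, the intersection case has to be handled explicitly rather than dismissed.
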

Then we can indeed simply pick the \enquote{message} whose encoding has the smallest distance to the received code word, i.e., the tree whose independence model differs the least from the observed one.

\begin{proposition}[error correction in trees]
	\label{prop:error_correction_trees}
    Let the set $S = \{(X, Y, Z) :\: X, Y, Z\in V,\, X\neq Y,\, X\neq Z, Y\neq Z\}$.
	Further, let $\cT_n$ be the set of spanning trees with $n\in \N_{> 3}$ nodes, $T^*\in \cT_n$ and $M$ be an independence model with $\MD_S(T^*, M) \le \lfloor (n-1)/2 \rfloor$, where  
    \begin{displaymath}
        \MD_S(T, M) = \sum_{s\in S} \mathbb{I}\left[(s\in M_{T}) \neq (s\in M)\right].
    \end{displaymath} 
    Then we can correct  at least $\lfloor (n-1)/2 \rfloor$ errors for any spanning tree $T^*$ by minimising the distance to $M$, i.e.
	\begin{displaymath}
		T^* = \argmin_{T\in \cT_n} \MD_S(T, M).
	\end{displaymath}
\end{proposition}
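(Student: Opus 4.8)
The plan is to treat this as a coding-theory-style minimum-distance decoding argument. The key structural fact, which I would establish first, is that a spanning tree on $n$ nodes is completely determined by its induced independence model restricted to the triplets in $S$ (i.e.\ statements of the form $X \ind Y \mid Z$ with a single conditioning variable). Concretely, in a tree there is a unique path between any two distinct nodes $X,Y$, and $X \perp_T Y \mid Z$ holds for a single node $Z$ exactly when $Z$ lies on that path — and since every edge of a tree lies on the $X$–$Y$ path for its two endpoints, the adjacency relation (hence the whole tree) can be read off from which $(X,Y,Z)$-statements are independences. So the map $T \mapsto M_T|_S$ is injective on $\cT_n$, and $\MD_S$ is a genuine metric on (the image of) $\cT_n$.

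Next I would lower-bound the pairwise distance: I claim that for distinct $T_1, T_2 \in \cT_n$ we have $\MD_S(T_1, M_{T_2}) \ge 2\lfloor (n-1)/2\rfloor + 1 \ge n-1$, or at least $\ge 2\lfloor(n-1)/2\rfloor + 1$, which is the standard ``minimum distance $\ge 2t+1$ implies $t$ errors correctable'' condition with $t = \lfloor(n-1)/2\rfloor$. To see a distance of this order: if $T_1 \ne T_2$ then there is an edge $\{X,Y\} \in T_1 \setminus T_2$. In $T_1$, for this pair the only separators of size one are the internal nodes of the $X$–$Y$ path, which is just the edge itself, so \emph{no} single node separates $X,Y$; in $T_2$, $X$ and $Y$ are joined by a path of length $\ge 2$, so \emph{every} internal node of that $T_2$-path is a size-one separator — giving at least one disagreement per such internal node, and more generally each symmetric-difference edge forces a bundle of disagreeing triplets. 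Counting these carefully over all edges in $T_1 \triangle T_2$ (using that a tree has $n-1$ edges and the two trees must differ in a way that reroutes at least one path) yields the bound $\MD_S(T_1, M_{T_2}) \ge n - 1 \ge 2\lfloor(n-1)/2\rfloor + 1$ once $n > 3$; here I would be slightly careful with the parity and the $n>3$ hypothesis, since for small $n$ the rerouting can be minimal.

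With the minimum-distance bound in hand, the decoding step is routine: suppose $M$ satisfies $\MD_S(T^*, M) \le \lfloor(n-1)/2\rfloor =: t$ and let $T = \arg\min_{T' \in \cT_n} \MD_S(T', M)$. Then $\MD_S(T, M) \le \MD_S(T^*, M) \le t$, so by the triangle inequality for $\MD_S$ (valid since it is a Hamming-type distance on indicator vectors over $S$) we get $\MD_S(T, M_{T^*}) \le \MD_S(T,M) + \MD_S(M, M_{T^*}) \le 2t \le 2\lfloor(n-1)/2\rfloor < 2\lfloor(n-1)/2\rfloor + 1$, which contradicts the minimum-distance bound unless $T = T^*$. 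Hence the minimiser is unique and equals $T^*$.

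The main obstacle I anticipate is the combinatorial core — nailing the exact constant $2\lfloor(n-1)/2\rfloor+1$ in the pairwise distance bound rather than a loose $\Omega(n)$ estimate. One must argue that \emph{any} two distinct spanning trees on $n>3$ nodes differ in at least that many size-one CI-statements, which requires understanding the minimal possible symmetric difference (an edge swap) and checking that even a single edge swap reroutes enough $X$–$Y$ paths to flip the required number of triplets; the role of Lemma~\ref{lem:tree_tests_non_graphoid} here is to guarantee that none of these $S$-triplets are redundant, so each counts as an independent ``coordinate'' of the code. Everything else — injectivity of $T \mapsto M_T|_S$ and the triangle-inequality decoding argument — is then standard.
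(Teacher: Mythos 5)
There is a genuine gap, and it sits exactly where you flagged the ``main obstacle''. Your decoding step needs the minimum pairwise distance between distinct spanning trees to be at least $2t+1$ with $t=\lfloor(n-1)/2\rfloor$, and you justify this via the chain $\MD_S(T_1,M_{T_2})\ge n-1\ge 2\lfloor(n-1)/2\rfloor+1$. That last inequality is false whenever $n$ is odd: then $n-1=2t$, not $2t+1$. Moreover the bound $n-1$ is tight (e.g.\ for $n=5$, a star versus the tree obtained by re-attaching one leaf to another leaf differ in exactly $4=n-1$ statements of $S$), so no amount of care recovers $2t+1$. Consequently your strict contradiction $\MD_S(T,M_{T^*})\le 2t<d$ fails for odd $n$, and your claim that the minimiser is \emph{unique} is actually false in the worst case: with $d=2t$ one can place $M$ halfway between the two trees so that both attain distance $t$. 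The paper avoids this by proving only the non-strict statement: it assumes some $T'\neq T^*$ satisfies $\MD_S(T',M)<\MD_S(T^*,M)$ and derives $\MD_S(T',M)\ge (n-1)-t=\lceil(n-1)/2\rceil\ge\MD_S(T^*,M)$, a contradiction; i.e.\ $T^*$ attains the minimum, with ties not excluded.

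The second issue is that the combinatorial core --- that any two distinct spanning trees on $n>3$ nodes disagree on at least $n-1$ triplets of $S$ --- is only gestured at, and your sketch does not deliver it: ``one disagreement per internal node of the rerouted path'' can give as little as a single disagreement when the $T_2$-path replacing the missing edge has length $2$. The paper's counting is different: it fixes one edge $\{X,Y\}\in T^*\setminus T'$, extracts two disagreements from a node $Z$ adjacent to $X$ on the $T'$-path between $X$ and $Y$ (via a case analysis on where $Z$ sits in $T^*$), and then shows by a further case analysis that \emph{every} one of the remaining $n-3$ nodes $W$ of the tree --- not just nodes on the rerouted path --- contributes an additional disagreeing statement, totalling $n-1$. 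Finally, a small point: \cref{lem:tree_tests_non_graphoid} plays no role in this counting (it concerns Graphoid-redundancy of the test set $S$ and motivates its choice); the distance bound is purely graph-combinatorial, so invoking it to make the coordinates ``independent'' is not needed and does not substitute for the missing count.
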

%\dominik{since you have a result on graph redudancies in trees, you may want to count only those?}

\paragraph{Error correction in DAGs}
\label{subsec:dags}
The strong graphical assumptions in the previous section enabled us to derive the guarantee in \cref{prop:error_correction_trees}.
It would be desirable to have a similar result for DAGs.
As \cref{ex:complete_dag_single_error} shows, this is not possible without further restrictions.
\begin{example}[almost complete DAG]
	\label{ex:complete_dag_single_error}
	Let $G$ be a complete DAG over $\bV$, i.e. for $n\in\N_{>1}$ the graph with nodes $\{X_1, \dots, X_n\}= \bV$ and edges $X_i\to X_j$ whenever $i < j$.
	Suppose our tests unfaithfully show $X_{n-1}\ind X_n\mid \bV\setminus\{X_{n-1}, X_n\}$.
	This observed independence model would be explained by the graph 
    \begin{displaymath}
    G-(X_{n-1}\to X_n). % i.e. the complete graph without the edge between $X_{n-1}\to X_n$.
    \end{displaymath}
	Even though this is only a single error, there is another graph that perfectly explains the independence model, so %without further knowledge,
    we would prefer this graph over $G$.
\end{example}
One might consider a subset of tests that cannot contain implications about each other, like we already did in \cref{prop:error_correction_trees}.
As we have seen in \cref{ex:cpc_not_graphoid_redundant}, a potential candidate could be the tests that the CPC algorithm by \citet{ramsey2006adjacency} utilises to orient colliders.
Indeed, \citet{colombo2014order} have proposed to do a majority vote over these tests (although they did not investigate the GA-redundancy of these tests).
Clearly, this method is capable of correcting errors.
It would be interesting to characterise further such \enquote{local} criteria, where a subgraph of the learned DAG can be corrected.
But note how \citet{colombo2014order} rely on the assumption that the learned skeleton is correct.
Without such an assumption, it is not obvious how a local error correction could be established.
\todo{Better than last version or too defensive?}
%Although this might be a pragmatic assumption in practice, it is not clear why the tests contributing to the skeleton should a priori be more reliable than the ones used to orient edges.

%\dominik{Note that orientation combines independence with dependences, which is a weird thing that maybe no-where else occurs in statistics. There is not type 1 or type 2 error guarantee!}

A different approach would be to study under which conditions the optimization over \emph{all} tests works.
Indeed, recently there has been a lot of interest in methods like the \emph{sparsest permutation} (SP) algorithm \citep{raskutti2018learning,lam2022greedy,andrews2023fast}, which outputs the sparsest graph among the graphs that can be constructed like in \cref{prop:protocol_graph_markovian} for all permutations of the nodes.
\citet{raskutti2018learning} show that the required assumption is strictly weaker than faithfulness.
Moreover, they postulate a \enquote{statistical/computational trade-off}, i.e., that additional computations can help to reduce statistical uncertainty.
Although they do not formally analyse this statement, the idea is in line with our work.
To emphasise this, we will briefly study SP from the perspective of redundancy.
The following lemma shows that SP relies on two key aspects.
\begin{lemma}[characterisation of SP output] \label{lem:sp}
The SP algorithm outputs a DAG $G^*$ iff
\begin{enumerate}[(a)] \todo{replace tests with CI-statements}
\item   there is a topological ordering $\pi^*$ with respect to $G^*$ such that all tests in $L_{\pi^*}$ are as in $M_{G^*}$.
\item  for all other permutations $\pi'$ there are no less than $|G^*|$ dependences in $L_{\pi'}$.
\end{enumerate} 
\end{lemma}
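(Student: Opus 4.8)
The plan is to unfold the definition of SP and reduce the claim to two elementary facts about the construction $\pi \mapsto G(\pi)$ of \cref{prop:protocol_graph_markovian}. Recall that SP enumerates, over all orderings $\pi$ of $\bV$, the DAG $G(\pi)$ obtained from the causal input list $L_\pi$ as in \cref{prop:protocol_graph_markovian}, and returns a DAG of minimal edge count among $\{G(\pi)\}_\pi$. For $\pi(X)<\pi(Y)$ write $\bZ^\pi_{XY}:=\{Z\in\bV:\pi(X)\neq\pi(Z)<\pi(Y)\}$, the set of $\pi$-predecessors of $Y$ distinct from $X$, so that $L_\pi=\{\CI(X,Y\mid\bZ^\pi_{XY}):\pi(X)<\pi(Y)\}$, one statement per unordered pair. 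I would first record the edge-count identity $|G(\pi)|=\#\{\text{dependences in }L_\pi\}$: by construction the statement $\CI(X,Y\mid\bZ^\pi_{XY})$ is a dependence precisely when $G(\pi)$ carries the edge $X\to Y$, so dependences in $L_\pi$ and edges of $G(\pi)$ are in bijection.

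The core step is the equivalence, for a fixed DAG $G^{*}$ and an ordering $\pi$: $G(\pi)=G^{*}$ if and only if $\pi$ is a topological ordering of $G^{*}$ and every CI-statement of $L_\pi$ agrees with $M_{G^{*}}$. The ``only if'' part is short --- $G(\pi)$ orients each edge from its $\pi$-earlier to its $\pi$-later endpoint, so $\pi$ topologically orders $G(\pi)$; and on each triplet $(X,Y,\bZ^\pi_{XY})$ the label of $L_\pi$ is reproduced by $d$-separation in $G(\pi)$, since a dependence in $L_\pi$ makes $X,Y$ adjacent in $G(\pi)$ (hence never $d$-separated), while an independence in $L_\pi$ forces $X\notin\PA_{G(\pi)}(Y)$, so $X$ is a non-parent $\pi$-predecessor of $Y$ and the ordered local Markov property of $G(\pi)$ together with weak union (a property of $d$-separation) gives $X\perp_{G(\pi)}Y\mid\bZ^\pi_{XY}$. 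For the ``if'' part, assuming $\pi$ topologically orders $G^{*}$ and $L_\pi$ matches $M_{G^{*}}$, the edge $X\to Y$ lies in $G(\pi)$ iff $\pi(X)<\pi(Y)$ and $X\not\perp_{G^{*}}Y\mid\bZ^\pi_{XY}$; if $X\to Y\in G^{*}$ then $X,Y$ are adjacent, hence not $d$-separated, so the edge is present, while if $X\to Y\notin G^{*}$ (with $\pi(X)<\pi(Y)$) then $X\notin\PA_{G^{*}}(Y)$ and the same ordered-Markov-plus-weak-union computation in $G^{*}$ yields $X\perp_{G^{*}}Y\mid\bZ^\pi_{XY}$, so the edge is absent. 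Hence $G(\pi)=G^{*}$.

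Assembling these: SP outputs $G^{*}$ exactly when some $\pi^{*}$ has $G(\pi^{*})=G^{*}$ and no $G(\pi')$ has fewer edges than $G^{*}$. By the core equivalence the first condition is precisely (a), and by the edge-count identity the second is ``$\#\{\text{dependences in }L_{\pi'}\}\geq|G^{*}|$ for every $\pi'$'', which is (b); conversely (a) furnishes a $\pi^{*}$ with $G(\pi^{*})=G^{*}$, so $G^{*}$ is among the candidates, and (b) makes it one of minimal edge count. I expect the main obstacle to be exactly the ordered-local-Markov-plus-weak-union argument in the core step: it is what upgrades ``$G(\pi)$ is a minimal $I$-map of $L_\pi$'' to ``$G(\pi)$ reproduces $L_\pi$ on all triplets $(X,Y,\bZ^\pi_{XY})$'', and it has to be carried out carefully in both directions. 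One caveat worth flagging: when several $G(\pi)$ tie for the minimum edge count, ``SP outputs $G^{*}$'' should be read as ``$G^{*}$ is a sparsest candidate'' (or interpreted at the level of the Markov-equivalence class returned by SP); conditions (a)--(b) are precisely what certify $G^{*}$ as a minimiser.
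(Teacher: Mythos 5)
Your proposal is correct, and its skeleton matches the paper's: edges of $G(\pi)$ are in bijection with dependences in $L_\pi$, condition (a) amounts to the existence of some $\pi^*$ with $G(\pi^*)=G^*$, and condition (b) rules out a strictly sparser candidate $G(\pi')$. The difference is one of completeness rather than of route: the paper's proof only argues necessity by contraposition (a mismatch between $L_\pi$ and $M_{G^*}$ forces an edge discrepancy, and a permutation with fewer than $|G^*|$ dependences yields a sparser candidate), leaving the sufficiency direction and the fact that $M_{G(\pi)}$ reproduces the labels of $L_\pi$ on the tested triplets implicit. Your core equivalence --- $G(\pi)=G^*$ iff $\pi$ is a topological ordering of $G^*$ and $L_\pi$ agrees with $M_{G^*}$ --- supplies exactly these missing pieces, and the ordered-local-Markov-plus-weak-union computation you use for the independence labels is the right tool; it is also what closes the mismatch case where the label is an independence but $X\not\perp_{G^*} Y\mid \bZ^{\pi}_{XY}$, since there one must exclude that both $G(\pi)$ and $G^*$ simply lack the edge. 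Your closing caveat about ties in the minimum edge count is apt and consistent with how the paper reads \enquote{outputs} (SP returns the sparsest candidates), so the iff should indeed be understood at that level.
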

Evidently, if (a) fails, the algorithm cannot recover from this error.
But in principle, SP can correct arbitrary errors in $L_{\pi'}$, as long as property (b) holds.
If we assume that the underlying independence model is Graphoid, the errors that can occur are already restricted.
\Cref{prop:protocol_graph_markovian} implies that given (a) all \emph{in}dependences are GA-redundant and thus we are unlikely to find a contradiction here.
Further, \cref{prop:coupling} from \cref{sec:graphical_crit_dependences} also characterises dependences that follow from (a).
This means the errors that the SP algorithm can correct are, for example, of the kind of \cref{prop:sufficient_cond_non_graphoid}.
In principle, it would be desirable to have an algorithm that can also handle violations of (a), while still requiring weaker assumptions than faithfulness.
In analogy to \cref{prop:error_correction_trees}, we construct an algorithm that fulfils this requirement in \cref{sec:mmd}.
%\dominik{can you mention the weaker condition that is needed there -- to emphasize this result a bit more?}

%\begin{itemize}
%	\item MPC \cite{colombo2014order} does majority voting over purely graphical CIs
%	\item Eulig et al propose as baseline to compare against similar graphical models (not clear, improve argument)
%	\item SP algo checks graphically redundant tests
%\end{itemize}

\begin{figure*}[t]
    \centering
    \subfloat[The $p$-values of different CI-tests that are implied by known tests to be either dependent or independent. \dominik{could be interesting to see the plot also for the set of wrong tests only}]{
    \includegraphics[width=.3\linewidth]{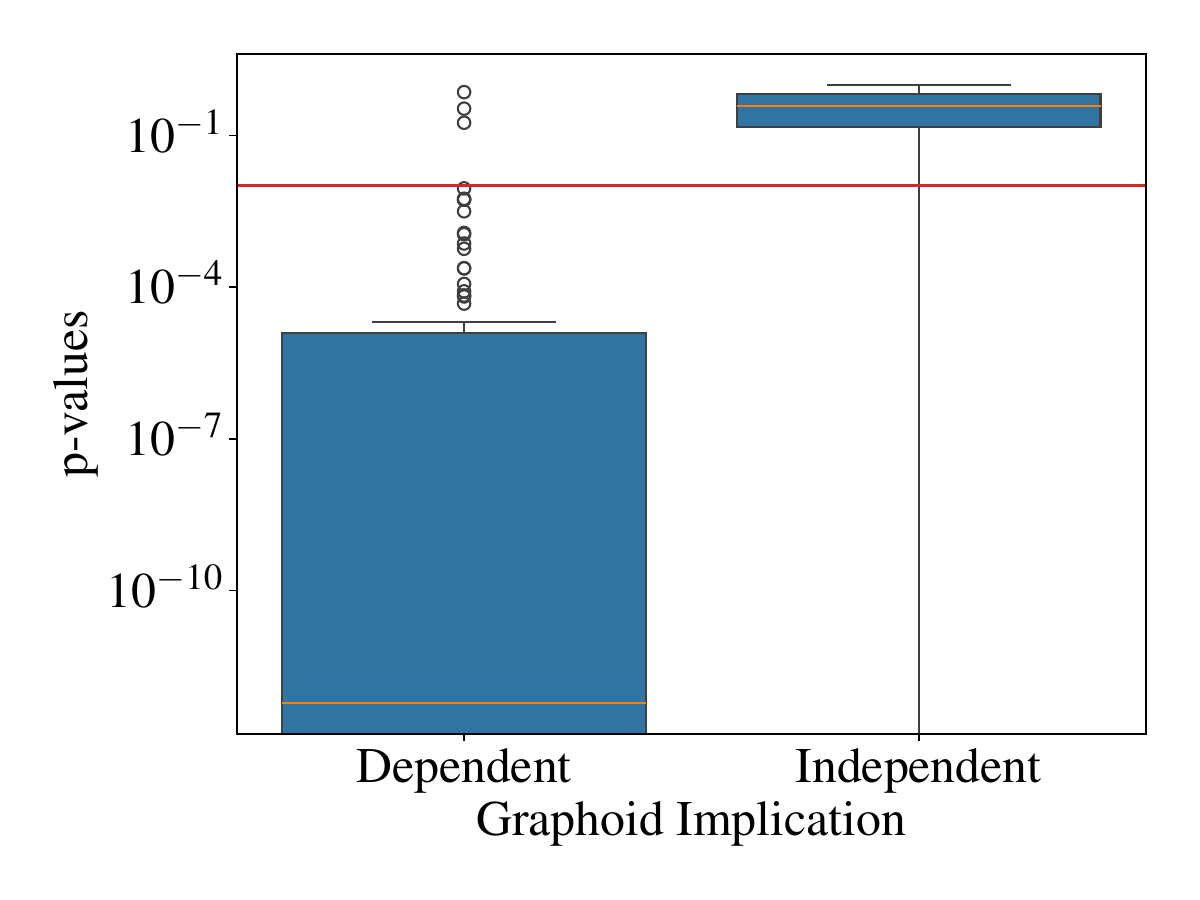}
	%\caption{}
	\label{fig:p_values_redundant}
    }\hspace{.5em}
    \subfloat[Incorrect predictions of PGM-redundant CI-statements for different models and data.]{
    \includegraphics[width=.3\linewidth]{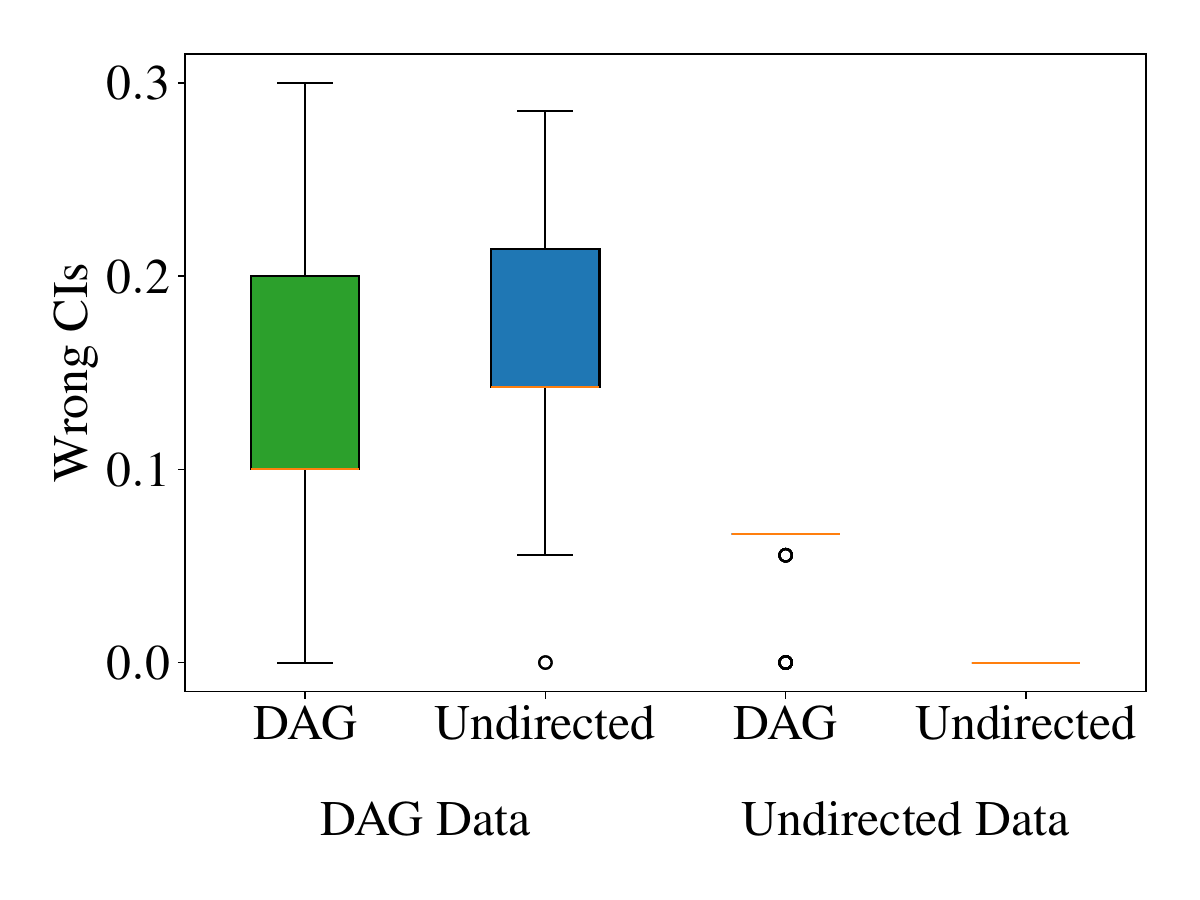}
	%\caption{}
	\label{fig:graphically_redundant_two_datasets}
    
    }    \hspace{.5em}
    \subfloat[Incorrect predictions of PGM- versus GA-redundant CI-statements on real data.\todo{GA-redundant}]{
    \includegraphics[width=.3\linewidth]{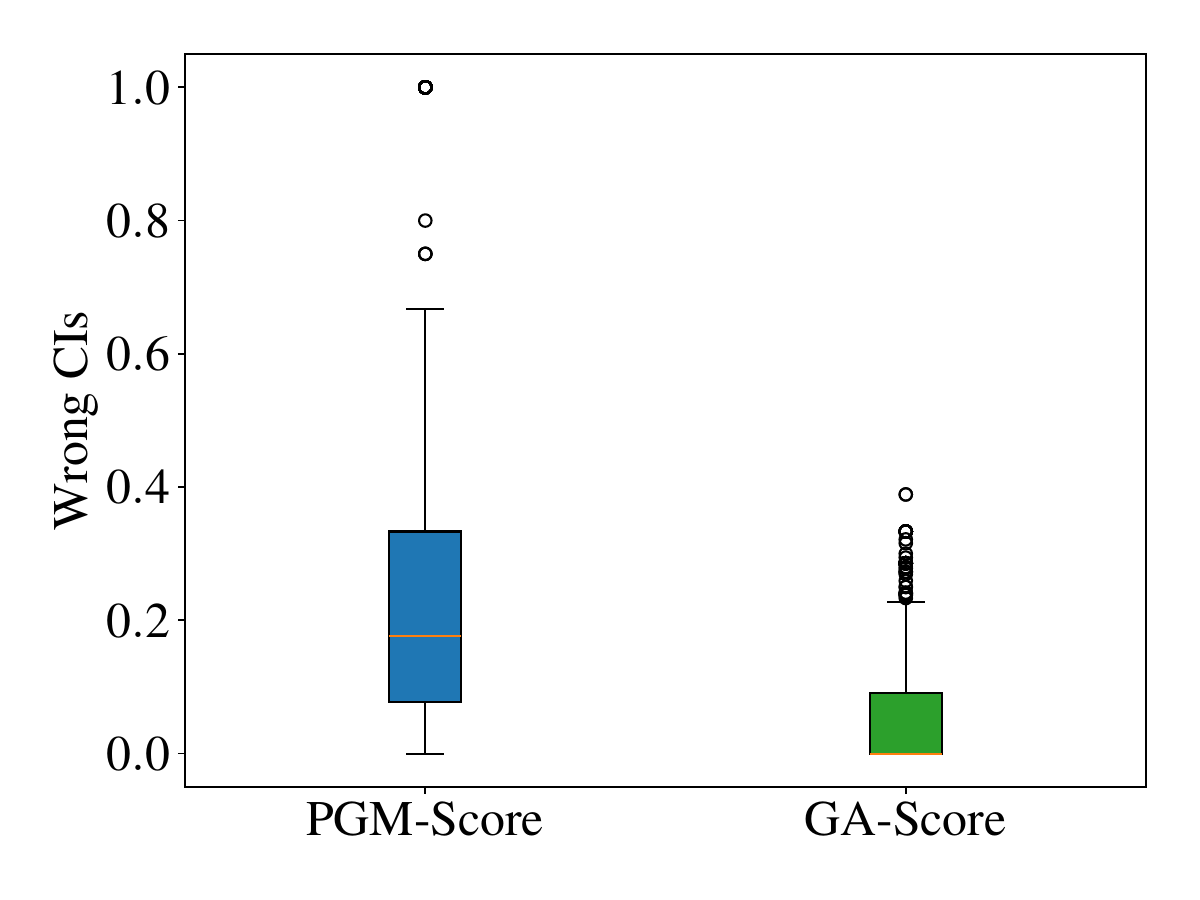}
    %\caption{}
    \label{fig:graphoid_vs_graphical_sachs}
    
    }
    \caption{Experimental results\vspace{-\baselineskip}}
    \label{fig:enter-label}
\end{figure*}

\section{EXPERIMENTS}
See \cref{sec:experiment_details} for more details on all experiments.
The source code can be found %in the supplementary material and will be made public upon acceptance. 
under \href{https://github.com/PhilippFaller/RedundancyInGraphDiscovery}{\url{https://github.com/PhilippFaller/RedundancyInGraphDiscovery}}. 
In the first experiment, we checked the hypothesis that empirical tests rarely contradict the Graphoid axioms.
To this end, we generated synthetic data from a multivariate Gaussian distribution with four variables 
and conducted several CI-tests.
Before each test, we check with the \texttt{Z3} solver \citep{z3} whether the result is already implied by the previous tests via Graphoid axioms.
If so, we track what result the axioms imply and the resulting $p$-value of a CI-test.
As we can see in \cref{fig:p_values_redundant}, the $p$-values mostly follow the predictions (where the red line indicates the significance level $0.01$).
This corroborates our hypothesis that GA-redundant tests provide little additional information and could thus give a false impression of evidence.

In the next experiments, %we wanted to see how well the purely graphically redundant tests can indicate if a model does not fit the data.
%To this end 
we thus exclude GA-redundant tests from the evaluation of our model.
In the former, we synthetically generated two datasets with four binary variables.
One follows the DAG with edges $X\leftarrow W\to Y$ and $X\to Z\leftarrow Y$, while the other one follows the undirected model with the same skeleton.
%For each dataset we drew parameters uniformly at random, created 5000 samples and repeated the whole experiment 1000 times.
We then learn a DAG and an undirected model with the procedures described in \cref{prop:protocol_graph_markovian,prop:markov_network_markovian} and identify GA-redundant CI-statements using \cref{prop:coupling,prop:markov_network_markovian,prop:protocol_graph_markovian}.
We then check whether they hold empirically in the data.
\Cref{fig:graphically_redundant_two_datasets} shows the fraction of these tests where the graphical implication and the empirical tests disagree.
We can see that the models  tend to make fewer wrong predictions when they match the respective data-generating process (green) than the other model (blue).

Finally, we learned a DAG on the protein signaling data from \citet{sachs2005causal}. 
We use the procedure from \cref{prop:protocol_graph_markovian} and identified PGM-redundant CI-statements via \cref{prop:sufficient_cond_non_graphoid}.
\Cref{fig:graphoid_vs_graphical_sachs} shows the fraction of tests where the graph contradicts the empirical result.
We further calculate this fraction with additional GA-redundant CI-statements that we detect via \cref{prop:protocol_graph_markovian,prop:coupling}.
In \cref{fig:graphoid_vs_graphical_sachs} we can see that the PGM-redundant tests indicate more errors than the GA-redundant tests.
This is in line with the fact that none of the recovered graphs were consistent with the ground truth provided by \citet{sachs2005causal}.
In \cref{sec:graphoid_vs_graphical_synthetic} we repeated the experiment on different synthetic datasets. %and found a similar difference for false graphs but no significant one if the learned graphs are correct.

\section{RELATED WORK}
%To the best of our knowledge, 
We are the first to study how graphical and probabilistic constraints on CI-statements interplay in the \emph{detection} and \emph{correction} of errors in graph discovery. %although e.g.  \citet{ramsey2006adjacency,colombo2014order} have noted that some special CI-statements can achieve this.
The fact that causal graphs can entail implications about parts of the distribution that were not seen before has been noted by \citet{tsamardinos2012towards,janzing2023reinterpreting} and in terms of structural causal models by \citep{gresele2022causal}.
Building on these insights, \citet{faller2024self,schkoda2025cross} have proposed to falsify causal models by exploiting these constraints.
%Particularly, they leverage that graphical models on subsets of the variables can entail constraints for the conditional independences in the joint distribution of all variables.
%They highlight the existence of constraints from the graphical model on possible joint distributions,
Although this is the basic observation of GM-redundancy, we are the first to contrast this with P-redundancy.
\citet{mazaheri2025meta} show that there are different kinds of dependences between CI-statements, but do not discuss how this influences their confirmatory power with respect to graphical models.
On the other hand, \citet{faltenbacher2025internal} propose to use the CI-statements that PC would conduct anyway to detect contradictions without discussing the dependences between tests.
\citet{bromberg2009improving} have also noted that contradictions between CI-tests can be corrected to improve discovery results, and
\citet{kim2024causal} correct errors by using Graphoid axioms to conduct a set of tests that are %equivalent in terms of their graphical implications but 
statistically better conditioned.
In contrast to us, they both focus on violations of the Graphoid axioms.
We have argued before that such violations are comparatively rare, which does not mean that they are not informative if they occur. 
But additionally, we pointed out that the absence of these violations does not constitute evidence for a model.
%Further, \citet{faller2024self} argue that causal discovery algorithms should strive to be stable with respect to to the addition of variables.
\citet{hyttinen2014constraint,russo2024argumentative} propose to use symbolic reasoning to resolve conflicts in the provided CI-statements.
But neither makes the distinction between GM- and P-redundant tests.\todo{Need to understand this one better.}
In \cref{sec:additional_related_work}, we discuss more work on the general robustness of  PC.

%There are classical results in numerics on the condition of matrices \cite{???}.
%But our problem is different as it has a more discrete character.
%In discrete optimization there is a wide literature on \emph{stability} of optimization problem \cite{sotskov1995some}. \textcolor{red}{Need more}.
%But they are more interested in algorithmic aspects (calculating other solutions from an existing one), where we are striving to correct errors.
%\looseness=-1There is also a vast literature on making discovery of graphical models robust against statistical uncertainty.
%E.g. \citet{kalisch2007estimating} propose a stronger version of faithfulness under which the PC algorithm is uniformly consistent, while \citet{bhattacharyya2021near} show finite-sample bounds for tree learning.
%\citet{strobl2016estimating,li2009controlling} use techniques from multi-hypothesis testing to control the error rate of edges. %(see also papers of Tsamardinos in there).
%\todo{Add following and previous line back in}
%\citet{kalisch2008robustification, harris2013pc} study how to make PC more robust towards violations of parametric assumptions, while \citet{kim2024causal} use Graphoid axioms to conduct a set of tests that is equivalent it terms of their graphical implications but statistically better conditioned.
%\cite{cano2008score,abellan2006some} propose miscellaneous modifications of the PC algorithm. 
%Also cite this technical report that proposes to use sep set with maximal $p$ value and conservative PC.
Score-based graph learning methods have also been shown to recover the independence model under certain conditions \citep{aragam2015learning}. Arguably, these methods have the advantage that they \enquote{weigh} conflicting CI-statements by their influence on the score (which is often related to the likelihood).
On the other hand, it is not clear how, for such algorithms, one could determine which information has not been used in the optimization, or in other words, what test could be used to independently evaluate a learned graphical model. 
Studying this constitutes a research project in its own right. 

\citet{zhang2016three} argue that the faithfulness assumption serves three functions in graph discovery.
%It makes the problem uniquely solvable in many cases, it reduces the complexity of the problem algorithmically, and it makes the problem statistically more tractable.
Our work can be interpreted as adding a fourth \enquote{face} to their three faces of faithfulness by showing that, in many cases, 
we get error detection and correction properties only through graphical assumptions (such as faithfulness) that are stronger than the laws of probability.
%\dominik{explain in what sense}

\section{DISCUSSION AND LIMITATIONS}
%We have demonstrated that additional conditional independence tests can be used to falsify graphical models and to improve discovery results.
We have defined different notions of redundancy to distinguish between CI-statements that are already implied by the laws of probability and the ones that follow only from graphical assumptions.
As the former, GA-redundant ones, can wrongly give the impression of additional evidence, we have characterised conditions where we have the latter, PGM-redundancy, and conditions under which we only have GA-redundancy. We are the first to propose to use PGM-redundant tests similarly to held-out data in cross-validation, or redundant bits in coding theory. 

Our work shows that numerous correctly predicted CI-statements only provide evidence if they represent \enquote{independent degrees of freedom} of the underlying distribution.
This means that the mere number of correct CI-statements provides a superficial image and cannot corroborate a model.
This exacerbates the more tests are used (for robustness or evaluation), rendering it all the more important to avoid P-redundancy. % in these cases.
%This work is supposed to foster a better understanding of how graphical models should be evaluated and that it is not trivial to decide what constitutes evidence for or against a model.
%This understanding is imperative to develop further  methods for falsification and make algorithms more robust.
%Moreover, 
On the other hand, our theory is fundamentally limited by the fact that graphical models can never be \emph{verified} without ground truth data or at least assumptions about the error model of the CI-tests used. 
Specifically, this means that we also cannot know for additional tests whether they are correct or how they might depend on each other in different ways than through Graphoid axioms. 
%Similarly, we cannot hope to find efficient ways to exactly learn a graph due to the computational hardness of the problem \citep{chickering1996learning}.
Nonetheless, we think that our insights can be a first step towards methods that not only detect or correct errors but also allow us to define confidence regions outside of which a model should be rejected, as the model differs too much from the observed evidence.
Another limitation is that additional CI-tests add to the long runtime of state-of-the-art methods, which cannot be avoided unless $\mathcal{P}=\mathcal{NP}$ \citep{chickering1996learning}. 
Regardless, we think that more robust discovery, and discovery with some form of quality estimate, is more useful for downstream tasks than a highly scalable method without that, as results are often  sensitive even to small errors.
%\todo{Say more that this paper shows how difficult it is to define proper distance between graphs and independence models. But if we had one we can could also not just correct errors but also define acceptable regions}

\subsubsection*{Acknowledgements}
We thank Elke Kirschbaum and William Roy Orchard for helpful discussions and proofreading.
Philipp M. Faller was supported by a doctoral scholarship of the Studienstiftung des deutschen Volkes (German Academic Scholarship Foundation).
This work does not relate to Dominik Janzing's position at Amazon.

\bibliography{paper}

\section*{Checklist}

\begin{enumerate}

  \item For all models and algorithms presented, check if you include:
  \begin{enumerate}
    \item A clear description of the mathematical setting, assumptions, algorithm, and/or model. Yes.
    \item An analysis of the properties and complexity (time, space, sample size) of any algorithm. Yes.
    \item (Optional) Anonymized source code, with specification of all dependencies, including external libraries. Yes.
  \end{enumerate}

  \item For any theoretical claim, check if you include:
  \begin{enumerate}
    \item Statements of the full set of assumptions of all theoretical results. Yes.
    \item Complete proofs of all theoretical results. Yes.
    \item Clear explanations of any assumptions. Yes.
  \end{enumerate}

  \item For all figures and tables that present empirical results, check if you include:
  \begin{enumerate}
    \item The code, data, and instructions needed to reproduce the main experimental results (either in the supplemental material or as a URL). Yes.
    \item All the training details (e.g., data splits, hyperparameters, how they were chosen). Yes.
    \item A clear definition of the specific measure or statistics and error bars (e.g., with respect to the random seed after running experiments multiple times). Yes.
    \item A description of the computing infrastructure used. (e.g., type of GPUs, internal cluster, or cloud provider). Yes.
  \end{enumerate}

  \item If you are using existing assets (e.g., code, data, models) or curating/releasing new assets, check if you include:
  \begin{enumerate}
    \item Citations of the creator. If your work uses existing assets. Yes.
    \item The license information of the assets, if applicable. Yes.
    \item New assets either in the supplemental material or as a URL, if applicable. Yes.
    \item Information about consent from data providers/curators. Not Applicable.
    \item Discussion of sensible content if applicable, e.g., personally identifiable information or offensive content. Not Applicable.
  \end{enumerate}

  \item If you used crowdsourcing or conducted research with human subjects, check if you include:
  \begin{enumerate}
    \item The full text of instructions given to participants and screenshots. Not Applicable.
    \item Descriptions of potential participant risks, with links to Institutional Review Board (IRB) approvals if applicable. Not Applicable.
    \item The estimated hourly wage paid to participants and the total amount spent on participant compensation. Not Applicable.
  \end{enumerate}

\end{enumerate}

\clearpage
\appendix
\thispagestyle{empty}

% Supplementary material: To improve readability, you must use a single-column format for the supplementary material.
\onecolumn
\aistatstitle{On Different Notions of Redundancy in Conditional-Independence-Based Discovery of Graphical Models\\
Supplementary Materials}

\section{DETAILED DEFINITIONS}
\label{sec:further_definitions}
We denote a random variable with upper case letter $X$. 
A set of random variables is denoted with boldface letters $\bX$.
For single variables and sets of variables, we write the attained values with a lower case letter $x$ and the domain with calligraphic letter $\cX$.

A \emph{graph} $G$ is a tuple $(\bV, \bE)$, where $\bV$ is a finite set of nodes and $\bE \subseteq \bV\times \bV$ is the set of edges.
We say $G$ is \emph{undirected} if $\bE$ is a symmetric relation.
To emphasise the direction of an edge we also write $X\to Y$ for $(X, Y)\in \bE$, $X\leftarrow Y$ when $(Y, X)\in \bE$ and $X-Y$ when the graph is undirected and $(X, Y)\in \bE$.
We slightly abuse notation and write $(X\to Y)\in G$ if $(X, Y)\in \bE$ and analogously for $X\leftarrow Y$ and $X-Y$.
We denote the graph $G - (X\to Y) := (\bV, \bE\setminus\{(X, Y)\})$.
We say two nodes $X, Y\in \bV$ are \emph{adjacent} if we have $(X, Y)\in \bE$ or $(Y, X)\in \bE$ and we denote with $\Adj(X)$ the set of nodes that are adjacent to $X$.
$\PA(Y)$ denotes the set of nodes $X\in \bV$ such that $(X, Y)\in \bE$. 
A \emph{path} $p=(X_1, \dots, X_n)$ is a sequence of $n\in \N_{>1}$ nodes such that $X_i\in \bV$ for $i=1, \dots, n$ and $X_i\in \Adj(X_{i+1})$ for $i=1, \dots, n-1$.
Further, $p$ is called a \emph{cycle} if $X_1=X_n$.
If a graph contains no cycles, it is called a \emph{directed acyclic graph} (DAG).
If in an undirected graph $G$ any distinct nodes $X, Y\in \bV$ are connected by at most one path, $G$ is called a \emph{tree}.
If it is exactly one path, $G$ is called a \emph{spanning tree}.
For a DAG $G=(\bV, \bE)$ we define the \emph{skeleton} of $G$ as the undirected graph $H=(\bV, \bE')$, with $(X, Y)\in \bE'$ if $(X, Y)\in \bE$ or $(Y, X)\in \bE$, for $X, Y\in \bV$.
For all graphs $G=(\bV, \bE)$ we denote $|G| := |\bE|$.
A DAG $G = (\bV, \bE)$ is called a \emph{complete DAG} if we have $|\bE| = |\bV|(|\bV|-1) / 2$.

Let $\bV$ be a set of random variables.
We call a set $M \subseteq \mathcal{P}(\bV)^3$ an \emph{independence model}\footnote{Note that \citet{bouckaert1995bayesian} formally distinguishes between an \emph{independence} and \emph{dependence} model. For our discussion it should suffice to  consider the latter only implicitly.} if all $(\bX, \bY, \bZ) \in M$ are disjoint and $\bX$ and $\bY$ are not empty.
Then we say $\bX$ is \emph{independent} from $\bY$ given $\bZ$ and write $\bX\ind_{\mspace{-8mu}M}\bY\mid \bZ$, where we often omit the subscript.
If $(\bX, \bY, \bZ)$ is not in $M$ we say $\bX$ is \emph{dependent} on $\bY$ given $\bZ$ and write $\bX\not\ind \bY\mid \bZ$.
A \emph{CI-statement} is a quadruple of $\bX, \bY, \bZ$ and a boolean value, indicating whether the independence holds.
For a set of CI-statements $L$ we slightly abuse notation and write $L\subseteq M$ if for $(\bX, \bY, \bZ, b)\in L$ we have $b=((\bX,\bY,\bZ)\in M$).
We also sometimes write a CI-statement as function $\CI: \bX, \bY, \bZ\mapsto (\bX, \bY, \bZ, b)$. %or $\bX \ind \bY\mid \bZ$ if $(\bX, \bY, \bZ)\in M$ and $\bX \not\ind \bY\mid \bZ$ if $(\bX, \bY, \bZ)\not\in M$.
As we noted before, with \emph{in}dependences we refer to statements of the form $\bX\ind \bY\mid \bZ$ and with dependences to $\bX\not\ind \bY\mid \bZ$, while with CI-statement we refer to both of them.

A probability distribution over $\bV$ entails an independence model by the standard definition of probabilistic conditional independence, e.g. $\bX\ind \bY\mid \bZ$ with respect to the discrete distribution $P$ iff 
\begin{displaymath}
    P(\bX = \bx, \bY=\by\mid \bZ=\bz) = P(\bX=\bx\mid \bZ=\bz)\cdot P(\bY=\by\mid \bZ=\bz),
\end{displaymath}
for all $\bx\in\cX, \by\in \cY, \bz\in\mathcal{Z}$.
We often refer to the distribution and its independence model interchangeably.

We can also use graphs to represent independence models.
First, we define a graphical notion and then see how that corresponds to conditional independence.
Let $G$ be an undirected graph with nodes $\bV$ and $\bX, \bY, \bZ\subseteq \bV$ be disjoint with $\bX\neq \emptyset \neq \bY$.
We say $\bX$ and $\bY$ are \emph{separated} given $\bZ$ if every path from $X\in \bX$ to $Y\in \bY$ contains a node in $\bZ$.
If a path contains no node in $\bZ$, we say it is \emph{active}.
For a DAG $G=(\bV, \bE)$, and a path $p=(X_1, \dots, X_n)$ in $G$ we say $X_i$ is a \emph{collider} on $p$ if $(X_{i-1}\to X_i), (X_i \leftarrow X_{i+1}) \in \bE$, for $n\in \N_{>2}, X_1, \dots, X_n \in \bV, i\in \{2, \dots, n-1\}$.
We say a path $p$ is \emph{active} given $\bZ$ if for all colliders $C$ on $p$, a descendant of $C$ or $C$ itself is in $\bZ$.
The sets $\bX$ and $\bY$ are \emph{$d$-separated} if there are no $X\in \bX$ and $Y\in \bY$ such that there is an active path given $\bZ$ between $X$ and $Y$.
Now we denote a model induced by a graph $G$ as $M_G$.
For an undirected graph $G$ we define $(\bX, \bY, \bZ) \in M_G$ iff $\bX$ is separated from $\bY$ given $\bZ$.
For DAGs $(\bX, \bY, \bZ) \in M_G$ iff $\bX$ is $d$-separated from $\bY$ given $\bZ$.
By \emph{graphical model} we refer to \emph{either} an undirected graph or a DAG (and its respective independence model).
As we have noted before, we restrict our attention to undirected graphs and DAGs.
But our insights can be applied to any model that is equipped with a notion of independence between nodes, such as chain graphs \citep{lauritzen1996graphical}, completed partial DAGs, maximal ancestral graphs, partial ancestral graphs \citep{spirtes2000causation}, or acyclic directed mixed graphs \citep{richardson2003markov}.
If for two graphs $G, G'$ we have $M_G = M_{G'}$, we say $G$ and $G'$ are \emph{Markov equivalent} and for any graph $G$ we call the set of DAGs that are Markov equivalent to $G$ the \emph{Markov equivalence class} of $G$.

A graph is \emph{Markovian} to an independence model $M$ if $(\bX, \bY, \bZ) \in M_G \implies(\bX, \bY, \bZ) \in M$ and it is \emph{faithful} if $(\bX, \bY, \bZ) \in M \implies (\bX, \bY, \bZ) \in M_G$.
A graph is called a \emph{minimal I-map} if it is Markovian but removing any edge would make it non-Markovian.
Again, we slightly abuse notation and say $G$ is Markovian to a set of CI-statements $L$ when all \emph{in}dependences in $G$ are contained in $L$.
%As a convenient shorthand we also use 
%\begin{displaymath}
% $   \CI(\bV) := \{(X, Y, \bZ) : X, Y\in \bV, \bZ\subseteq \bV\setminus\{X, Y\}, X\neq Y\}$.
%\end{displaymath}
For independence models $M, M'$ over $\bV$ we define the \emph{Markov distance} with respect to $S\subseteq \CI(\bV)$ via
\begin{displaymath}
   \MD_S(M, M') = \sum_{s\in S} \mathbb{I}\left[(s\in M) \neq (s\in M')\right].
\end{displaymath}
We omit the subscript if $S=\CI(\bV)$.
In this case, \citet{wahl2025separation} call this \emph{s/c-metric} and show that it is a proper metric for the space of Markov equivalence classes.
We extend the definition to a graph $G$ by considering the induced independence model $M_G$, i.e. we define $\MD_S(G, M) = \MD_S(M_G, M)$ and $\MD_S(M, G) = \MD_S(M, M_G)$.

%\citet{pearl2022graphoids} have defined the following axioms to derive independence statements.

\citet{bouckaert1995bayesian} uses the following definition to graphically characterise CI-statements that follow via the Graphoid axioms (\cref{def:graphoid}), as used in \cref{prop:coupling}.
\begin{definition}[coupling]
\label{def:coupling}
    Let $G$ be an undirected graph over $\bV$ and $\bX, \bY, \bZ \subseteq \bV$ be disjoint with $\bX\neq\emptyset\neq\bY$.
    Then $\bX$ and $\bY$ are \emph{coupled} given $\bZ$ if there are $X\in \bX, Y\in \bY$ or $Y\in \bX, X\in\bY$ such that
    \begin{displaymath}
        (X-Y)\in G \quad \text{ and } \quad
        \Adj(X)\subset \bX\cup\bY\cup\bZ.
    \end{displaymath}
    
	Now let $G$ be a DAG. Then
    $\bX$ and $\bY$ are \emph{coupled} given $\bZ$ if there are $X\in \bX, Y\in \bY$ or $Y\in \bX, X\in\bY$ such that
    \begin{displaymath}
        (X\to Y)\in G, \qquad 
        \PA(Y)\subset \bX\cup\bY\cup\bZ,%\quad %\text{ and } \\
    \end{displaymath}
    and there is a $\mathbf{Q} \subseteq \bX\cup\bY\cup\bZ\setminus\{X, Y\}$ such that 
    \begin{displaymath}
    \bZ\subseteq \mathbf{Q} \text{ and } X\perp_{G-(X\to Y)} Y\mid \mathbf{Q}.
    \end{displaymath}
\end{definition}

\section{OMITTED PROOFS}
\label{sec:proofs}
\begin{proof}[Proof for \cref{prop:continuity_mi}]
    Let $\bX, \bY, \bZ, \bW\subseteq\bV$ be disjoint and $\epsilon, \delta > 0$.
    Property 1) follows immediately from the definition of the conditional mutual information.
    For 2), suppose we have $I(\bX:\bY\cup \bW\mid \bZ) \le \epsilon$.
    This means
    \begin{displaymath}
        \epsilon \ge I(\bX:\bY\cup \bW\mid \bZ) = I(\bX:\bY\mid \bZ\cup \bW) + I(\bX:\bW\mid \bZ) \ge I(\bX:\bW\mid \bZ), 
    \end{displaymath}
    where the equality follows from the chain rule of conditional mutual information and the last inequality from the non-negativity of conditional mutual information.
    We get $\epsilon \ge I(\bX:\bY\mid \bZ)$ symmetrically.
    For 3), we can also apply the chain rule. Let $I(\bX:\bY\cup \bW\mid \bZ) \le \epsilon$. Then
    \begin{displaymath}
        \epsilon \ge I(\bX:\bY\cup \bW\mid \bZ) = I(\bX:\bY\mid \bW\cup \bZ) + I(\bX:\bW\mid \bZ) \ge I(\bX:\bY\mid \bW\cup \bZ)
    \end{displaymath}
    and analogously $I(\bX:\bW\mid \bZ\cup \bY) \le \epsilon$.
    
    For 4), let $I(\bX:\bY\mid \bZ) \le \epsilon$ and $I(\bX:\bW\mid \bZ\cup \bY) \le \epsilon$.
    Then we have
    \begin{displaymath}
        I(\bX:\bY\cup \bW\mid \bZ) = I(\bX:\bW\mid \bZ\cup \bY) + I(\bX:\bY\mid \bZ) \le 2\epsilon.
    \end{displaymath}

  Finally, for 5), let $I(\bX:\bY \mid \bZ\cup \bW) \le \epsilon$, $I(\bX:\bW \mid \bZ\cup \bY) \le \epsilon$ and $I(\bW:\bY \mid \bZ) \le \delta$. We apply the chain rule to the quantity $I(\bX\cup\bW : \bY \mid \bZ)$ in two ways:
  \begin{displaymath}
  I(\bX\cup\bW:\bY \mid \bZ) = I(\bW:\bY\mid \bZ) + \underbrace{I(\bX:\bY\mid \bZ\cup \bW)}_{\le \epsilon} = I(\bX:\bY\mid \bZ) + \underbrace{I(\bW:\bY\mid \bZ\cup \bX)}_{\ge 0}.
  \end{displaymath}

  Rearranging the equality and dropping the non-negative term:
  \begin{displaymath}
  I(\bX:\bY\mid \bZ) \le I(\bW:\bY\mid \bZ) + \epsilon \le \delta + \epsilon.
  \end{displaymath}

  Now apply the chain rule to $I(\bX: \bY\cup \bW \mid \bZ)$:
  \begin{displaymath}
  I(\bX:\bY\cup \bW\mid \bZ) = I(\bX:\bY\mid \bZ) + \underbrace{I(\bX:\bW\mid \bZ\cup \bY)}_{\le\epsilon} \le \delta + 2\epsilon.
  \end{displaymath}

\end{proof}

\begin{lemma}
\label{lem:algo_give_causal_input_list}
    Let $M$ be a Graphoid independence model over a set of nodes $\bV$, and $\pi:\bV\to \N$ be an ordering of $\bV$.
	Let $L_\pi$ be given as follows:  for $X, Y, \bZ \in \CI(\bV)$ we have $\CI(X, Y\mid \bZ) \in L_\pi$ iff
	\begin{displaymath}
		\pi(X) < \pi(Y) \text{ and } \bZ = \{Z\in\bV :\: \pi(X) \neq \pi(Z) < \pi(Y)\}. 
    \end{displaymath}
    For $X\in\bV$ let $\bV_X = \{V\in \bV\ : \pi(V) < \pi(X)\}$, and $\mathbf{P}_X$ be the smallest subset of $\bV_X$ such that $X\ind \bV_X\setminus \mathbf{P}_X \mid \mathbf{P}_X$ if such a set exists and else $\mathbf{P}_X = \bV_X$.  
    Then the statements 
    \begin{displaymath}
    X\ind \bV_X\setminus \mathbf{P}_X \mid \mathbf{P}_X 
    \end{displaymath}
    follow via Graphoid axioms for all $X\in \bV$ from $L_\pi$ if such a set exists.
    Further, $\mathbf{P}_X$ contains exactly the nodes $Y\in\bV_X$ with $(X\not\ind Y\mid \bV_{X} \setminus\{Y\})\in L_\pi$. I.e. the statements from the \emph{causal input list} \citep{bouckaert1995bayesian} follow from $L_\pi$.
\end{lemma}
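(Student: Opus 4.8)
The plan is to prove the two assertions---first the characterisation of $\mathbf{P}_X$ as the set of $\pi$-predecessors $Y$ of $X$ with $(X \not\ind Y \mid \bV_X \setminus \{Y\}) \in L_\pi$, and then the derivability of $X \ind \bV_X \setminus \mathbf{P}_X \mid \mathbf{P}_X$ from $L_\pi$ via the Graphoid axioms---in that order, since the characterisation is exactly what guarantees that the derivation draws only on members of $L_\pi$. I use two elementary observations throughout: $L_\pi$ is consistent with $M$ (its booleans are those of $M$), and for any $Y \in \bV_X$ the triple $(Y, X, \bV_X \setminus \{Y\})$ has precisely the shape prescribed for $L_\pi$, since $\pi(Y) < \pi(X)$ and $\bV_X \setminus \{Y\} = \{Z : \pi(Y) \neq \pi(Z) < \pi(X)\}$; hence $(X \not\ind Y \mid \bV_X \setminus \{Y\}) \in L_\pi$ if and only if $X \not\ind Y \mid \bV_X \setminus \{Y\}$ in $M$. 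Let $\mathbf{P}^{\star} \subseteq \bV_X$ denote the set of those $Y$.

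For the characterisation, I would first show $\mathbf{P}^{\star} \subseteq \mathbf{P}$ for every $\mathbf{P} \subseteq \bV_X$ with $X \ind \bV_X \setminus \mathbf{P} \mid \mathbf{P}$: for $Y \in \bV_X \setminus \mathbf{P}$, applying Weak Union to $X \ind \bV_X \setminus \mathbf{P} \mid \mathbf{P}$ with $\{Y\}$ split off yields $X \ind Y \mid \bV_X \setminus \{Y\}$, so $Y \notin \mathbf{P}^{\star}$; in particular $\mathbf{P}^{\star} \subseteq \mathbf{P}_X$. For the reverse inclusion, assuming some valid $\mathbf{P}$ exists, I would prove $\mathbf{P}^{\star}$ is itself valid by shrinking the conditioning set: enumerating $\mathbf{P} \setminus \mathbf{P}^{\star} = \{Y_1, \dots, Y_k\}$ (a subset of $\mathbf{P}$ by what was just shown), one proves by induction on $j$ that $X \ind (\bV_X \setminus \mathbf{P}) \cup \{Y_1, \dots, Y_j\} \mid \mathbf{P}^{\star} \cup \{Y_{j+1}, \dots, Y_k\}$, each step combining the inductive hypothesis with $X \ind Y_j \mid \bV_X \setminus \{Y_j\}$ (which holds since $Y_j \notin \mathbf{P}^{\star}$) via the Intersection axiom. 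At $j = k$ this reads $X \ind \bV_X \setminus \mathbf{P}^{\star} \mid \mathbf{P}^{\star}$, so $\mathbf{P}^{\star}$ is valid and, being contained in every valid set, is the unique minimum; hence $\mathbf{P}_X = \mathbf{P}^{\star}$. If instead no valid $\mathbf{P}$ exists, then $\mathbf{P}_X = \bV_X$ by definition, while no $Y \in \bV_X$ can satisfy $X \ind Y \mid \bV_X \setminus \{Y\}$ in $M$ (else $\bV_X \setminus \{Y\}$ would be valid), so $\mathbf{P}^{\star} = \bV_X = \mathbf{P}_X$ here as well.

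For the derivability claim, assume a valid set exists, so $\mathbf{P}_X = \mathbf{P}^{\star}$; since some valid $\mathbf{P}$ is a proper subset of $\bV_X$ containing $\mathbf{P}^{\star}$, the set $\bV_X \setminus \mathbf{P}_X$ is non-empty, and I enumerate it as $\{A_1, \dots, A_m\}$ with $m \ge 1$. By definition of $\mathbf{P}^{\star}$, each $X \ind A_i \mid \bV_X \setminus \{A_i\}$ holds in $M$, and by the template observation it is a true independence statement of $L_\pi$. I would then show by induction on $j$ that $X \ind \{A_1, \dots, A_j\} \mid \bV_X \setminus \{A_1, \dots, A_j\}$ is derivable from $L_\pi$ by the axioms: $j = 1$ is the $L_\pi$-statement itself, and the step to $j$ applies Intersection with $\bY = \{A_1, \dots, A_{j-1}\}$, $\bW = \{A_j\}$ and $\bZ = \bV_X \setminus \{A_1, \dots, A_j\}$, whose two premises $X \ind \bY \mid \bZ \cup \bW$ and $X \ind \bW \mid \bZ \cup \bY$ are the inductive hypothesis and the $L_\pi$-statement $X \ind A_j \mid \bV_X \setminus \{A_j\}$ respectively. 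At $j = m$ the derived statement is exactly $X \ind \bV_X \setminus \mathbf{P}_X \mid \mathbf{P}_X$.

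I expect the main obstacle to be the shrinking induction in the converse half of the characterisation: one must verify at every stage that the four sets passed to the Intersection axiom are pairwise disjoint and that the "independent" side stays non-empty---this is where the hypothesis $\bV_X \setminus \mathbf{P} \neq \emptyset$ is essential---and one must note that the full Graphoid rather than merely the semi-Graphoid is needed, since Intersection is used crucially both here and in the final derivation. The degenerate cases ($X$ the $\pi$-minimal node, so $\bV_X = \emptyset$, or $\mathbf{P}_X = \bV_X$) require only a sentence of bookkeeping once the generic argument is in place.
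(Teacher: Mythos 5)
Your proposal is correct and takes essentially the same route as the paper's proof: the same candidate set (the $\pi$-predecessors $Y$ with $X\not\ind Y\mid \bV_X\setminus\{Y\}$ in $L_\pi$), the full independence assembled from the single-node statements of $L_\pi$ by repeated application of the Intersection axiom, and minimality obtained via Weak Union. The only differences are organizational---you argue directly for each node $X$ rather than by induction on $|\bV|$, and your shrinking induction establishing validity of $\mathbf{P}^\star$ duplicates what your final derivability step already yields---so the substance matches the paper.
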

\begin{proof}[Proof for \cref{lem:algo_give_causal_input_list}]
    We prove this statement by induction.
    For $|\bV| = 2$ and w.l.o.g. $\bV = \{X_1, X_2\}$ the statement holds trivially, as $L$ contains  $X_1\ind X_2 \mid \emptyset$ iff they are independent.
    Let $|\bV| = k\in \N_{>2}$ and the statement be true for $k-1$.
    Then the statement holds for any $X\in\bV$ with $\pi(X) < k-1$, since $\bV_X$ already follows from the tests that do not include the last node.
    
    Let $X_k$ be the last node in the ordering, and     \begin{displaymath}
    \mathbf{Q}_k = \{V\in \bV_{X_k} : V\not\ind X_k\mid \bV_{X_k} \setminus \{V\}\},
    \end{displaymath}
    i.e., our candidate set for $\mathbf{P}_{X_k}$. We first want to see that 
    \begin{displaymath}
        X_k\ind \bV_{X_k}\setminus \mathbf{Q}_k \mid \mathbf{Q}_k.
    \end{displaymath}
    If $\mathbf{Q}_k = \bV_{X_k}$ this holds trivially and with $|\bV_{X_k}\setminus\mathbf{Q}_k|=1$ the required statement is contained in $L_\pi$.
    So let $X_i, X_j\in \bV_{X_k}$ be two distinct nodes such that $(X_k\ind X_i\mid \bV_{X_k} \setminus \{X_i\})\in L_\pi$ and $(X_k\ind X_j\mid \bV_{X_k} \setminus \{X_j\})\in L_\pi$.
%    If none or only one such $X_i$ exists, 
    By application of Graphoid axiom number 5, we get
    \begin{displaymath}
        X_k\ind X_i\mid \bV_{X_k} \setminus \{X_i\})\ \land\ X_k\ind X_j\mid \bV_{X_k} \setminus \{X_j\} \implies X_k\ind \{X_i, X_j\}\mid \bV_{X_k} \setminus \{X_i, X_j\}.
    \end{displaymath}
    Suppose there is a third $X_l\in \bV_{X_k}\setminus\mathbf{Q}_k$.
    Analogously we get
    \begin{align*}
        &X_k\ind X_l\mid \bV_{X_k} \setminus \{X_l\})\ \land\  X_k\ind \{X_i, X_j\}\mid \bV_{X_k} \setminus \{X_i, X_j\} \\
        &\implies X_k\ind \{X_i, X_j, X_l\}\mid \bV_{X_k} \setminus \{X_i, X_j, X_l\}.
    \end{align*}
    This can be repeated until we get the required statement.

    It remains to show that there is no smaller set with the same property.
    Suppose for a contradiction there is a set $\mathbf{Q}_k'$ with 
    \begin{displaymath}
        X_k\ind \bV_{X_k}\setminus \mathbf{Q}_k' \mid \mathbf{Q}'_k.
    \end{displaymath}
    Then there is at least one node $X_i\in \mathbf{Q}_k\setminus\mathbf{Q}_k'$. 
    Then we can rewrite the statement above and apply Graphoid axiom 3 to get
    \begin{displaymath}
        X_k\ind X_i \cup (\bV_{X_k}\setminus \mathbf{Q}_k') \setminus \{X_i\} \mid \mathbf{Q}'_k \implies X_k\ind X_i  \mid \bV_{X_k}\setminus\{X_i\}.
    \end{displaymath}
    But by construction, $\mathbf{Q}_k$ contains all nodes $V$ that are dependent on $X_k$ given $\bV_{X_k}\setminus\{V\}$.
    So this is a contradiction.
\end{proof}

\begin{proof}[Proof for \cref{prop:protocol_graph_markovian}]
Let $M$ be a Graphoid independence model 
    over a set of nodes $\bV$, and $\pi:\bV\to \N$ be an ordering of $\bV$.
	Let $L_\pi$ be the list of tests from the statement, i.e.  for $X, Y, \bZ \in \CI(\bV)$ we have $\CI(X, Y\mid \bZ) \in L_\pi$ iff $\pi(X) < \pi(Y)$ and
	\begin{displaymath}
		\bZ = \{Z\in\bV :\: \pi(X) \neq \pi(Z) < \pi(Y)\}. %\todo{Insert space for readability}
	\end{displaymath}
	Let $G(\pi)$ be the DAG over $\bV$ with an edge from $X\in\bV$ to $Y\in \bV$ iff $(X\not\ind Y \mid \bZ) \in L_\pi$ for some $\bZ\subseteq \bV$.

By \cref{lem:algo_give_causal_input_list} the set $L_\pi$ from \cref{prop:protocol_graph_markovian} gives 
    \begin{displaymath}
    X\ind \bV_X\setminus \mathbf{P}_X \mid \mathbf{P}_X
    \end{displaymath}
    for the smallest subset of $\bV_X$ such that this holds via Graphoid axioms and the graph $G(\pi)$ constructed contains an edge from every node in $\mathbf{P}_X$ to $X$.
    Then we can apply Theorem 2 and Corollary 2 from \citet{verma1990causal} to see that this graph is a minimal I-map of the underlying independence model $M$.
    By definition, this means that for all disjoint $\bX, \bY, \bZ\subseteq \bV$ with $\bX\neq\emptyset\neq\bY$ we get 
    \begin{displaymath}
        \bX\perp_{G(\pi)} \bY\mid \bZ \implies \bX\ind_{M} \bY\mid \bZ,
    \end{displaymath}
    and therefore all \emph{in}dependence statements are GA-redundant with respect to $L_\pi$.
\end{proof}

\begin{proof}[Proof for \cref{prop:markov_network_markovian}]
Let $M$ be a Graphoid independence model 
    over a set of nodes $\bV$.
	Let $L$ be the set of CI-statements
	\begin{displaymath}
		L = \{\CI(X, Y\,|\, \bV\setminus\{X, Y\}) :\: X, Y\in \bV,\, X\neq Y\}.
    \end{displaymath}
	Let $G(L)$ be the undirected graph over $\bV$ with an edge between $X, Y\in \bV$ iff $(X\not\ind Y \mid \bV\setminus\{X, Y\}) \in L$.

    By direct application of Theorem 12 by \citet{geiger1993logical}, we get 
    \begin{displaymath}
        \bX\perp_{G(L)} \bY\mid \bZ \implies \bX\ind_{M} \bY\mid \bZ,
    \end{displaymath}
    i.e. that all \emph{in}dependences are GA-redundant with respect to $L$.
\end{proof}

\begin{proof}[Proof for \cref{prop:sufficient_cond_non_graphoid}]
\todo{Add visualization}
Let $M$ be the independence model of a positive distribution, $L$ be a set of CI-statements and $s:=(X, Y, \bZ) \in \CI(\bV)$ with $(X\not\ind Y\mid \bZ)\not\in L$.
	Let $G$ be a graphical model %over $\bV$ 
    such that $G$ is Markovian to $L$ and $X\not\perp_{G} Y\mid \bZ$.
	We will prove the statement by constructing a Graphoid independence model that contains all CI-statements in $L$ but not $X\not\ind Y \mid \bZ$ by building a distribution that is Markovian but not faithful to a DAG.
	
	If $X$ and $Y$ are only $d$-connected given $\bZ$ by a direct edge, we set $G' = (V, E\setminus\{(X, Y), (Y, X)\})$. 
	
	Otherwise, let $N(X\sim Y\mid \bZ)$ be the set of nodes on the active paths between $X$ and $Y$ given $\bZ$ (without $X$ and $Y$).
	We construct a new graph $G' = (V', E')$ with more nodes and more edges.
    See \cref{fig:modified_graph} for an example.
	First, add all nodes in $V\setminus N(X\sim Y\mid \bZ)$ to $V'$.
	For each $W\in N(X\sim Y\mid \bZ)$, we add nodes $W_1, W_2$ to $V'$.
	For all edges $W \to W'$ with $W, W' \in N(X\sim Y\mid \bZ)$, add the edges $W_1 \to W_1'$ and $W_2\to W_2'$.
	If $W'\in V\setminus N(X\sim Y\mid \bZ)$ add $W_1\to W'$ and $W_2\to W'$. Analogously add $W\to W'_1$ and $W\to W'_2$ if $W\in V\setminus N(X\sim Y\mid \bZ)$.
	Finally add $X\to W_1$ and $X\leftarrow W_1$ whenever $X\to W$ or $X\leftarrow W$ for $W\in  N(X\sim Y\mid \bZ)$ and $Y\to W_2$ and $Y\leftarrow W_2$, respectively.
	
	Clearly, in this graph we have $X\perp_{G'} Y\mid \bZ'$, where $\bZ'$ contains all nodes in $\bZ\setminus N(X\sim Y\mid \bZ)$ and $W_1, W_2$ for $W\in \bZ\cap N(X\sim Y\mid \bZ)$.
    \philipp{Need to assume $M$ is generated by distribution!}
    Let $P$ be a distribution with $M$ as independence model (so $G$ is Markovian to $P$).
    We can now construct a distribution $P'$ as an intermediate step towards a distribution $P''$ that is Markovian and faithful to $L$.
    If $D\in \bV$ has parents in the set of copies of $N(X\sim Y\mid \bZ)$, we copy the dependence to the original node.
    For other parents, we also keep the dependence the same. 
    More formally, let $E^1, \dots, E^n$ be the $n\in \N$ parents of $D$ that are in $\bV\setminus N(X\sim Y\mid \bZ)$, $F^1, \dots, F^m$ be the $m\in \N$ parents of $D$ that are copies of nodes in $N(X\sim Y\mid \bZ)$ and $F^1_0, \dots, F^m_0$ be the corresponding original nodes in $N(X\sim Y\mid \bZ)$.
    In other words, for discrete distributions we set 
    \begin{displaymath}
        P'(D=d\mid \PA(D)=\pa(D)) := P(D=d\mid E^1=e^1, \dots, E^n=e^n, F^1_0 = f^1, \dots, F^m_0=f^m)
    \end{displaymath}
    and analogously for continuous distributions.
	%It is straight forward to construct a distribution that is Markovian and faithful to $G'$ and is non-negative.
    If we now construct a distribution $P''$ by considering all copies $W_1, W_2, \dots$ for $W\in N(X\sim Y\mid \bZ)$ as a single vector-valued variable (and keep the canonical mapping between node names $W=(W_1, W_2, \dots)$), this distribution, again, contains all CI-statements we had in $L$ but $X\ind_{\hspace{-.4em}P''} Y\mid \bZ$, as we will argue next.
	%Thus, also the probability distribution we get from $G'$ by interpreting $(W_1, W_2)$ as vector-valued variables contains these CI-statements.
	Further, it is also positive if $P$ was and therefore its independence model is Graphoid.

    \begin{figure}[htp]
		\centering
		\begin{tikzpicture}
			\node[obs] (X) {$X$} ;
			\node[obs, right=of X] (V1) {$V_1$} edge[-] (X) ;
            \node[obs, right=of V1] (W1) {$W_1$} edge[-] (V1) ;
            \node[obs, below =of V1, yshift=1cm] (V2) {$V_2$} ;
            \node[obs, right=of V2] (W2) {$W_2$} edge[-] (V2) ;
            \node[obs, right=of W2] (Y) {$Y$} edge[-] (W2) ;

         \begin{scope}[on background layer]
    		\node[draw, ellipse, dashed, fit=(V1) (V2), inner sep=0.05cm] (V_group) {};
       \end{scope}

        \begin{scope}[on background layer]
    		\node[draw, ellipse, dashed, fit=(W1) (W2), inner sep=0.05cm] (W_group) {};
	       \end{scope}
			
		\end{tikzpicture}
		\caption{Example for a modified graph $G'$ from the proof of \cref{prop:sufficient_cond_non_graphoid}. All intermediate nodes on the path $X - V - W - Y$ are replaced by two copies and each new path only connects to either $X$ or $Y$.}
		\label{fig:modified_graph}
	\end{figure}

	To see the former, let $(A\ind B\mid \bC)\in L$.
    In our construction we did not add any dependences.
    If $A$ and $B$ were not connected given $\bC$ in $G$, they still are not, as we did not add an edge between nodes that were disconnected and we did not introduce any colliders between these nodes.
    If they were connected, but still independent in $P$, they also still are, since each Markov kernel of $P'$ is defined using $P$ and a subset of the parents from $G$.
    So overall we get $A\ind_{\hspace{-.4em}P''} B\mid \bC$.
\philipp{Need to make this more formal? Say Markov Kernel in $P''$ is the product of Kernels in $P'$? Then it's quite obvious that facotrization still holds. Also makes non-negativity clearer.}
    
	Now suppose $(A\not\ind B\mid \bC)\in L$, i.e., there is an active path between $A$ and $B$ given $\bC$ (as otherwise $G$ would not be Markovian to $L$).
	By assumption, $A$ and $B$ are not coupled over $s$ given $\bC$. 
    This means there is an $s$-active path between them, which entails that this path does not contain any subpath from $X$ to $Y$ that is active given $\bZ$.
	As we only modified the paths between $X$ and $Y$ that are active given $\bZ$, the path between $A$ and $B$ is still active.
    So we also still have $A\not\ind_{\hspace{-.4em}P''} B\mid \bC$.
	
	It now suffices to note that every positive distribution (and therefore particularly the one we just constructed) implies a Graphoid independence model. 
\end{proof}

\begin{proof}[Proof for \cref{lem:tree_tests_non_graphoid}]
    Let $L = \{\CI(X, Y\mid Z) : X, Y, Z\in V, X\neq Y, X\neq Z, Y\neq Z\}$ such that $\CI(X, Y\mid Z) = \CI(Y, X\mid Z)$ for all distinct $X, Y, Z\in V$. 
    Then for any CI-statement $\CI(X, Y\mid Z)\in L$, only axiom 1 from \cref{def:graphoid} is applicable, since all the other axioms require at least one of the operands to be a set of size larger than one, which we don't have in $L$.
    By assumption, we have no contradictions between the statements in $L$ and the statements that follow from axiom 1.
    And also $L$ contains all statements that can be derived with axiom 1.
\end{proof}

\begin{proof}[Proof for \cref{prop:error_correction_trees}]
    Let $S = \{(X, Y, Z) : X, Y, Z\in V, X\neq Y, X\neq Z, Y\neq Z\}$.
	Further let $T^*\in \cT_n$ and let $M$ be an independence model with $\MD_S(T^*, M) \le \lfloor (n-1) / 2 \rfloor$ for $n\in \N_{> 3}$. 
    To arrive at a contradiction we assume
	\begin{equation}
		\MD_S(T', M) < \MD_S(T^*, M), \label{eq:proof_tree_correction}
	\end{equation}
    for some tree $T'\in\cT_n$ with $T'\neq T^*$.
    Since the graphs differ and both are trees, there are nodes $X, Y$ that were connected by a direct edge in $T^*$ but are not in $T'$.
    But since they are spanning trees there still exists a path of length $k\in \N_{>1}$ between $X$ and $Y$ in $T'$.
    So there is at least one node $Z$ such that we have $X\perp_{T'} Y\mid Z$ but $X\not\perp_{T^*} Y\mid Z$, because of the edge between $X$ and $Y$ in $T^*$.
    So we have at least one CI-statement difference between $T'$ and $T^*$.
    For the second difference, there are two cases how $Z$ is connected to $X$ in $T^*$.
    \begin{enumerate}
        \item If $Y$ lies between $X$ and $Z$, i.e. $X-Y\overset{*}{-}Z$ we get $X\not\perp_{T'}Z\mid Y$ but $X\perp_{T^*}Z\mid Y$.
        \item If $X$ lies between $Y$ and $Z$, i.e. $Z\overset{*}{-}X-Y$ we get $Z\not\perp_{T'} Y
        \mid X$ but $Z\perp_{T^*} Y\mid X$.
    \end{enumerate}
    So in any case we have another difference between $T^*$ and $T'$.

    W.l.o.g. we can assume that $Z$ is the node closest to $X$ on the path between $X$ and $Y$, i.e. the node that has a direct edge to $X$.
    Now let $W$ be another node (which exists, because $n>3$).
    There are three cases how this node can be connected to $X$ along the unique path in $T'$.
    \begin{enumerate}
        \item If $Y$ is between $Z$ and $W$, i.e. if we have 
        \begin{displaymath}
            X - Z \overset{*}{-} Y \overset{*}{-} W.
        \end{displaymath}
        Then we have $X\perp_{T'}W\mid Z$ but $X\not\perp_{T^*} W\mid Z$, since in $T^*$ we had the direct edge from $X$ to $Y$.
        \item If $Y$ does not lie on the paths between $X$ and $W$ and $X$ not on the one between $Y$ and $W$, i.e. if we have the structure in \cref{fig:proof_error_correction_tree}, we will further subdivide this into two subcases depending on $T^*$.
        \begin{enumerate}
            \item In $T^*$ we have $Y$ between $X$ and $W$, i.e. $X-Y\overset{*}{-}W$. Then $X\not\perp_{T'}W\mid Y$ but $X\perp_{T^*}W\mid Y$.
            \item In $T^*$ we have $X$ between $Y$ and $W$, i.e. $W\overset{*}{-}X-Y$. Then we have $Y\not\perp_{T'}W\mid X$ but $Y\perp_{T^*}W\mid X$.
        \end{enumerate}
        \item If $X$ is between $W$ and $Y$, i.e.
        \begin{displaymath}
            W \overset{*}{-} X - Z \overset{*}{-} Y.
        \end{displaymath}
        Then this case works symmetrically to the first one.
    \end{enumerate}
    Since we chose $W$ arbitrarily, we can repeat this for every node other than $X, Y, Z$.
    This means, we get $n-3$ more contradictions between the independence model of $T^*$ and $T'$.
    So overall we have $n-1$ CI-statements where $T'$ and $T^*$ disagree.
    So even if $T'$ fits all the $\lfloor (n-1)/2 \rfloor$ tests where $T^*$ and $M$ differ, it will also differ by another $\lceil (n-1)/2 \rceil$ statements from $T^*$ and therefore also from $M$.
    This is a contradiction to \cref{eq:proof_tree_correction}.
    
    \begin{figure}[htp]
		\centering
		\begin{tikzpicture}
			\node[obs] (X) {$X$} ;
			\node[obs, right=of X] (Z) {$Z$} edge[-] node[above]{} (X) ;
            \node[vardashed, right =of Z] (empty) {} edge[-] node[above]{*} (Z);
			\node[obs, right =of empty] (Y) {$Y$} edge[-] node[above]{*} (empty);
            \node[obs, below =of empty] (W) {$W$} edge[-] node[right]{*} (empty);
			
		\end{tikzpicture}
		\caption{Visualisation how the node $W$ from the proof of \cref{prop:error_correction_trees} can be connected to $X$ and $Y$ in case 2 of the proof.}
		\label{fig:proof_error_correction_tree}
	\end{figure}
\end{proof}

\begin{proof}[Proof for \cref{lem:sp}]
    Let $G^*$ be a DAG and consider some independence model $M$ (that is not necessarily Markovian and faithful to $G^*$).
    Concerning (1): Suppose there is no permutation $\pi^*$ such that all CI-statements in $L_{\pi^*}$ are as in $M_{G^*}$.
    So for every $\pi$ there is a CI-statement $\CI(X, Y\mid \bZ)$ that is not as in $M_{G^*}$. 
    If $X\perp_{G^*} Y\mid \bZ$, then $G^*$ does not have an edge between $X$ and $Y$. But due to $X\not\ind Y\mid \bZ$ the graph $G(\pi)$ would have this edge. Analogously, if $X\not\perp_{G^*} Y\mid \bZ$ then $G(\pi)$ would not have this edge. Since this holds for all permutations, SP won't output $G^*$.

    Concerning (2): Suppose there is a permutation $\pi$ with less than $|G^*|$ dependences.
    %W.l.o.g. assume that there is no permutations with fewer dependences than $\pi$. 
    But since SP only adds an edge to $G(\pi)$ for every dependence, $|G(\pi)| < |G^*|$ and SP does not output $G^*$ as it only outputs the sparsest graphs.
\end{proof}

\subsection{Omitted derivations of smaller claims}
\label{sec:small_claims}

\paragraph{\Cref{ex:motivation}}
\emph{\enquote{$[\dots]$ if we have $X_1\not\ind Y$ and $X_1\ind X_2$ we also have $X_1
    \not\ind Y\mid X_2$}}. Suppose we have $X_1\not\ind Y$, and $X_1\ind X_2$. 
    For a contradiction assume  $X_1\ind Y\mid X_2$ holds.
    Then we can apply Graphoid axiom 4 (contraction) and get
    \begin{displaymath}
        X_1\ind X_2\,  \land\, X_1\ind Y\mid X_2 \implies X_1\ind \{X_2, Y\}. 
    \end{displaymath}
    By applying axiom 2 (decomposition) we get $X_1\ind Y$. 
    This is a contradiction to our assumptions.

\paragraph{\Cref{ex:graphoid_hurts_correction}}
\emph{\enquote{$[\dots]$ note that the marginal tests already imply $X\not\ind Y\mid Z$ and $Y\not\ind Z\mid X$}}. 
Suppose we have $X\not\ind Y, X\ind Z$, and $Y\not\ind Z$.
Assume for a contradiction that we have $X\ind Y\mid Z$.
Like before, we can apply Graphoid axiom 4 and get
    \begin{displaymath}
        X\ind Z \, \land\, X\ind Y\mid Z \implies X \ind \{Y, Z\}. 
    \end{displaymath}
    By applying axiom 2 (decomposition), we get $X \ind Y$, which is a contradiction.
    If we assume $Y\ind Z\mid X$, axioms 4 and 1 (symmetry) also gives us
    \begin{displaymath}
        Z\ind X \, \land \, Z\ind Y\mid X \implies Z \ind \{X, Y\}, 
    \end{displaymath}
    which results in $Z\ind Y$ via axiom 2.
    This contradicts our assumptions.

\emph{\enquote{$[\dots]$ there are four tests in favour of the actual ground truth model. %, namely 
    But there are five that would be explained, e.g., by the graph $X\to Y\to Z$}}.
    The tests in agreement with the ground truth model are
    \begin{displaymath}
    X\not\ind Y, \quad X\not\ind Y\mid Z,\quad X\ind Z\mid Y, \quad \text{and} \quad X\ind Z.
    \end{displaymath}
    The tests that match the alternative are
    %, which are
    \begin{displaymath}
    X\not\ind Y,\quad X\not\ind Y\mid Z,\quad X\ind Z\mid Y,\quad Y\not\ind Z, \quad \text{and}\quad  Y\not\ind Z\mid X.
    \end{displaymath}

\philipp{Also list which tests are in favour of which graph}

\section{ADDITIONAL RELATED WORK}
\label{sec:additional_related_work}
%This is more closely related to the wide literature on making graph discovery in general more robust to statistical uncertainty, e.g. \citep{kalisch2007estimating,bhattacharyya2021near,strobl2016estimating,li2009controlling,kalisch2008robustification,harris2013pc,wienobst2020recovering,kocaoglu2023characterization,rohekar2021iterative}.

There is also a vast literature on making the discovery of graphical models robust against statistical uncertainty.
Notable examples include \citet{kalisch2007estimating}, who propose a stronger version of faithfulness under which the PC algorithm is uniformly consistent, and \citet{bhattacharyya2021near}, who show finite-sample bounds for tree learning.
Other approaches focus on controlling the statistical error: \citet{strobl2019estimating,li2009controlling} use techniques from multi-hypothesis testing to control the error rate of edges. %(see also papers of Tsamardinos in there).
%\todo{Add following and previous line back in}
Robustness towards violations of parametric assumptions of the PC algorithm is studied by \citet{kalisch2008robustification, harris2013pc}.
%, while \citet{kim2024causal} use Graphoid axioms to conduct a set of tests that are equivalent in terms of their graphical implications but statistically better conditioned.
%\cite{cano2008score,abellan2006some} propose miscellaneous modifications of the PC algorithm. 
Additional strategies include \citet{wienobst2020recovering,kocaoglu2023characterization}, who propose to use a subset of tests that is assumed to be statistically more robust and investigate which graphical structures can be identified by them. 
\citet{ramsey2016improving} proposes to always pick the separating sets with the highest $p$-value, and \citet{li2019constraint} choose them such that the separating sets are consistent with the final graph.
Finally, \citet{rohekar2021iterative} proposes an algorithm that is anytime valid.

\section{GRAPHS AS ERROR-CORRECTING CODES}
\label{subsec:noisy_channel}
To motivate why we call some conditional independence tests \enquote{redundant}, we will phrase graph discovery as a coding problem.
Suppose a sender picks a graph $G$ from a set of graphs. %$\cG$ of all graphs with $n\in\N$ nodes.
Since the Markov equivalence class of this graph is identified by a sequence of CI-statements, she can encode this equivalence class in a binary string $s\in \{0, 1\}^k$ for some $k\in \N$, where each bit represents whether a certain CI-statement holds or not.
If a receiver knows the sequence of CI-statements, she can perfectly recover the equivalence class of $G$ from $s$.
In this scenario, the mapping $G\mapsto s$ is a coding scheme.
Then a (deterministic) CI-based discovery algorithm implicitly defines a decoding scheme $s \mapsto G$.
Unfortunately, the discovery algorithm rarely receives $s$, but a noisy version of it, since the CI-tests can have erroneous outputs.
If we assume, for now, that the errors of each bit are independent, Shannon's noisy-channel coding theorem \citep{mackay2003information} asserts that there is a coding scheme 
%with rate $R\in \R$ 
such that messages can be transmitted with arbitrarily small error probability as the number of sent bits approaches infinity.\footnote{It is worth noting that the relationship to the noisy-channel coding theorem is just an analogy and the theorem cannot be applied to our setting. 
The reason is that the theorem holds when the number of sent bits approaches infinity, while we can only conduct finitely many CI-tests.}
%, where $R$ quantifies how much information each bit of $s$ contains.
This begs the question of what it would mean to have these \emph{redundant} bits added to $s$.
In a (literal) noisy channel, one could resend bits, but clearly redoing a CI-test does not give us additional information. 
%(as the \enquote{channel} applies exactly the same noise to the result).
Also, techniques like bootstrapping or cross-validation cannot help if the errors come from faithfulness violations.
%The theorem only guarantees the existence of such a coding scheme.
%But since in the scheme described above the rate can be decreased by adding \emph{redundant} independence tests to the messages, we will call conditional independence tests that contain no additional information about the identity of $G$ redundant.
The main goal of this work is to address the question of which tests are suitable to add redundancy to our encoding.
%\dominik{not clear whether there exists enough tests to enable reliable information transmission (of course depends on the error rate)}

\section{GRAPHICAL CRITERION FOR GA-REDUNDANT DEPENDENCES}
\label{sec:graphical_crit_dependences}
Building on insights from \citet{bouckaert1995bayesian}, we will now show a graphical criterion for dependence statements that follow in the situation of \cref{prop:markov_network_markovian,prop:protocol_graph_markovian}.
\begin{corollary}[\citet{bouckaert1995bayesian} Thm. 3.6, 3.10]
	\label{prop:coupling}
    Let $M$ be a Graphoid independence model over $\bV$, and $G$ be a graph constructed like in \cref{prop:markov_network_markovian,prop:protocol_graph_markovian}.
    Let $\bX, \bY, \bZ
    \subseteq \bV$ be disjoint with $\bX\neq\emptyset\neq \bY$.
    If $\bX, \bY$ are  coupled in $G$ given $\bZ$, then $\bX\not\ind\bY\mid \bZ$ is GA-redundant.
\end{corollary}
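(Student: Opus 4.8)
The plan is to reduce the statement to \citet{bouckaert1995bayesian}, Theorems~3.6 (undirected case) and~3.10 (directed case), which characterise exactly the dependence statements lying in the graphoid closure of a set of CI-statements in terms of his \emph{coupling} notion (\cref{def:coupling}). Concretely, I would (i)~check that the graph $G$ produced by \cref{prop:markov_network_markovian} (resp.\ \cref{prop:protocol_graph_markovian}) is the Markov network (resp.\ a minimal I-map) of $M$ in Bouckaert's sense, and that his ``derivable from the defining constraint list via the graphoid axioms'' coincides with our Graphoid-redundancy w.r.t.\ the corresponding $L$; and then (ii)~invoke his theorems. Part~(i) is bookkeeping: $G(L)$ has an edge $X-Y$ iff $X\not\ind Y\mid\bV\setminus\{X,Y\}$, which is exactly the pairwise-Markov definition of the Markov network; $G(\pi)$ is the minimal I-map obtained from the ordering $\pi$, as already shown in the proof of \cref{prop:protocol_graph_markovian} via \cref{lem:algo_give_causal_input_list}; and by \cref{def:graphoid_redundancy} a dependence $\bX\not\ind\bY\mid\bZ$ is Graphoid-redundant w.r.t.\ $L$ exactly when $\bX\not\ind_{M'}\bY\mid\bZ$ holds in every graphoid model $M'$ with $L\subseteq M'$, i.e.\ when it is in the graphoid closure of $L$.

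For self-containedness I would also give the direct argument in the undirected case, which avoids citing Bouckaert. Fix a graphoid $M'\supseteq L$ and assume, for contradiction, $\bX\ind_{M'}\bY\mid\bZ$. Let $X\in\bX$, $Y\in\bY$ be coupling witnesses, so $X-Y\in G(L)$ and $\Adj_{G(L)}(X)\subseteq\bX\cup\bY\cup\bZ$; set $\mathbf{R}:=(\bX\cup\bY\cup\bZ)\setminus\{X,Y\}$ and $\mathbf{S}:=\bV\setminus(\bX\cup\bY\cup\bZ)$. Iterated weak union and symmetry turn $\bX\ind_{M'}\bY\mid\bZ$ into $X\ind_{M'}Y\mid\mathbf{R}$. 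The edge $X-Y$ forces $(X\not\ind Y\mid\bV\setminus\{X,Y\})\in L\subseteq M'$, so $X\not\ind_{M'}Y\mid\bV\setminus\{X,Y\}$. If $\mathbf{S}=\emptyset$ then $\mathbf{R}=\bV\setminus\{X,Y\}$ and we are already done. If $\mathbf{S}\neq\emptyset$, then (since $X\notin\Adj_{G(L)}(X)$) $\Adj_{G(L)}(X)\subseteq\mathbf{R}\cup\{Y\}$, so $X$ is separated from $\mathbf{S}$ by $\mathbf{R}\cup\{Y\}$ in $G(L)$; by \cref{prop:markov_network_markovian} this separation is Graphoid-redundant, hence $X\ind_{M'}\mathbf{S}\mid\mathbf{R}\cup\{Y\}$. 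Contraction applied to $X\ind_{M'}Y\mid\mathbf{R}$ and $X\ind_{M'}\mathbf{S}\mid\mathbf{R}\cup\{Y\}$ gives $X\ind_{M'}\{Y\}\cup\mathbf{S}\mid\mathbf{R}$, and weak union then gives $X\ind_{M'}Y\mid\mathbf{R}\cup\mathbf{S}=\bV\setminus\{X,Y\}$ --- contradicting the $L$-dependence above. Hence $\bX\not\ind_{M'}\bY\mid\bZ$ in every such $M'$, i.e.\ the dependence is Graphoid-redundant.

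The directed case follows the same skeleton: relabel (using the symmetry axiom) so that the coupling edge is $X\to Y$ with $X\in\bX$, $Y\in\bY$; derive $X\ind_{M'}Y\mid\mathbf{R}$ by iterated weak union as above; and then, in place of one separation of $X$ from $\mathbf{S}$, use the remaining two coupling conditions --- $\PA_{G(\pi)}(Y)\subseteq\bX\cup\bY\cup\bZ$ and the existence of $\mathbf{Q}\subseteq\mathbf{R}$ with $\bZ\subseteq\mathbf{Q}$ and $X\perp_{G(\pi)-(X\to Y)}Y\mid\mathbf{Q}$ --- together with d-separations of $G(\pi)$ (Graphoid-redundant by \cref{prop:protocol_graph_markovian}) and the contraction, weak-union and intersection axioms, to force $X\ind_{M'}Y\mid\bV_Y\setminus\{X\}$, whose negation is the $L_\pi$-statement witnessed by the edge $X\to Y$. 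I expect this last step to be the main obstacle, and the reason I would in practice just cite \citet{bouckaert1995bayesian} for it: unlike the undirected case, $\mathbf{R}$ and $\bV_Y\setminus\{X\}$ are not nested, so one must simultaneously move ``extra'' (descendant-free) conditioning variables into and out of the conditioning set while keeping every indirect $X$--$Y$ path blocked --- which is precisely what the $\mathbf{Q}$-condition and the parent-set condition are for --- and getting the collider bookkeeping right is the delicate part.
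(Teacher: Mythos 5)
Your main route is the same as the paper's: after checking that $G(L)$ (resp.\ $G(\pi)$) is a minimal I-map of $M$ --- which the paper does via \citet{geiger1993logical} Thm.~12, resp.\ \cref{lem:algo_give_causal_input_list} together with \citet{verma1990causal} --- the paper likewise just invokes \citet{bouckaert1995bayesian} Thms.~3.6 and 3.10, so your part~(i) plus part~(ii) reproduces its proof. Your additional self-contained derivation for the undirected case is correct as written (weak union and symmetry reduce $\bX\ind_{M'}\bY\mid\bZ$ to $X\ind_{M'}Y\mid\mathbf{R}$; the separation $X\perp_{G(L)}\mathbf{S}\mid\mathbf{R}\cup\{Y\}$ follows from $\Adj_{G(L)}(X)\subseteq\mathbf{R}\cup\{Y\}$ and is Graphoid-redundant by \cref{prop:markov_network_markovian}; contraction and weak union then contradict the $L$-dependence $X\not\ind Y\mid\bV\setminus\{X,Y\}$), and it is a nice elementary supplement the paper does not provide; for the directed case you, like the paper, defer to Bouckaert, which is fine.
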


\begin{proof}[Proof for \cref{prop:coupling}]

    Let us first consider the case of DAGs.
    %By \cref{lem:algo_give_causal_input_list} we get
    %\begin{displaymath}
    %    X\ind \bV_X\setminus \mathbf{P}_X \mid \mathbf{P}_X
    %\end{displaymath}
    %for the smallest subset $\mathbf{P}_X\subseteq \bV_X$ such that this holds via Graphoid axioms and the graph $G(\pi)$ constructed contains an edge from every node in $\bV_X$ to $X$.
    %Note that we also get the dependencies 
    %\begin{displaymath}
    %    X\not\ind Y \mid \mathbf{P}_X \setminus\{Y\},
    %\end{displaymath}
    %for all $Y\in \mathbf{P}_X$, as the contrapositive would contradict the minimality of $\mathbf{P}_X$: by application of Graphoid axiom 5 we could then get 
    %\begin{displaymath}
    %    X\ind \bV_X\setminus (\mathbf{P}_X\setminus \{Y\} \mid \mathbf{P}_X \setminus \{Y\}.
    %\end{displaymath}
    %TODO do I even need the dependency base here? TODO argue that we can pply 3.10 from bouackert
    By application of \cref{lem:algo_give_causal_input_list}, Theorem 2, and Corollary 2 from \citet{verma1990causal}, we get that the graph $G(\pi)$ from \cref{prop:protocol_graph_markovian} is a minimal I-map of the independence model $M$.
    Let $\bX, \bY, \bZ \subseteq \bV$ be disjoint such that $\bX$ and $\bY$ are coupled given $\bZ$ in $G(\pi)$.
    By Theorem 3.10 by \citet{bouckaert1995bayesian} we get 
    \begin{displaymath}
        \bX\not\ind \bY\mid \bZ,
    \end{displaymath}
    i.e. that this dependence holds in every independence model that contains $L_\pi$.
    This means the dependence is GA-redundant with respect to the class of DAGs and $L_\pi$.
    
    Similarly, for undirected graphs we also get that $G(L)$ is a minimal I-map from Theorem 12 by \citet{geiger1993logical}.
    Then we can apply Theorem 3.6 from \citet{bouckaert1995bayesian} to get 
    \begin{displaymath}
        \bX\not\ind \bY\mid \bZ
    \end{displaymath}
    again, i.e. whenever $L$ is contained in the independence model we get the dependence.
\end{proof}

Further, this criterion is not only necessary but also sufficient in the scenario of \cref{prop:markov_network_markovian,prop:protocol_graph_markovian}.
\begin{corollary}[coupling is necessary and sufficient]
    \label{cor:coupling_necessary_and_sufficient}
    Let $M$ be a Graphoid independence model over $\bV$, and $G(\pi)$ be a graph constructed like in \cref{prop:protocol_graph_markovian} and
    $(X, Y, \bZ) \in \CI(\bV)$.
    Then $X, Y$ are coupled in $G(\pi)$ given $\bZ$ if and only if $X\not\ind Y\mid \bZ$ is GA-redundant with respect to $L_\pi$.
    
    Now let $G(L)$ be a graph constructed like in \cref{prop:markov_network_markovian}.
    Let $\bX, \bY, \bZ
    \subseteq \bV$ be disjoint with $\bX\neq\emptyset\neq \bY$.
    Then $\bX, \bY$ are  coupled in $G(L)$ given $\bZ$ if and only if $\bX\not\ind\bY\mid \bZ$ is GA-redundant with respect to $L$. 
\end{corollary}
\begin{proof}[Proof for \cref{cor:coupling_necessary_and_sufficient}]
    Similarly to before we can use \cref{lem:algo_give_causal_input_list} to get the statements 
    \begin{displaymath}
    X\ind \bV_X\setminus \mathbf{P}_X \mid \mathbf{P}_X,
    \end{displaymath}
    for every $X\in\bV$, where $\bV_X = \{V\in \bV\ : \pi(V) < \pi(X)\}$, and $\mathbf{P}_X$ is the smallest subset of $\bV_X$ such that $X\ind \bV_X\setminus \mathbf{P}_X \mid \mathbf{P}_X$ if such a set exists and else $\mathbf{P}_X = \bV_X$.  
    This also entails $X\not\ind Y\mid \mathbf{P}_X\setminus \{Y\}$ for all $Y\in \mathbf{P}_X$, which can be seen by constructing the following contradiction:
    suppose we have $X\ind Y\mid \mathbf{P}_X\setminus\{Y\}$.
    Together with the \emph{in}dependence above, Graphoid axiom 4 implies
    \begin{displaymath}
        X\ind \{Y\}\cup \bV_X \setminus \mathbf{P}_X \mid \mathbf{P}_X\setminus\{Y\}
        = X\ind \bV_X \setminus (\mathbf{P}_X \setminus \{Y\}) \mid \mathbf{P}_X\setminus\{Y\},
    \end{displaymath}
    which is a contradiction to the minimality of $\mathbf{P}_X$.
    In other words, we have the \emph{dependency base} \citep{bouckaert1995bayesian}.
    The first statement then follows from Theorem 3.11 from \citet{bouckaert1995bayesian}.

    Also like before, we get that $G(L)$ is a minimal I-map from Theorem 12 by \citet{geiger1993logical}.
    Then the second statement follows by applying Theorem 3.6 from \citet{bouckaert1995bayesian}. 
\end{proof}
Yet, this does not hold anymore if we consider multiple additional CI-statements, as the following example shows.
\begin{example}[iterated coupling]
    \label{ex:iterated_coupling}
    Consider an independence model that is Markovian and faithful to the graph in \cref{fig:iterating_coupling_fails}.
    Suppose we want to learn the graph with the method from \cref{prop:protocol_graph_markovian} with a valid causal ordering $\pi$.
    Now suppose we conduct the test $\CI(X, W\mid Z)$ afterwards.
    $X$ and $W$ are not coupled given $Z$ in $G(\pi)$, since there is no edge between $X$ and $W$.
    But now the tests in $L_\pi\cup\{X\not\ind W\mid Z\}$ imply $X\not\ind Y\mid Z$.
    This can be seen as follows:
    from \cref{prop:protocol_graph_markovian} we get $X\ind W\mid \{X, Y\}$, since an \emph{in}dependence follows from $L_\pi$ if the nodes are separated in the graph.
    We will now derive a contradiction.
    Suppose we would have $X\ind Y\mid Z$.
    Then we could apply Graphoid axiom 4 to $X\ind W\mid \{Z, Y\}$ and $X\ind Y\mid Z$ to get $X \ind \{Y, W\} \mid Z$.
    Using axiom 3 we get $X\ind W\mid Z$.
    But we have conducted the respective test and found $X\not\ind W\mid Z$.
    So $X\not\ind Y\mid Z$ follows.
    But $X$ and $Y$ are not coupled given $Z$ in $G(\pi)$.
    This means the dependence only follows from $L_\pi\cup\{W\not\ind X\mid Z\}$, not from $L_\pi$ alone.
    Particularly, coupling is not a necessary criterion for GA-redundancy anymore if we conduct more tests than the ones in $L_\pi$.
   \begin{figure}[htp]
		\centering
		\begin{tikzpicture}
			\node[obs] (X) {$X$} ;
			\node[obs, right=of X] (Z) {$Z$} edge[<-] (X) ;
            \node[obs, right=of Z] (Y) {$Y$} edge[->] (Z) ;
            \node[obs, right=of Y] (W) {$W$} edge[->] (Y) ;
			
		\end{tikzpicture}
		\caption{In the setting of \cref{prop:protocol_graph_markovian}, $X\not\ind Y\mid Z$ only follows from $L_\pi\cup\{X\not\ind W\mid Z\}$, not from $L_\pi$ alone.}
		\label{fig:iterating_coupling_fails}
	\end{figure}
\end{example}
This highlights the utility of \cref{prop:sufficient_cond_non_graphoid} and especially \cref{cor:iterated_sufficient_redundancy}, which we will see later.
If we want to conduct multiple redundant tests, a sufficient criterion for GA-redundancy is not enough to exclude all GA-redundant tests.
On the other hand, the criterion in \cref{prop:sufficient_cond_non_graphoid} may miss some PGM-redundancies, yet all tests from this criterion are guaranteed to be PGM-redundant.

\section{ADDITIONAL REMARKS AND EXAMPLES}
\label{sec:additional_examples}
\subsection{Examples}
\begin{example}[Wrong Collider-Structure]
	\label{ex:wrong_v_structure}
	In this example, we assume we want to find a graph using the PC algorithm \cite{pearl2009causality}.
	Consider the family of graphs depicted in \cref{fig:wrong_v_structure_gt}.
	Assume there are nodes $Z$ and $X_i$ for $i=0, \dots, k\in\N_{>1}$.
	Further assume all CI-test results are Markovian and faithful to the graph, except for 
	\begin{displaymath}
		X_0\ind Z.
	\end{displaymath}
	Then the algorithm would wrongly detect a collider structure between $X_0, X_1$ and $Z$, as $X_1$ is not in the separating set of $X_0$ and $Z$.
	Then, by application of Meek rules \citep{meek1995causal}, this orientation propagates along the path.
	As in the true graph, there is no collider between $X_2, X_1$ and $X_0$, the triplet will not be oriented as collider, and by Meek rule R1 the edge $X_1-X_2$ will become $X_1\to X_2$.
	The same argument holds for all further $X_{i+1}, X_i$ with $i>2$, which will cause the algorithm to orient the whole chain of $X_i$ the wrong way around, as visualised in \cref{fig:wrong_v_structure_res}.
	In other words, a single false CI-statement might cause $k+1$ wrongly directed edges.

    If the purpose of the graphical model is simply a concise representation of the CI-statements in the data, this might be acceptable.
    But if the model is interpreted, e.g., as a causal model and it is used in some downstream task, this might be worrisome.
    Although graph discovery is already algorithmically expensive, in such cases, a practitioner might be willing to incur an additional computational overhead to make the results more robust.
	\begin{figure}[htp]
		\centering
        \subfloat[Ground truth graph]{
        \label{fig:wrong_v_structure_gt}
		\begin{tikzpicture}
			\node[obs] (X0) {$X_0$} ;
			\node[obs, right=of X0] (Z0) {$Z$} edge[->] (X0) ;
			%\node[obs, above left=of Z0] (Z1) {$Z_1$} edge[->] (Z0) ;
			%\node[obs, above =of Z0] (Z2) {$Z_2$} edge[->] (Z0) ;
			\node[obs, left=of X0] (X1) {$X_1$} edge[->] (X0) ;
			%\draw (X0) edge[->, tabgreen, bend left=10]  (X1);
			%\node[obs,  above right=of X1] (Y0) {$Y_1$} edge[->] (X1) ;
			\node[obs, left =of X1] (X2) {$X_2$} edge[->] (X1) ;
			%\draw (X1) edge[->, tabred, bend left=10]  (X2);
			%\draw (X2) edge[->]  (Y0);
			%\node[obs,  above right=of X2] (Y1) {$Y_2$} edge[->] (X2) ;
			\node[obs, left =of X2] (X3) {$X_3$} edge[->] (X2) ;
			%\draw (X2) edge[->, tabred, bend left=10]  (X3);
			%\draw (X3) edge[->]  (Y1);
			\node[left=of X3] (dots) {$\hdots$} edge[->](X3);
			%\draw (X3) edge[->, tabred, bend left=10]  (dots);
            \draw (X1) edge[->, bend left=20]  (Z0);
		\end{tikzpicture}
        }
        
        \subfloat[Wrongly inferred graph]{
        \label{fig:wrong_v_structure_res}
		\begin{tikzpicture}
			\node[obs] (X0) {$X_0$} ;
			\node[obs, right=of X0] (Z0) {$Z$};
			%\node[obs, above left=of Z0] (Z1) {$Z_1$} edge[->] (Z0) ;
			%\node[obs, above =of Z0] (Z2) {$Z_2$} edge[->] (Z0) ;
			\node[obs, left=of X0] (X1) {$X_1$} edge[<-, tabgreen] (X0) ;
			%\draw (X0) edge[->, tabgreen, bend left=10]  (X1);
			%\node[obs,  above right=of X1] (Y0) {$Y_1$} edge[->] (X1) ;
			\node[obs, left =of X1] (X2) {$X_2$} edge[<-, tabred] (X1) ;
			%\draw (X1) edge[->, tabred, bend left=10]  (X2);
			%\draw (X2) edge[->]  (Y0);
			%\node[obs,  above right=of X2] (Y1) {$Y_2$} edge[->] (X2) ;
			\node[obs, left =of X2] (X3) {$X_3$} edge[<-, tabred] (X2) ;
			%\draw (X2) edge[->, tabred, bend left=10]  (X3);
			%\draw (X3) edge[->]  (Y1);
			\node[left=of X3] (dots) {$\hdots$} edge[<-, tabred](X3);
			%\draw (X3) edge[->, tabred, bend left=10]  (dots);
            \draw (X1) edge[<-, tabgreen, bend left=20]  (Z0);
		\end{tikzpicture}
        }
		\caption{When the collider-structure $X_0\to X_1\leftarrow Z$ is wrongly identified, all edges $X_{i+1}\to X_i$ will be oriented the wrong way around.}
		\label{fig:graph_v_structure_wrong}
	\end{figure}
	
\end{example}

\begin{example}[P- but not GA-redundant]
    \label{ex:prob_but_not_graphoid}
    In this example, we will see a case where some CI-statements are P-redundant but not GA-redundant. Suppose we have random variables $\bV = \{X, Y, Z, W\}$ and let
    \begin{displaymath}
        L = \{X\ind Y\mid \{Z, W\},\quad X\ind Y\mid \emptyset,\quad Z\ind W\mid X,\quad Z\ind W\mid Y\}.
    \end{displaymath}
    \citet{studeny1992conditional} (Proposition 5) showed that for all probability distributions, this entails also
    \begin{displaymath}
        X\ind Y\mid Z,\quad X\ind Y\mid W,\quad Z\ind W\mid \{X, Y\},\quad  Z\ind W\mid \emptyset.
    \end{displaymath}
    But none of these statements follows from the Graphoid axioms, so they are P-redundant but not GA-redundant.
\end{example}

\begin{example}[PC results can be non-Markovian]
	\label{ex:pc_not_markov}
	Suppose we have an independence model that is Markovian and faithful to the graph in \cref{fig:pc_not_markov_gt}, except for the dependence $X_1\not\ind X_3$ and the \emph{in}dependence $X_1\ind X_3\mid Y$.
	Further, suppose we want to recover this graph with the PC algorithm.
	In the first round of the algorithm, it will conduct all marginal independence tests.
    This will give us the intermediate skeleton in \cref{fig:pc_not_markov_intermediate}.
	In the following rounds, the algorithm will conduct all tests with conditioning set of size one and two, which will result in the graph in \cref{fig:pc_not_markov_undirected}.
	But now in the orientation phase, we have conflicting evidence for the existence of colliders, and depending on how exactly the algorithm resolves them, we will get different results.
	For example, in the default implementation in \texttt{causal-learn} \citep{zheng2024causal}, the algorithm simply picks the first orientation according to the (non-predetermined) ordering in which it conducts the CI-tests.
	Suppose the algorithm first checks whether the unshielded triplet $X_1-Y-X_2$ is a collider.
    This is the case, as $Y$ is not in the separation set of $X_1$ and $X_2$, which is the empty set.
    The same holds for $X_2$ and $X_3$.
    If the algorithm sticks with these orientations, it ignores that $Y$ is indeed a member of all separating sets of $X_1$ and $X_3$.
    Therefore, it would output the graph in \cref{fig:pc_not_markov_gt}.
    But note that this graph does imply $X_1\ind X_3$, which does not hold in our independence model.
    It can be checked (e.g., with \texttt{Z3}) that the CI-tests that we used are indeed a valid Graphoid.	
	\begin{figure}[htp]
		\centering
		\subfloat[PC will output this graph, even when the independence model contains $X_1\not\ind X_3$ and $X_1\ind X_3\mid Y$. But then the graph is not Markovian to the given input.]{
				\begin{tikzpicture}
		\node[obs] (Y) {$Y$} ;
		\node[obs, above right=of Y] (X2) {$X_3$} edge[->] (Y) ;
		\node[obs, above =of Y] (X1) {$X_2$} edge[->] (Y) ;
		\node[obs, above left=of Y] (X0) {$X_1$} edge[->] (Y) ;
	\end{tikzpicture}
			\label{fig:pc_not_markov_gt}
		}
		\hspace{.5cm}
		\subfloat[Intermediate skeleton that PC finds after conducting all marginal independence tests.]{
			\begin{tikzpicture}
		\node[obs] (Y) {$Y$} ;
		\node[obs, above right=of Y] (X2) {$X_3$} edge[-] (Y) ;
		\node[obs, above =of Y] (X1) {$X_2$} edge[-] (Y) ;
		\node[obs, above left=of Y] (X0) {$X_1$} edge[-] (Y) ;
        \draw (X0) edge[-, bend left=80] (X2);
	\end{tikzpicture}
			\label{fig:pc_not_markov_intermediate}
		}
		\hspace{.5cm}
		\subfloat[Final skeleton that PC finds before orienting the edges.]{
			\begin{tikzpicture}
		\node[obs] (Y) {$Y$} ;
		\node[obs, above right=of Y] (X2) {$X_3$} edge[-] (Y) ;
		\node[obs, above =of Y] (X1) {$X_2$} edge[-] (Y) ;
		\node[obs, above left=of Y] (X0) {$X_1$} edge[-] (Y) ;
	\end{tikzpicture}
			\label{fig:pc_not_markov_undirected}
		}
		\caption{This example illustrates how the PC algorithm arrives at a graph that contradicts a dependence statement used as input to the algorithm.}
		\label{fig:pc_not_markov}
	\end{figure}
\end{example}

\subsection{Iterated sufficient criterion for PGM-redundancy}
Note that \cref{prop:sufficient_cond_non_graphoid} requires a graph $G$ to be Markovian to a set of CI-statements $L$.
If we want to find several PGM-redundant CI-statements like in the experiment shown in \cref{fig:graphoid_vs_graphical_sachs}, we can only apply \cref{prop:sufficient_cond_non_graphoid} as long as the additional tests imply dependences.
If one test returns an \emph{in}dependence and we continue to wrongly apply \cref{prop:sufficient_cond_non_graphoid}, we could end up with CI-statements that follow from the previously conducted tests, as the following example shows.
\begin{example}
        Let $G$ be the graph given in \cref{fig:iterating_suff_fails}.
        Suppose we have $L_\pi$ as in \cref{prop:protocol_graph_markovian} with respect to the ordering $X, Y, Z, W$.
        From \cref{prop:sufficient_cond_non_graphoid} we know that $X\not\ind Z\mid W$ is PGM-redundant.
        Suppose now, we conduct this additional test and actually find $X\ind Z\mid W$.
        If now we would try to read further PGM-redundant CI-statements from \cref{prop:sufficient_cond_non_graphoid}, we might find statements that follow from $L_\pi$ in conjunction with $X\ind Z\mid W$.
        To see this, note that also $X\not\ind Z\mid \emptyset$ would be PGM-redundant according to the criterion \cref{prop:sufficient_cond_non_graphoid}.
        But if we already have $X\ind Z\mid W$, also $X\ind Z\mid \emptyset$ follows.
        This can be seen as follows: 
        According to \cref{prop:protocol_graph_markovian} we can read off $X\ind W\mid Z$ from $G$.
        Then we can apply Graphoid axiom 4 and get
        \begin{displaymath}
            X\ind Z \mid \{W\}\cup \emptyset\ \land \ X\ind W\mid \{Z\}\cup \emptyset \implies X\ind \{W, Z\} \mid \emptyset,
        \end{displaymath}
        and by application of Graphoid axiom 2, we get the required statement.
        \begin{figure}[htp]
		\centering
		\begin{tikzpicture}
			\node[obs] (X) {$X$} ;
			\node[obs, right=of X] (Y) {$Y$} edge[-] (X) ;
            \node[obs, right=of Y] (Z) {$Z$} edge[-] (Y) ;
            \node[obs, right=of Z] (W) {$W$};
			
		\end{tikzpicture}
		\caption{Both $X\not\ind Z\mid W$ and $X\not\ind Z$ are PGM-redundant. But if we conduct a test and find $X\ind Z\mid W$ also $X\ind Z$ follows.}
		\label{fig:iterating_suff_fails}
	\end{figure}
\end{example}
But the following corollary states that we can continue in a similar fashion if we check separations with the graph $G'$ from the proof of \cref{prop:sufficient_cond_non_graphoid}.

\begin{corollary}[Iterated PGM-redundancy]
\label{cor:iterated_sufficient_redundancy}
Let $M$ be the independence model of a positive distribution, $L$ be a set of CI-statements and $s:=(X, Y, \bZ) \in \CI(\bV)$ with $(X\not\ind Y\mid \bZ)\not\in L$.
	Let $G$ be a graphical model %over $\bV$ 
    such that $G$ is Markovian to $L$ and $X\not\perp_{G} Y\mid \bZ$.
	Let $G'$ be a graphical model like in the proof of \cref{prop:sufficient_cond_non_graphoid}, i.e. a graphical model such that \emph{separation} of (or given) a node $W$ that has been copied is defined as the usual separation of (or given) the set of its $k\in \N$ copies $\{W_1, \dots, W_k\}$.
    Further let $G'$ be Markovian to $L$ and $X\not\perp_{G'} Y\mid \bZ$.
	If there is no $(A \not\ind B\mid \bC) \in L$ such that $A$ is coupled over $s$ with $B$ given $\bC$, then $s$ is PGM-redundant given $L$.

    Further, the graph $G'$ constructed in the proof of \cref{prop:sufficient_cond_non_graphoid} is Markovian (in terms of the above definition of separation) to $L\cup \{\lnot s\}$.
\end{corollary}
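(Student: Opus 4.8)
The plan is to obtain both claims from the construction already performed in the proof of \cref{prop:sufficient_cond_non_graphoid}. Recall that that proof takes a graph $G$ which is Markovian to $L$, satisfies $X\not\perp_G Y\mid\bZ$, and carries no dependence of $L$ coupled over $s$, and produces a ``split'' graph --- every node $W$ lying on an $X$--$Y$ path active given $\bZ$ is replaced by copies $W_1,\dots,W_k$, grouped so that one group attaches only to $X$'s side of the path and the other only to $Y$'s --- together with a positive distribution $P''$ whose independence model realizes every CI-statement of $L$ but in which $X\ind Y\mid\bZ$. I would prove the second assertion of the corollary first, since it is exactly what makes iterating the first assertion meaningful: the original $G$ is not Markovian to $L\cup\{\lnot s\}$, because it entails the dependence $X\not\perp_G Y\mid\bZ$, which $\lnot s$ contradicts, so one needs a new graph that is.

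For the second assertion: adopt the stated convention that, in $G'$, separation of or given a copied node is separation of or given the whole set of its copies. Under this convention $X\perp_{G'}Y\mid\bZ$ holds --- that is what the splitting was designed to achieve --- so $G'$ no longer entails the dependence $s$. Moreover $G'$ is Markovian to $P''$, so $M_{G'}\subseteq M_{P''}$; hence every independence entailed by $G'$ holds in $P''$, and since $P''$ realizes every statement of $L$ together with the independence $X\ind Y\mid\bZ$ (which is $\lnot s$), no independence entailed by $G'$ can contradict a dependence in $L\cup\{\lnot s\}$. That is precisely ``$G'$ is Markovian to $L\cup\{\lnot s\}$''.

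For the first assertion I would replay the proof of \cref{prop:sufficient_cond_non_graphoid} verbatim, with the given split-node graph $G'$ in the role of $G$. The only features of the input graph that proof uses are: Markovianity to $L$; $X\not\perp Y\mid\bZ$ in that graph; absence of a dependence of $L$ coupled over $s$; and that the on-path nodes can be split and a positive distribution on the enlarged graph can be assembled from the distribution realizing $M$ by copying Markov kernels. A split-node graph supports a genuine $d$-separation calculus as soon as each copy group is regarded as a single vector-valued node --- which is exactly how the distribution treats it --- so the notions of ``$s$-active path'' and ``coupled over $s$'' are well defined in $G'$, a second round of splitting is just nested splitting, and the kernel-copying step nests as well. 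One therefore obtains once more a Graphoid independence model (that of the new $P''$) that contains $L$ but not $s$, so $s$ is not Graphoid-redundant; combined with $X\not\perp_{G'}Y\mid\bZ$ and $G'$ Markovian to $L$, which give the graphical redundancy, this yields that $s$ is purely graphically redundant given $L$.

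The step I expect to be the main obstacle is the bookkeeping underlying the previous paragraph: one has to verify that collapsing a copy group $\{W_1,\dots,W_k\}$ into a single ``super-node'' for separation queries agrees with ordinary $d$-separation in the fully expanded graph, for all queries in which these copies occur together, so that the coupling-over-$s$ hypothesis checked in $G'$ really governs the same paths as it would in the expansion; and that nesting the split and the kernel copying preserves both Markovianity to $L$ and the target unfaithfulness $X\ind Y\mid\bZ$. Once this is in place, the rest is a direct transcription of the argument for \cref{prop:sufficient_cond_non_graphoid}. The preceding example shows why checking the criterion against $G'$ rather than against the original graph is essential.
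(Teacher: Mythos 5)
Your proposal is correct and follows essentially the same route as the paper, whose entire proof of this corollary is the one-line remark that it "works analogously to the proof of \cref{prop:sufficient_cond_non_graphoid}": you replay that construction with $G'$ (copy groups treated as vector-valued nodes) in place of $G$, and read off the second assertion from the fact that the split graph satisfies $X\perp_{G'}Y\mid\bZ$ and is Markovian to the constructed distribution $P''$, which realizes $L\cup\{\lnot s\}$. If anything, you spell out more of the nesting/bookkeeping than the paper does.
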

\begin{proof}[Proof for \cref{cor:iterated_sufficient_redundancy}]
    The proof works analogously to the proof of \cref{prop:sufficient_cond_non_graphoid}. 
\end{proof}

\subsection{Relationship between coupling over nodes and coupling}
The similarity in names between \cref{def:s_coupling,def:coupling} implies that these concepts are related.
Obviously, they slightly differ in their scope, as coupling is defined for three sets of variables, while decoupling is concerned with two triplets consisting of two variables and a set of variables, respectively.
%Further, coupling is defined only with respect to a graph, while $s$-coupling also depends on a set of CI-statements.
In the following, we will see how these notions can coincide.
%The set of CI-statements $L$ in \cref{lem:coupling_implies_s_coupling} is motivated by the idea that in order to find a minimal I-map one probably has to try to separate each edge at least once.
\begin{lemma}[coupling implies coupling over themselves]
\label{lem:coupling_implies_s_coupling}
    Let $G$ be a DAG or undirected graph over $\bV$. % and let $L$ contain a dependence $A\not\ind B\mid \bC$ for $A, B\in \bV$ whenever $G$ contains an edge between them, while $\bC\subseteq\bV\setminus\{A, B\}$ is arbitrary.
    If $(X, Y, \bZ)\in \CI(\bV)$ are coupled then $(X, Y, \bZ)$ are coupled over $(X, Y, \bZ)$. %then there are $(A, B, \bC)\in \CI(\bV)$ with $(A\not\ind B\mid \bC) \in L$ such that $A$ and $B$ are decoupled by $(X, Y, \bZ)$ given $\bC$.
\end{lemma}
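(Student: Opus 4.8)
The plan is to unwind \cref{def:coupling} and \cref{def:s_coupling} for the diagonal choice $(A, B, \bC) := (X, Y, \bZ)$ and $s := (X, Y, \bZ)$, at which point \cref{lem:coupling_implies_s_coupling} splits into two elementary facts: coupling forces $X$ and $Y$ to be adjacent in $G$, and for this particular $s$ no path between $X$ and $Y$ can be $s$-active given $\bZ$.

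First I would observe that coupling of $(X, Y, \bZ)$ entails $X \in \Adj(Y)$: in the undirected case \cref{def:coupling} explicitly requires $(X - Y) \in G$, and in the DAG case it requires $(X \to Y) \in G$ or $(Y \to X) \in G$; in both cases $X$ and $Y$ are adjacent. I would then note that the single-edge path between $X$ and $Y$ is active given $\bZ$: since $(X, Y, \bZ) \in \CI(\bV)$ the triple is disjoint, so $X, Y \notin \bZ$, and the one-edge path has no intermediate nodes (undirected case) and no colliders (DAG case), hence it is active given $\bZ$ in either setting. So there is at least one active path between $X$ and $Y$ given $\bZ$, which is the first half of $(X, Y, \bZ)$ being coupled over $s$ in the sense of \cref{def:s_coupling}.

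Second I would show that no path from $X$ to $Y$ is $s$-active given $\bZ$. Let $p = (X, \dots, Y)$ be an arbitrary path between $X$ and $Y$. If $p$ is not active given $\bZ$, it fails the first clause of the definition of $s$-active and we are done. If $p$ is active given $\bZ$, then $p$ is itself a sub-path of the form $(X, \dots, Y)$ that is active given $\bZ$, so $p$ fails the second clause. Either way $p$ is not $s$-active, hence there are no $s$-active paths between $X$ and $Y$ given $\bZ$, which is the second half of \cref{def:s_coupling}. Combining the two parts, $X$ and $Y$ are coupled over $s = (X, Y, \bZ)$ given $\bZ$, i.e., $(X, Y, \bZ)$ is coupled over itself.

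There is no real obstacle here; the only points worth stating carefully are the conventions that a path counts as an (improper) sub-path of itself, so that an active path between $X$ and $Y$ is automatically disqualified from being $s$-active, and that it is precisely the disjointness built into $\CI(\bV)$ that places the endpoints outside $\bZ$ and makes the direct edge active. I would keep the undirected and DAG arguments parallel throughout to keep the write-up short.
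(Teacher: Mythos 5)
Your proof is correct, and it reaches the conclusion by a slightly leaner route than the paper. The paper's own proof uses the full strength of \cref{def:coupling}: from the singleton sets it deduces $\mathbf{Q}=\bZ$ and $X\perp_{G-(X\to Y)}Y\mid\bZ$ (DAG case), respectively $\Adj(X)\subseteq\{Y\}\cup\bZ$ (undirected case), to conclude that the direct edge is the \emph{only} active path between $X$ and $Y$ given $\bZ$, and then notes that this one path is not $s$-active. You instead extract from coupling only the adjacency of $X$ and $Y$ (which, together with disjointness of $(X,Y,\bZ)\in\CI(\bV)$, gives an active path) and dispose of \emph{all} paths at once: any $X$--$Y$ path that is active given $\bZ$ is its own active sub-path $(X,\dots,Y)$, so for the diagonal choice $s=(X,Y,\bZ)$ no $s$-active path can exist, coupled or not. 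This buys a marginally stronger statement -- $X$ and $Y$ are coupled over $(X,Y,\bZ)$ whenever they are adjacent, indeed whenever $X\not\perp_G Y\mid\bZ$ -- while the paper's route keeps the argument tied to the coupling hypothesis it is relating to. Both arguments rely on the same convention you state explicitly, namely that a path counts as an improper sub-path of itself; the paper uses it implicitly when it declares the single-edge path not $(X,Y,\bZ)$-active "since it contains $X$ and $Y$ and is active given $\bZ$", so your convention is consistent with theirs.
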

\begin{proof}
    Suppose $X$ and $Y$ are coupled given $\bZ$.
    %If $G$ is undirected this means there is an edge in $G$ connecting them and (w.l.o.g.) $\Adj(X) \subseteq \{X, Y\} \cup \bZ$. 
    Then by definition, there is an edge between $X$ and $Y$.
    If $G$ is a DAG, we have $X\perp_{G-(X\to Y)} Y\mid \bZ$ (since $\bX$ and $\bY$ are singleton sets and thus $\mathbf{Q} = \bZ$).
    But this means, the edge $X\to Y$ is the only active path between $X$ and $Y$ given $\bZ$.
    Analogously, if $G$ is an undirected graph, we have $\Adj(X)\subseteq \{Y\}\cup \bZ$ (or symmetrically for $\Adj(Y)$).
    Again, this means the edge $X-Y$ is the only active path given $\bZ$.
    But this path is not an $(X, Y, \bZ)$-active path, since it contains $X$ and $Y$ and is active given $\bZ$.
    So, $(X, Y, \bZ)$ are coupled over $(X, Y, \bZ)$.
    
    %and by our assumption this means we have $(X\not\ind Y\mid \bZ')\in L$ for some $\bZ'\subseteq \bV\setminus\{X, Y\}$.
    %But then $X$ and $Y$ are $(X\not\ind Y\mid \bZ)$-coupled given $\bZ'$.
\end{proof}

The question that remains is whether coupling and coupling over $s$ encode the same concept.
The next example shows a case where they differ.
\begin{example}[coupling and coupling over nodes differ]
        Let $G$ be the graph in \cref{fig:coupling_vs_s_coupling}.
        Then $A$ and $B$ are coupled over $(X\not\ind Y\mid \emptyset)$ given $\emptyset$, since there is one active path $(A, X, Z, Y, B)$ but also the sub-path $(X, Z, Y)$ is active given $\emptyset$.
        So in particular, if $(A\not\ind B\mid \emptyset)\in L$, we cannot conclude that $X\not\ind Y$ is PGM-redundant.
        But neither $A$ and $B$ nor $X$ and $Y$ are coupled given $\emptyset$, as they are not adjacent.
        %So this is different from the setting in \cref{lem:couplding_and_s_coupling_coincide}.
        
        \begin{figure}[htp]
		\centering
		\begin{tikzpicture}
			\node[obs] (A) {$A$} ;
			\node[obs, right=of A] (X) {$X$} edge[-] (A) ;
            \node[obs, right=of X] (V) {$Z$} edge[-] (X) ;
            \node[obs, right=of V] (Y) {$Y$} edge[-] (V) ;
            \node[obs, right=of Y] (B) {$B$} edge[-] (Y) ;
			
		\end{tikzpicture}
		\caption{$A$ and $B$ are \emph{coupled over} $(X, Y, \emptyset)$ given $\emptyset$ (as defined in \cref{def:s_coupling}) but neither $A$ and $B$ nor $X$ and $Y$ are \emph{coupled} given $\emptyset$ (as defined in \cref{def:coupling}) as they are not adjacent.}
		\label{fig:coupling_vs_s_coupling}
	\end{figure}
\end{example}
Since coupling can be a sufficient and necessary condition in certain cases, as \cref{cor:coupling_necessary_and_sufficient} shows, it is natural to ask whether \cref{prop:sufficient_cond_non_graphoid} is also necessary in this setting.
The following example shows that this is not the case.
\begin{example}[coupling over nodes is not necessary]
    \label{ex:s-coupling_not_necessary}
    Suppose we have an independence model that is Markovian and faithful to the graph $G$ in \cref{fig:s_coupling_not_necessary}.
     If we estimate a graph using the procedure in \cref{prop:protocol_graph_markovian} with the ordering $\pi=(W, X, Y, Z)$ and tests $L_\pi$, we will arrive at the graph $\hat G$. 
     Now $G$ entails $X\ind Y\mid \{W, Z\}$ while $\hat G$ entails the opposite.
     So clearly, this statement is PGM-redundant with respect to $L_\pi$.
     Yet, $X$ and $Y$ are coupled over $(X, Y, \{W, Z\})$ given $W$ in $\hat G$.
     So \cref{prop:sufficient_cond_non_graphoid} does not detect this PGM-redundancy.
     
\begin{figure}[htp]
	\centering
    \subfloat[Graph $G$]{
    \begin{tikzpicture}
		\node[obs] (W) {$W$} ;
		\node[obs, below left=of W] (X) {$X$} edge[-] (W) ;
		\node[obs, below right =of W] (Y) {$Y$} edge[-] (W) ;
		\node[obs, below right=of X] (Z) {$Z$} edge[-] (X) ;
		\draw (Y) edge[-] (Z);
		
	\end{tikzpicture}
    }
    \hspace{2cm}
    \subfloat[Graph $\hat G$]{
	\begin{tikzpicture}
		\node[obs] (W) {$W$} ;
		\node[obs, below left=of W] (X) {$X$} edge[<-] (W) ;
		\node[obs, below right =of W] (Y) {$Y$} edge[<-] (W) ;
		\node[obs, below right=of X] (Z) {$Z$} edge[<-] (X) ;
		\draw (Y) edge[->] (Z);
        \draw (X) edge[->] (Y);
		
	\end{tikzpicture}
    }

	\caption{The \emph{in}dependence $X\ind Y\mid \{W, Z\}$ in $G$ is PGM-redundant given the test from \cref{prop:protocol_graph_markovian}, since $G$ and $\hat G$ differ in this statement but not in the previous tests. But $X$ and $Y$ are coupled over ($X$, $Y$, $\{W, Z\})$ given $W$, so \cref{prop:sufficient_cond_non_graphoid} cannot be applied.}
	\label{fig:s_coupling_not_necessary}
\end{figure}
\end{example}

\section{THE MMD ALGORITHM}
\label{sec:mmd}
In the following, we will propose an assumption and an algorithm that can correct violations of (a) in \cref{lem:sp} in some cases, but at the price of being even more expensive than the SP algorithm.
The main purpose of this is not to propose a practical alternative to SP, but rather to demonstrate that the utility of redundant CI-statements is not tied to a specific algorithm or assumption.
\begin{assumption}[Minimum Markov Distance]
\label{ass:mmd}
	Let $\cG$ be a set of graphical models, $G^*\in\cG$, and $M$ an independence model. 
	We say $G^*$ and $M$ fulfil the \emph{minimum Markov distance assumption} (MMD) iff
	\begin{displaymath}
		G^* \in \argmin_{G\in\cG} \MD(G, M).
	\end{displaymath}
\end{assumption}
%Not surprising, assumption \ref{ass:mmd} is weaker than  faithfulness.
Not surprising, assumption \ref{ass:mmd} is weaker than  faithfulness as \cref{prop:faithfulness_implies_mmd} shows.
\todo{Split senctences and put between examples?}
\begin{proposition}
\label{prop:faithfulness_implies_mmd}
\todo{Introduce $\cG$ again?}
	Let $G^*\in\cG$ be a graphical model, and $M$ be an independence model that is Markovian and faithful to $G^*$.
	Then $G^*$ and $M$ also fulfil MMD.
	Further, there are independence models $M$ that fulfil MMD 
    relative to $G$
    but are not Markovian and faithful to any graph $G\in \cG$.
\end{proposition}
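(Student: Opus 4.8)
The plan is to treat the two claims separately, as both are short.

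\textbf{First claim.} I would start from the definitions of Markovian and faithful: if $M$ is Markovian to $G^*$ then $M_{G^*}\subseteq M$, and if it is faithful then $M\subseteq M_{G^*}$, so the two independence models coincide, $M_{G^*}=M$. Plugging this into the definition of the Markov distance (with $S=\CI(\bV)$, since there is no subscript),
\[
\MD(G^*,M)=\MD(M_{G^*},M)=\MD(M,M)=\sum_{s\in\CI(\bV)}\mathbb{I}\!\left[(s\in M)\neq(s\in M)\right]=0 .
\]
Because $\MD(G,M)\ge 0$ for every $G$ (it is a sum of indicators), $0$ is the global minimum, hence $G^*\in\arg\min_{G\in\cG}\MD(G,M)$; since $G^*\in\cG$ by hypothesis, $G^*$ and $M$ satisfy \cref{ass:mmd}.

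\textbf{Second claim.} Here I would exhibit a concrete witness rather than argue abstractly. Take $\bV=\{X_1,X_2,X_3,X_4\}$, let $\cG$ be the set of all DAGs over $\bV$, and let $M$ be the graphoid independence model that is Markovian and faithful to the undirected four-cycle $X_1-X_2-X_3-X_4-X_1$ (realised, e.g., by an appropriate Gaussian distribution), whose only conditional independences are $X_1\ind X_3\mid\{X_2,X_4\}$ and $X_2\ind X_4\mid\{X_1,X_3\}$. It is classical that no DAG has this independence model \citep{lauritzen1996graphical}: any such DAG would need skeleton equal to the four-cycle, and every acyclic orientation of a four-cycle puts a collider on one of the non-adjacent pairs $\{X_1,X_3\}$ or $\{X_2,X_4\}$, which contradicts the corresponding conditioning-on-the-collider independence required by $M$. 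Hence $M\neq M_G$ for every $G\in\cG$, i.e.\ $M$ is not Markovian and faithful to any $G\in\cG$. On the other hand $\cG$ is finite, so $\arg\min_{G\in\cG}\MD(G,M)$ is non-empty, and any $G^*$ in it gives a graphical model relative to which $M$ fulfils \cref{ass:mmd}.

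Neither part is really difficult; the only step needing a little care is the justification that the four-cycle independence model is not DAG-representable, which I would either cite directly from \citet{lauritzen1996graphical} or settle with the short collider argument sketched above. If one prefers to avoid relying on a named example, one can replace it by any independence model known not to be a DAG model --- the argument only uses that such a model exists, together with the finiteness of $\cG$.
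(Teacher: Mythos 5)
Your proof of the first claim is exactly the paper's: Markov plus faithful gives $M=M_{G^*}$, hence $\MD(G^*,M)=0$, and non-negativity of the Markov distance makes $G^*$ a minimizer. For the second claim you take a genuinely different route. The paper points back to its \cref{ex:mmd_holds_sp_not}: a DAG $G$ over four nodes together with a model $M$ that is faithful to $G$ except for two unfaithful independences, for which it asserts (without detailed verification) that $\MD(G,M)=2$ is minimal, and then argues no DAG can be Markovian and faithful to $M$ because faithfulness would force the edge $X_1\!-\!X_2$ to be absent while the dependence $X_1\not\ind X_2$ would then violate Markovianity. You instead pick the chordless four-cycle independence model, use the classical collider argument to show it is not the independence model of any DAG, and then observe that over the finite class $\cG$ the $\arg\min$ is automatically non-empty, so \emph{some} $G^*\in\cG$ fulfils MMD with this $M$. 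Your version is more self-contained and avoids the unverified ``$\MD(G,M)=2$ is minimal'' step, at the cost of exhibiting an anonymous minimizer rather than the intended ground-truth graph; the paper's example carries the extra (contextually relevant, though not logically required) information that the true graph itself can remain the MMD minimizer under faithfulness violations, which is what makes MMD interesting as a recovery criterion. Both arguments establish the literal statement; yours in fact makes transparent that the second claim follows from the mere existence of a non-DAG-representable independence model together with finiteness of $\cG$.
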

\begin{proof}[Proof of \cref{prop:faithfulness_implies_mmd}]
    Let $G^*\in\cG$ be a graphical model and $M$ be an independence model that is Markovian and faithful to $G^*$.
    Then by definition we have $M = M_{G^*}$.
    This means $\MD(M, G^*) = 0$ and since the Markov distance is non-negative, surely $\MD(M, G^*) \le \MD(M, G)$ for any DAG $G$.
    To see that there are cases where MMD holds but $M$ is not faithful to any DAG, consider an independence model $M$ that is Markovian and faithful to the graph $G$ in \cref{fig:sp_fails_against_mmd} except for the \emph{in}dependences $X_1\ind X_2\mid X_4$ and $X_2\ind X_3\mid X_1$.
    For faithfulness, a graph $G$ would have to contain no edge between $X_1$ and $X_2$. But then the dependence $X_1\not\ind X_2$ would violate the Markov property.
    Therefore, there is no DAG that is Markovian and faithful to the given independence model, yet MMD holds.

    				\begin{figure}[htp]
		\centering
		\begin{tikzpicture}
			\node[obs] (X1) {$X_1$} ;
			\node[obs, right=of X1] (X2) {$X_2$} edge[<-] (X1) ;
			\node[obs, right=of X2] (X3) {$X_3$} edge[<-] (X2) ;
            \node[obs, right=of X3] (X4) {$X_4$} edge[<-] (X3) ;
			\draw (X1) edge[->, bend left=20] (X4);
		\end{tikzpicture}
		\caption{If this graph is the ground truth but we have the unfaithful \emph{in}dependences $X_1\ind X_2\mid X_4$ and $X_2\ind X_3\mid X_1$,
        there is no Graph that is Markovian and faithful to the observed independence model. Further, MMD is fulfilled but SP outputs a different graph.}
		\label{fig:sp_fails_against_mmd}
	\end{figure}
\end{proof}
Perhaps more interesting, \cref{ex:mmd_holds_sp_not} shows a case where SP fails but MMD still recovers the correct graph.
%Perhaps more interesting, there are cases where SP fails but MMD still recovers the correct graph.
\begin{example}[MMD holds but not SP]
\label{ex:mmd_holds_sp_not}
    Consider again an independence model $M$ that is Markovian and faithful to the graph $G$ in \cref{fig:sp_fails_against_mmd} except for the \emph{in}dependences $X_1\ind X_2\mid X_4$ and $X_2\ind X_3\mid X_1$.
    It can be verified easily that $G$ and $M$ fulfil MMD with $
    \MD(G, M) = 2$.
    But SP cannot recover $G$, as the algorithm would output $G - (X_2\to X_3)$.
	\label{ex:sp_fails_against_mmd}

\end{example}
%Yet, MMD is not strictly weaker than the SP algorithm.
Yet, MMD is not strictly weaker than the SP algorithm, as \cref{ex:sp_works_mmd_not} demonstrates.
\begin{example}[SP works but not MMD]
\label{ex:sp_works_mmd_not}
	Consider the graph $G$ with nodes $X, Y, Z$ but no edges.
    Assume further that due to false positive CI-tests, we get the \emph{in}dependence model %\footnote{Note, that $M$ is not Graphoid.} 
    $M$ with $Y\not\ind Z,\, X\not\ind Y\ |\  Z$,\, $Y\not\ind Z\ |\  X$ 
    and \emph{in}dependence otherwise (note that this model is not Graphoid).
    Clearly $\MD(G, M) = 3$. But for the graph $G'$ with the additional edge $Y\to Z$, we have $\MD(G', M) = 2$, so $G$ and $M$ do not fulfil MMD.
    Yet,  SP would return $G$.
\end{example}

\subsection{MMD Experiments}

%Finally, 
%we wanted to test, whether the error correction presented in \cref{prop:error_correction_trees} can actually improve the result of graph discovery.
%To this end we 
We generated a multivariate Gaussian distribution that is Markovian to a spanning tree over five nodes as described below. 
We then used the algorithm from \cref{prop:error_correction_trees} (MMD algorithm) and a simplified version of PC to recover the tree, which we will call \emph{TreePC}. 
As we can see in \cref{fig:tree_discovery}, MMD can indeed profit from considering additional PGM-redundant tests.

For the data generation, we first pick a random spanning tree with five nodes.
To this end, we initialise a matrix with uniformly distributed numbers from $[0, 1)$, interpret it as an adjacency matrix of a weighted graph, and find a maximum weight spanning tree using Kruskal's algorithm.
We then pick a node as root uniformly at random and orient all edges away from the root in a depth-first search to get a Markov equivalent DAG.
Then we can recursively draw samples from this graph:
we uniformly pick coefficients for a linear structural causal model from $(-1, -0.1] \cup [0.1, 1)$ and draw noise from a standard normal distribution.
For each dataset, we generate 500 samples.

To find the underlying tree under the MMD assumption, we conduct all CI-tests in the set $S$ from \cref{prop:error_correction_trees} using the Fisher $Z$ test from \texttt{causal-learn} with $\alpha$-threshold $0.01$.
We calculate the Markov distance with respect to $S$ for all possible spanning trees over the nodes.

As a baseline, we consider a simplified version of the PC algorithm.
This way, we only use tests of the same conditioning set size and can rule out that the difference is due to the statistical condition of the problem.
First, we skip the initial phase, where PC would conduct all marginal independence tests, as in a spanning tree, all nodes are dependent anyway.
We then proceed with the tests with a single conditioning variable as usual.
Since, in the limit of infinite data, these tests are already sufficient to identify the graph, we do not consider larger conditioning sets.
Further, we stop once the current graph is a tree, as any further CI-tests could only violate the spanning property.

For each of the resulting graphs, we calculate the structural Hamming distance \citep{tsamardinos2006max}, i.e., the number of differing edges, to the ground truth graph.
We repeat the experiment for 1000 datasets.

Finally, we conducted a Mann-Whitney $U$ test for the null-hypothesis that the distributions of the SHD of MMD and TreePC are not stochastically ordered. 
The test yields a $p$-value of $p=6.54\cdot 10^{-103}$.
\begin{figure}
    \centering
    \includegraphics[width=.4\linewidth]{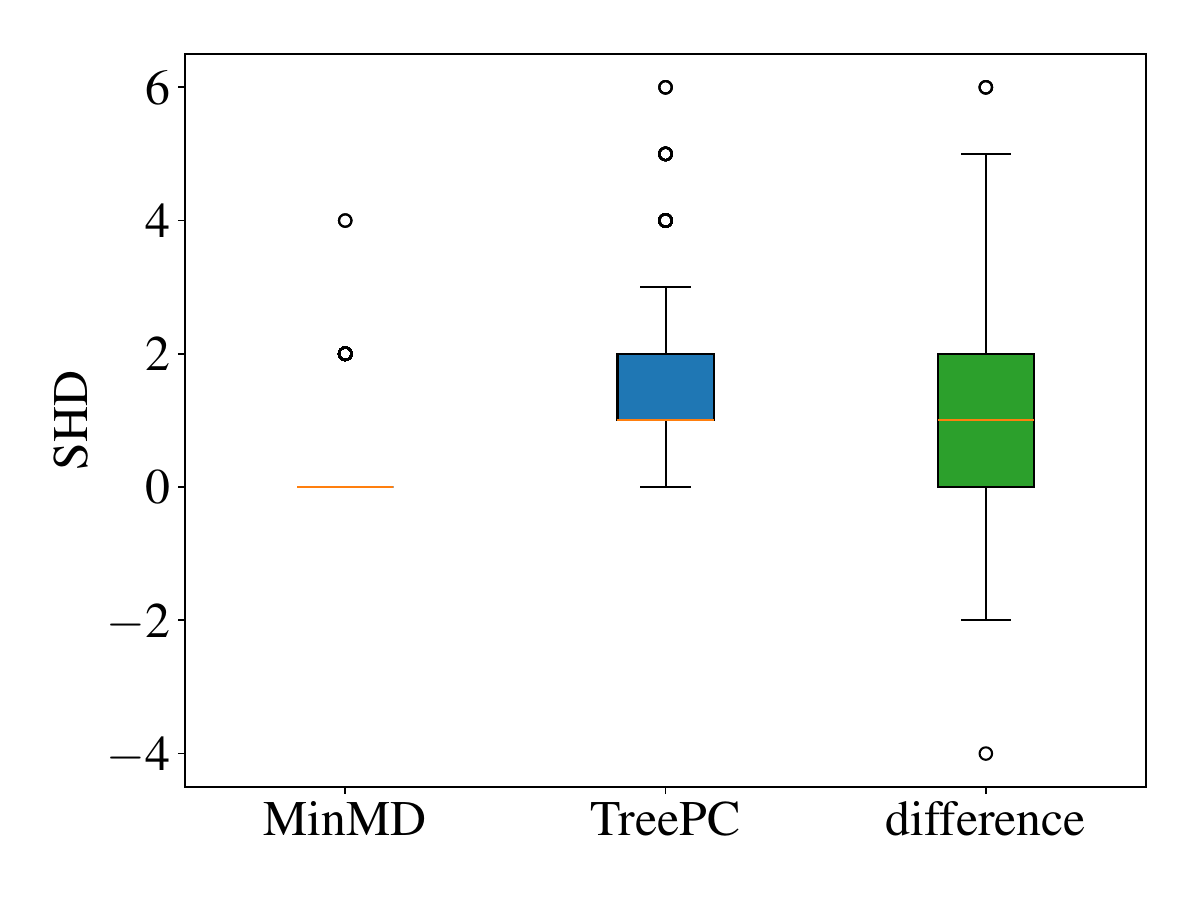}
    \caption{Structural Hamming distance for simplified PC algorithm and MMD algorithm from \cref{prop:error_correction_trees}. MMD outperforms PC.}
    \
	\label{fig:tree_discovery}
\end{figure}

As another check to demonstrate the utility of the PGM-redundant tests, we have also conducted an additional experiment, similar to \cref{fig:sp_fails_against_mmd}.
We compared three different sets of tests that we could use in the optimization. 
One set is the baseline, one contains additional GA-redundant tests, and the third one contains additional non-GA-redundant tests. 
We observed that the additional GA-redundant tests did not change the results, while the additional non-GA-redundant tests significantly improved the results.

More formally, we start with the TreePC algorithm that we used as a baseline before. Denote the set of tests conducted by TreePC with $S'$, the set of tests from MinMD as in \cref{prop:error_correction_trees} with $S$, and the empirical independence model as $M$. 
Note that $S'\subseteq S$. To find GA-redundant CI-statements, we applied axiom 5 of the Graphoid axioms (Intersection) to all combinations of tests in $S$. 
This way, we get GA-redundant marginal independence statements that we denote with $U$ (this is not a comprehensive list of GA-redundant statements). 
Note that the tests in $U$ should statistically be better conditioned than the tests in $S$. 
For the derivation of the statements in $U$, we did not use all tests in $S$. 
Denote with $W$ the subset of $S$ of CI-statements that we used to derive $U$. 
As a baseline, we took the spanning tree with minimal Markov distance on $S'\cup W$, i.e., $\min_{T\in\cT} \MD_{S'\cup W}(T, M)$.
Call this algorithm $A_1$. 
The rationale for including $S'$ is that we have at least a set of tests that is sufficient to identify the graph in the limit of infinite data. 
Then we include the GA-redundant tests and optimize $\min_{T\in\cT} \MD_{S'\cup U \cup W}(T, M)$. 
Let this be $A_2$. 
We finally compare this with the MinMD version from the original experiment (which optimizes over $S$) and call it $A_3$.
This way, we have one algorithm that optimizes over $S'\cup W$ ($A_1$), one that uses the same tests plus GA-redundant tests ($A_2$), and one that uses additional non-GA-redundant tests ($A_3$).

In our experiment, visualised in \cref{fig:tree_discovery_plus_graphoid}, we see that $A_1$
outperforms TreePC already, as it is restricted to the correct model class and also has some PGM-redundant tests. 
We further see that $A_1$ and $A_2$ do not differ at all (in the considered experiments). 
This is what we would expect, as the GA-redundant tests contain no additional information. 
Finally, $A_3$ performs significantly (with respect to the Mann-Whitney $U$ test) better than $A_1$ and $A_2$ with a $p$-value of  $p=2.49\cdot 10^{-12}$. 

The fraction of GA-redundant tests ranges between 0 and 41.9\%, with 15.5\% on average.
Under faithfulness and in the limit of infinite data, we should not be able to apply axiom 5, i.e., we are only able to derive GA-redundant statements with axiom 5 when there are at least two tests with insufficient power.
Thus, we also tried the same experiment with a sample size of 100 to increase the fraction of GA-redundant tests.  
Indeed, we can find more GA-redundant tests like this, namely a fraction of 29.1\% on average.
But we still get qualitatively the same result with $p$-value of $p=4.18\cdot 10^{-7}$. 

\begin{figure}
    \centering
    \includegraphics[width=.4\linewidth]{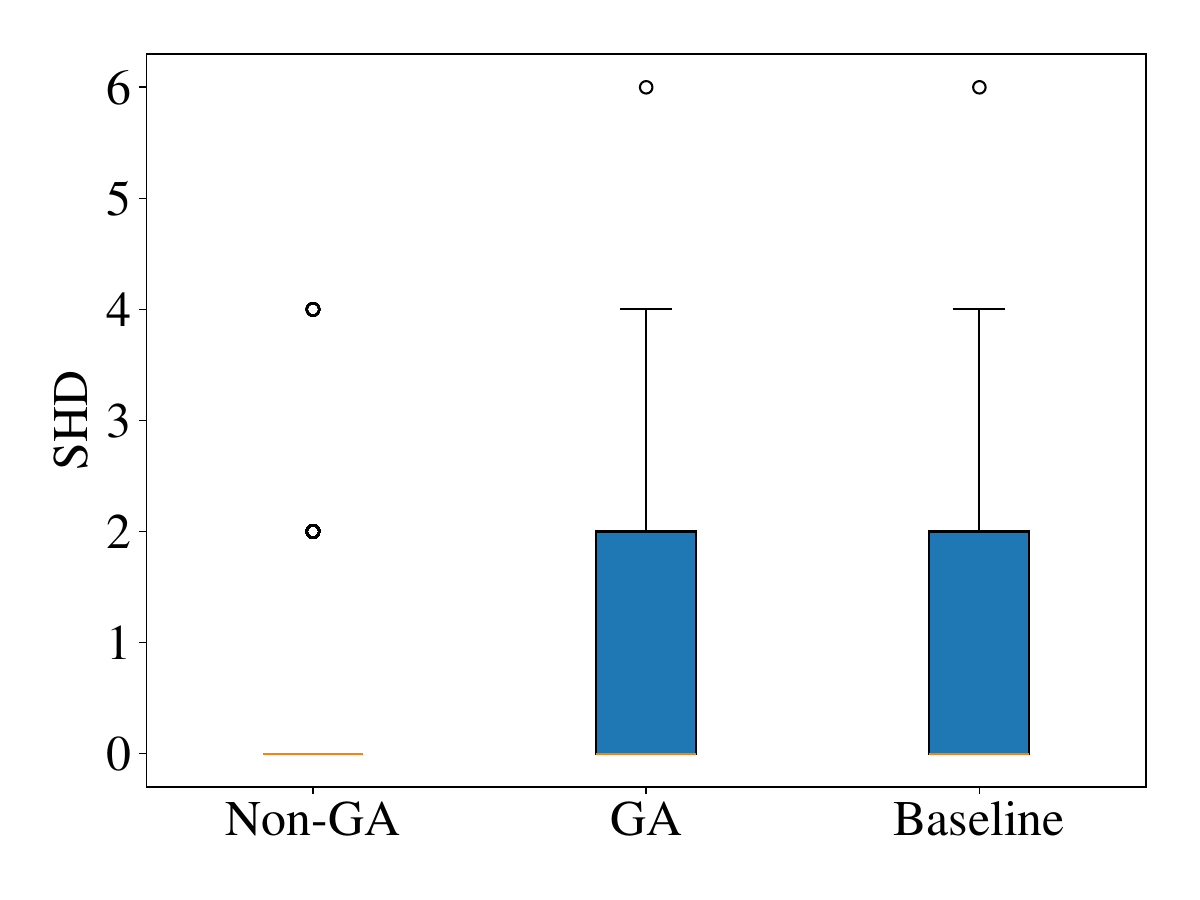}
    \caption{Structural Hamming distance when we modify MMD to optimize over different sets of tests. Additional GA-redundant tests do not change the result. Additional non-GA-redundant tests improve the results significantly.}
    \
	\label{fig:tree_discovery_plus_graphoid}
\end{figure}

\section{ADDITIONAL EXPERIMENTS}
\label{sec:graphoid_vs_graphical_synthetic}
In the experiment in \cref{fig:graphoid_vs_graphical_sachs}, we have seen that the PGM-redundant CI-tests seem to indicate more wrong predictions than the GA-redundant tests.
In the following experiments, we want to investigate how the redundant tests behave in other scenarios where we would expect them to indicate errors and scenarios where the learned graphs are mostly correct.
To this end, we generate a DAG with five nodes according to an Erdös-Rényi model with edge probability $0.3$.
We then uniformly draw coefficients for a linear structural equation model from $(-1, -0.1] \cup [0.1, 1)$.
For each variable, we add Gaussian noise with zero mean and unit variance.
For \cref{fig:graphoid_vs_graphical_small} we generate 50 samples, while for \cref{fig:graphoid_vs_graphical_large} we generate 2000 samples.

We learn DAGs using the procedure from \cref{prop:protocol_graph_markovian} with the Fisher $Z$ test from \texttt{causal-learn} and a topological ordering from the ground truth graph.
We then find PGM-redundant CI-statements via \cref{prop:sufficient_cond_non_graphoid} and \cref{cor:iterated_sufficient_redundancy} and GA-redundant CI-statements via \cref{prop:protocol_graph_markovian} and \cref{prop:coupling}.
After conducting a test, we add it to the set of previously conducted tests.
We report the fraction of wrong predictions by the graphical model, where we ignore examples where we did not find any PGM-redundant or GA-redundant CI-statements, respectively.
This reduces the effective sample size to 364 samples in \cref{fig:graphoid_vs_graphical_small} and 750 in \cref{fig:graphoid_vs_graphical_large}.

The small dataset is supposed to yield worse graph discovery results and therefore is expected to show plenty of wrongly predicted CI-statements, and indeed we observe that the correct ground truth graph is only recovered for  $21.8\%$ of the learned graphs, while with the larger sample we recover the correct graph in $98\%$ of the cases.
And indeed we see that in the latter case the difference between the tests is considerably smaller, as we were hoping.
Finally, we conducted two Mann-Whitney $U$ tests for the null-hypotheses that the distributions of the test errors are not stochastically ordered.
For \cref{fig:graphoid_vs_graphical_small} we get a $p$-value of $p=8.94\cdot 10^{-36}$.
For \cref{fig:graphoid_vs_graphical_large} we get $p=2.55\cdot 10^{-72}$, which is also clearly significant.
We hypothesise that mainly the extreme values of the distribution are responsible for the latter $p$-value, as the empirical median is identical, unlike in \cref{fig:graphoid_vs_graphical_small}.
In conclusion, we think that these experiments demonstrate that the additional errors indicated by the PGM-redundant CI-tests in \cref{fig:graphoid_vs_graphical_sachs} cannot be explained by, e.g., the PGM-redundant tests being more error-prone by nature.
This corroborates our hypothesis that they are a promising tool to evaluate graph discovery.

\begin{figure}[htp]
	\centering
    \subfloat[Dataset with 50 samples.]{
	   \includegraphics[width=.4\linewidth]{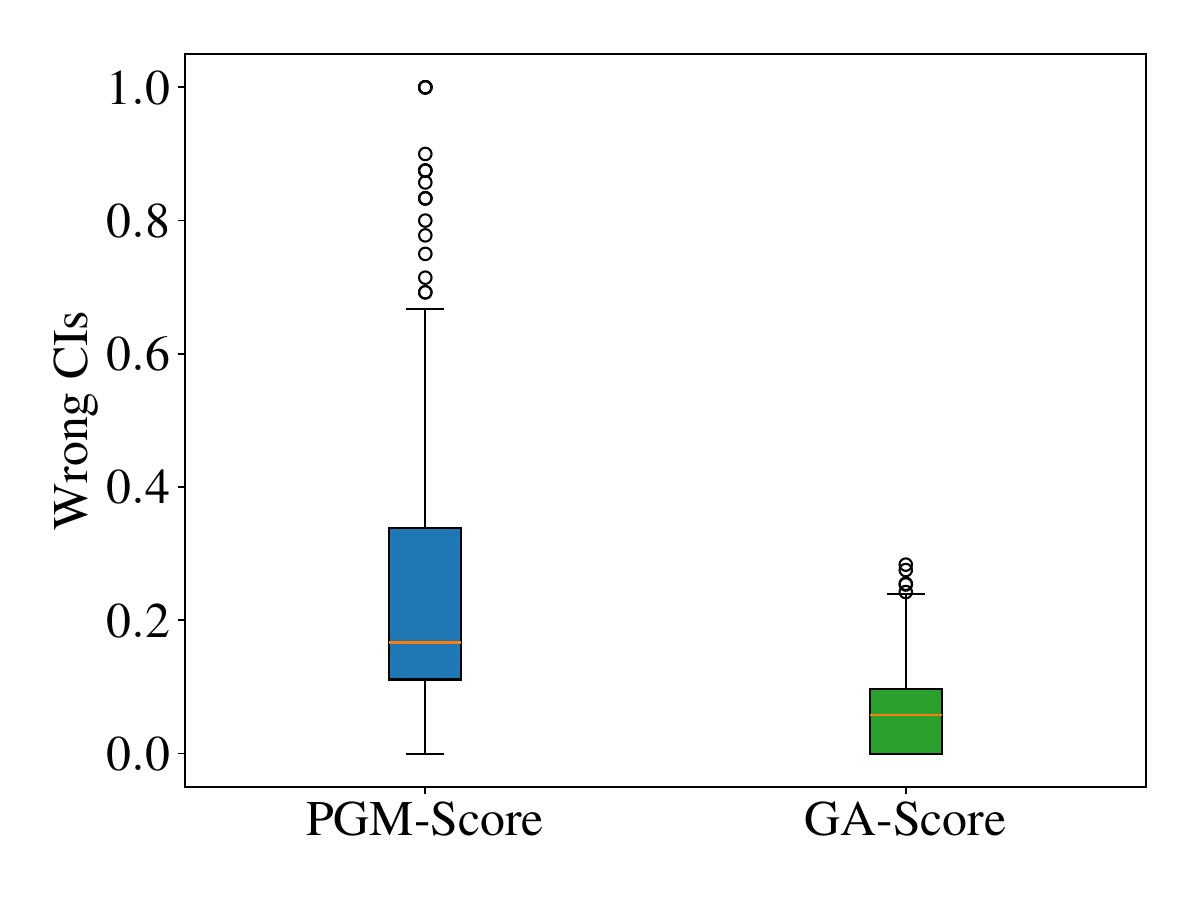}
       \label{fig:graphoid_vs_graphical_small}
    }\hspace{1cm}
    \subfloat[Dataset with 2000 samples.]{
    \includegraphics[width=.4\linewidth]{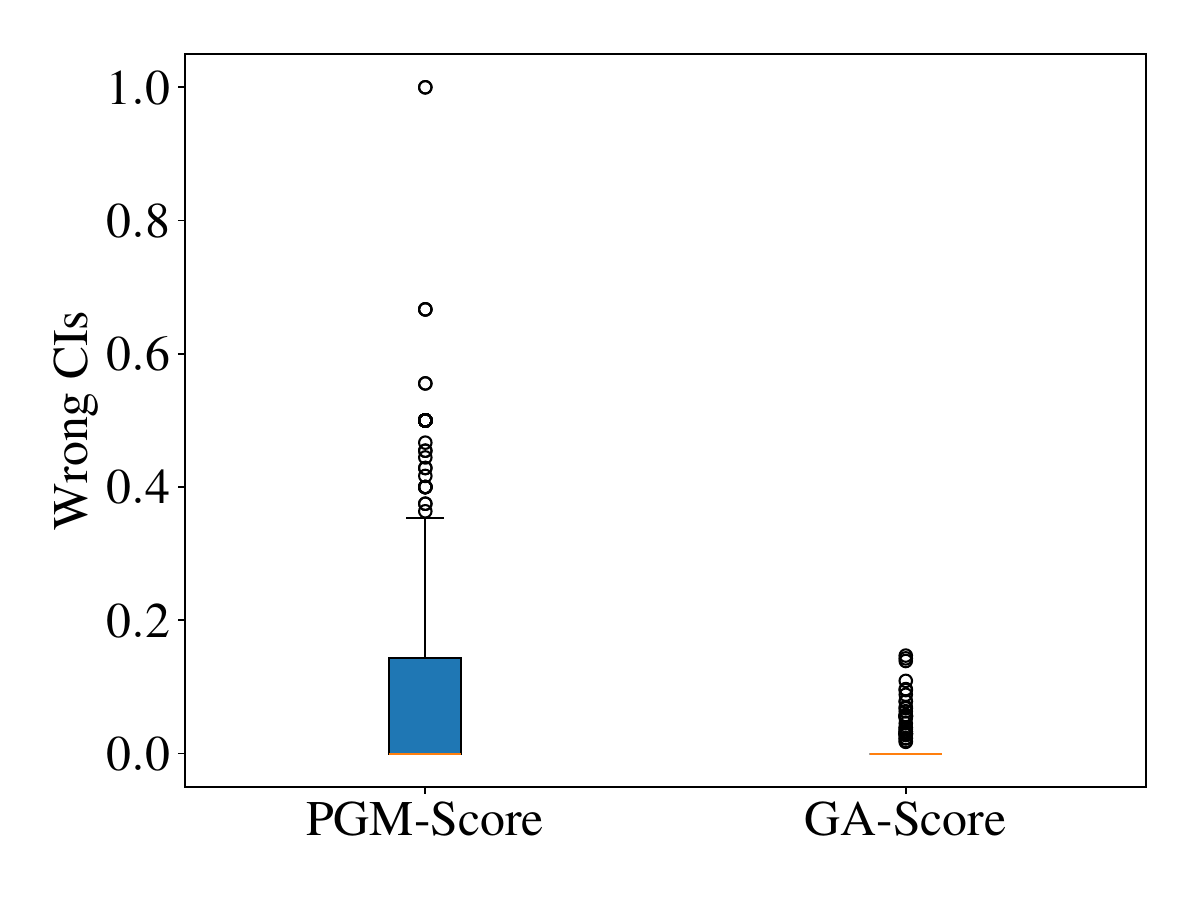}
    \label{fig:graphoid_vs_graphical_large}
    }
	\caption{Incorrect predictions of PGM- versus GA-redundant CI-statements on synthetic datasets with five nodes and different sample sizes.}
	\label{fig:graphoid_vs_graphical}
\end{figure}

\section{EXPERIMENTAL DETAILS}
\label{sec:experiment_details}

\paragraph{\Cref{fig:p_values_redundant}} For the experiment in \cref{fig:p_values_redundant}, we first generate a DAG with four nodes according to an Erdös-Rényi model with edge probability $1/2$.
We then uniformly draw coefficients for a linear structural equation model \citep{pearl2009causality} from $(-1, -0.1] \cup [0.1, 1)$.
For each variable, we add Gaussian noise with zero mean and unit variance.
We generate 2000 samples for each dataset.
Then, we randomly permute a list containing all triplets in $\CI(\bV)$, where $\bV$ is the set of nodes.
For every triplet $(X, Y, \bZ)$ in this list, we conduct a Fisher $Z$ test for independence, implemented in \texttt{causal-learn} \citep{zheng2024causal}.
We also check with the \texttt{Z3} SMT solver \citep{z3} whether the result of $\CI(X, Y\mid \bZ)$ follows from the previously conducted tests (with $\alpha$-threshold $0.01$) via Graphoid axioms.
If so, we add the $p$-value of the test to either the list of implied dependences or \emph{in}dependences, respectively.
We repeat the same experiment 16 times and keep adding the $p$-values to the same lists.
The plot in \cref{fig:p_values_redundant} shows these two lists of $p$-values and $\alpha$.
\Cref{fig:correlation_p_values_fraction} shows how many of the tests were found to be GA-redundant.

\begin{figure}[htp]
	\centering
    \includegraphics[width=.35\linewidth]{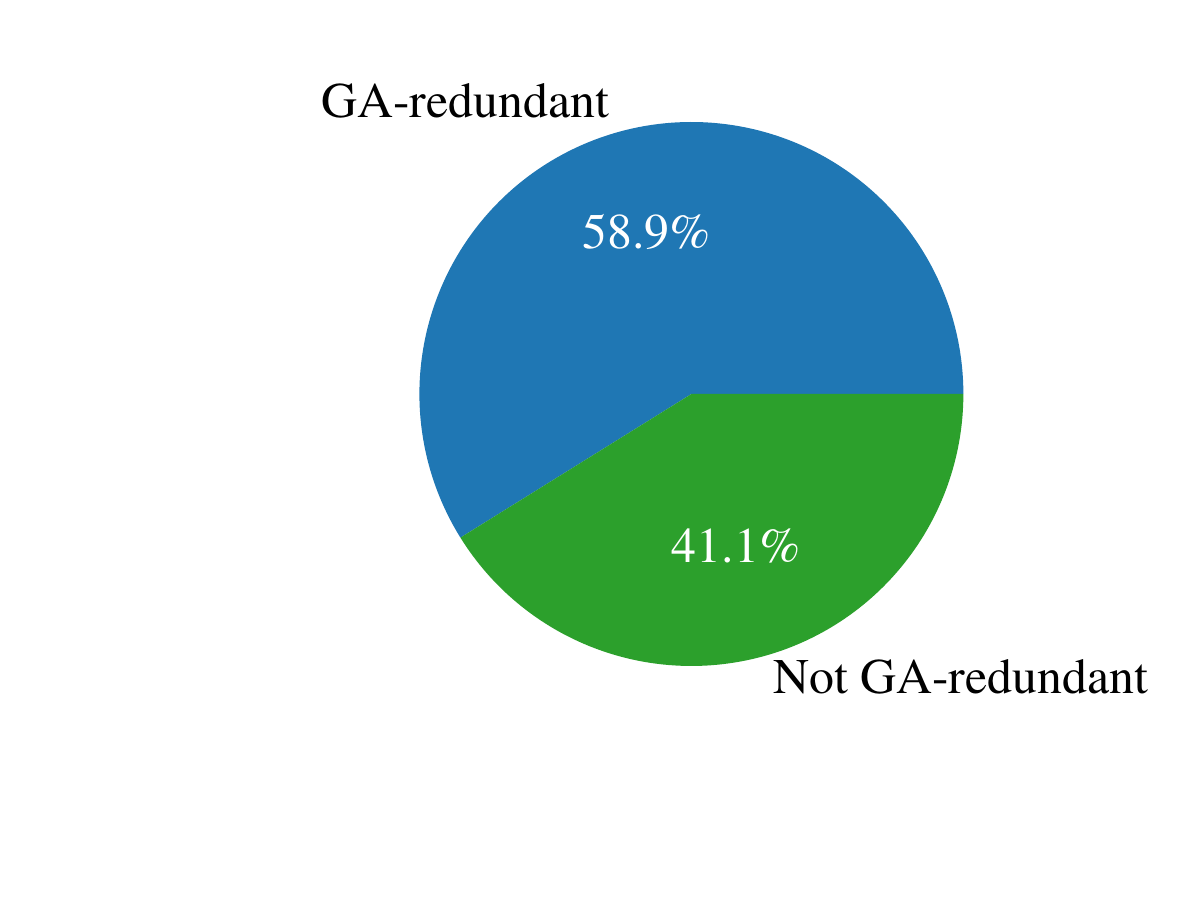}
	\caption{Fraction of all tests that were identified as GA-redundant in the experiment in \cref{fig:p_values_redundant}.}
	\label{fig:correlation_p_values_fraction}
\end{figure}

\paragraph{\Cref{fig:graphically_redundant_two_datasets}} For the experiment in \cref{fig:graphically_redundant_two_datasets}, we generate two kinds of datasets, one generated by a DAG and the other by an undirected graphical model.
The ground truth graphs can be seen in \cref{fig:ground_truth_two_datasets}.
All variables are binary.
For the DAG, we draw coefficients for the conditional distributions of each variable given all possible values of its parents.
We then recursively sample the values along the topological ordering of the graph, starting from $W$.
We drew $P(X=1 \ |\  W=0),\, P(Y=1 \ | \  W=0),\, P(Z=1\ | \  X=0, Y=0)$ and $P(Z=1\ |\  X=1, Y=1)$ uniformly from $[0.3, 0.7)$ and then set
\begin{align*}
    P(X=1 \mid W=1) &= 1-P(X=1 \mid W=0)\\
    P(Y=1 \mid W=1) &= 1-P(Y=1 \mid W=0)\\
    P(Z=1\mid X=0, Y=1) &= 1- P(Z=1\mid X=0, Y=0)\\
    P(Z=1\mid X=1, Y=0) &= 1- P(Z=1\mid X=1, Y=1).
\end{align*}
For the undirected model, we used the \texttt{pgmpy} package \citep{ankan2024pgmpy}.
For each edge $X-Y$ in the graph, we add a factor $\phi$ to a factor graph model, where we pick the value $\phi(X=0, Y=0)$ and $\phi(X=1, Y=1)$ uniformly from $[0.1, 0.3)$ and set 
\begin{align*}
\phi(X=0, Y=1) &= 1 - \phi(X=0, Y=0) \\
\phi(X=1, Y=0) &= 1- \phi(X=1, Y=1).
\end{align*}
We then draw the distribution using Gibbs sampling with the default parameters of \texttt{pgmpy}.
Note that neither of the graphs' independence models is Markovian and faithful to the other.
For each dataset we generated 2000 samples.

We then use the methods described in \cref{prop:markov_network_markovian,prop:protocol_graph_markovian} to construct an undirected graphical model and a DAG on each dataset, where we used the $\chi^2$-test implemented in \texttt{causal-learn} with $\alpha$-threshold $0.01$.

Finally, we identify GA-redundant tests using \cref{prop:coupling,prop:markov_network_markovian,prop:protocol_graph_markovian}.
We exclude these tests and report the fraction of remaining unused CI-tests, where the graphical implication contradicts the empirical test result.
There were no cases where all statements were detected as GA-redundant CI-statements.

In most practical settings it would be more relevant to use the PGM-tests detected by \cref{prop:sufficient_cond_non_graphoid}.
Yet, in this toy scenario, we have already seen in \cref{ex:s-coupling_not_necessary} that the only test that would reveal the graph misspecification between the undirected ground truth and a learned DAG would not be detected by \cref{prop:sufficient_cond_non_graphoid} (although it is PGM-redundant).
The experiment nonetheless shows that we pick up the relevant signal if we exclude all detected GA-redundant tests.

%After conducting a test, we add it to the set of previously conducted tests.
We repeated the experiment 1000 times.
Additionally, we conduct Mann-Whitney $U$ tests for the null-hypotheses that the distributions of errors of the DAGs and the undirected graphs are not stochastically ordered. 
For the datasets generated by DAGs, we get $p=3.03\cdot 10^{-23}$ and for the datasets from the undirected model we get a p-value below machine precision.

\begin{figure}[htp]
	\centering
    \subfloat[Ground truth for the DAG datasets in \cref{fig:graphically_redundant_two_datasets}.]{
	\begin{tikzpicture}
		\node[obs] (W) {$W$} ;
		\node[obs, below left=of W] (X) {$X$} edge[<-] (W) ;
		\node[obs, below right =of W] (Y) {$Y$} edge[<-] (W) ;
		\node[obs, below right=of X] (Z) {$Z$} edge[<-] (X) ;
		\draw (Y) edge[->] (Z);
		
	\end{tikzpicture}
    }\hspace{2cm}
    \subfloat[Ground truth for the undirected graph datasets in \cref{fig:graphically_redundant_two_datasets}.]{
    \begin{tikzpicture}
		\node[obs] (W) {$W$} ;
		\node[obs, below left=of W] (X) {$X$} edge[-] (W) ;
		\node[obs, below right =of W] (Y) {$Y$} edge[-] (W) ;
		\node[obs, below right=of X] (Z) {$Z$} edge[-] (X) ;
		\draw (Y) edge[-] (Z);
		
	\end{tikzpicture}
    }
	\caption{Ground truth graphs for the experiment in \cref{fig:graphically_redundant_two_datasets}.}
	\label{fig:ground_truth_two_datasets}
\end{figure}

\Cref{fig:redundant_tests_fractions} shows the fraction of the tests that were identified as GA-redundant by the criterion from \cref{prop:coupling,prop:protocol_graph_markovian,prop:markov_network_markovian} among $\CI(\bV) \setminus L$ and $\CI(\bV) \setminus L_\pi$, respectively.
In other words, the fraction of tests that follow from the tests used in   \cref{prop:protocol_graph_markovian,prop:markov_network_markovian} among all others. 

\begin{figure}[htp]
	\centering
	   \includegraphics[width=.4\linewidth]{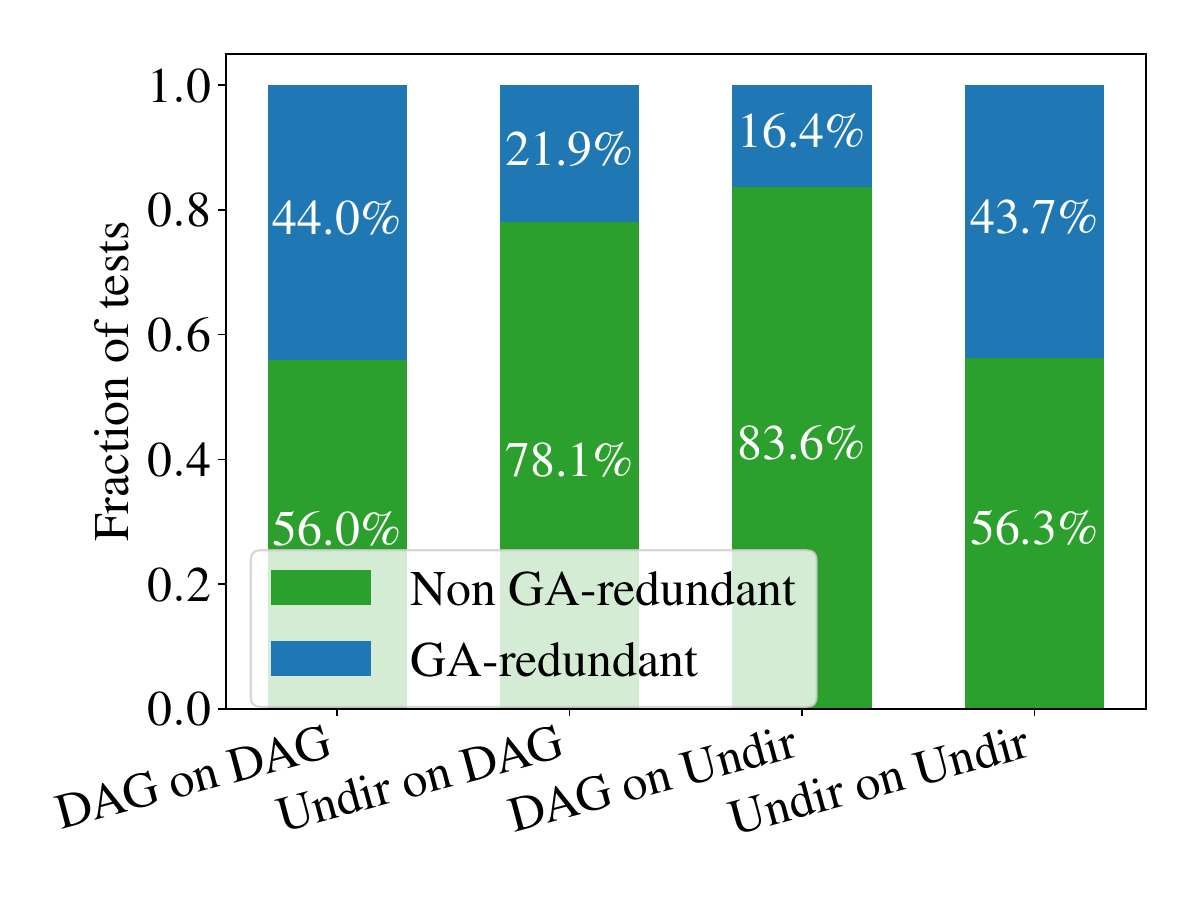}
	\caption{Fraction of all unused tests that were identified as GA-redundant in the experiments in \cref{fig:graphically_redundant_two_datasets}.}
	\label{fig:redundant_tests_fractions}
\end{figure}

\paragraph{\Cref{fig:graphoid_vs_graphical_sachs}} For the experiment in \cref{fig:graphoid_vs_graphical_sachs}, we use the dataset from \citet{sachs2005causal} as provided in the \texttt{causal-learn} package and the ground truth graph given in the original paper.
We repeat the experiment 1000 times.
For each run, we draw a causally sufficient subset (i.e., a subset such that according to the ground truth graph, there is no hidden confounder) of five variables using rejection sampling, and we draw a bootstrap sample of the same size as the original dataset.
We then apply the algorithm from \cref{prop:markov_network_markovian} using a topological ordering of the ground truth graph and the Fisher $Z$ test to get a DAG.
As the significance threshold of the tests, we used $\alpha = 0.001$.
In no instance the expected ground truth was recovered.
From this DAG, we derive PGM-redundant CI-statements via \cref{prop:sufficient_cond_non_graphoid} and \cref{cor:iterated_sufficient_redundancy}.
We conduct these tests and report the fraction of CI-tests, where the graphical implication contradicts the empirical test result. 
%For this, we ignore examples where we did not find any purely graphically redundant or Graphoid-redundant CI-statements, respectively.
%This reduces the effective sample size to 974.
After conducting a test, we add it to the set of previously conducted tests.
Finally, we conducted a Mann-Whitney $U$ test for the null-hypothesis that the distribution of errors is not stochastically ordered. We found a $p$-value of $p=2.92\cdot 10^{-144}$.

\Cref{fig:graphoid_vs_graphical_fractions} shows the fraction of the tests that were identified as PGM-redundant with respect to $L_\pi$ and following tests by the criterion from \cref{prop:sufficient_cond_non_graphoid,cor:iterated_sufficient_redundancy} among $\CI(\bV) \setminus L_\pi$, with $L_\pi$ as in \cref{prop:protocol_graph_markovian}.

\begin{figure}[htp]
	\centering
    \subfloat[Sachs dataset.]{
	   \includegraphics[width=.3\linewidth]{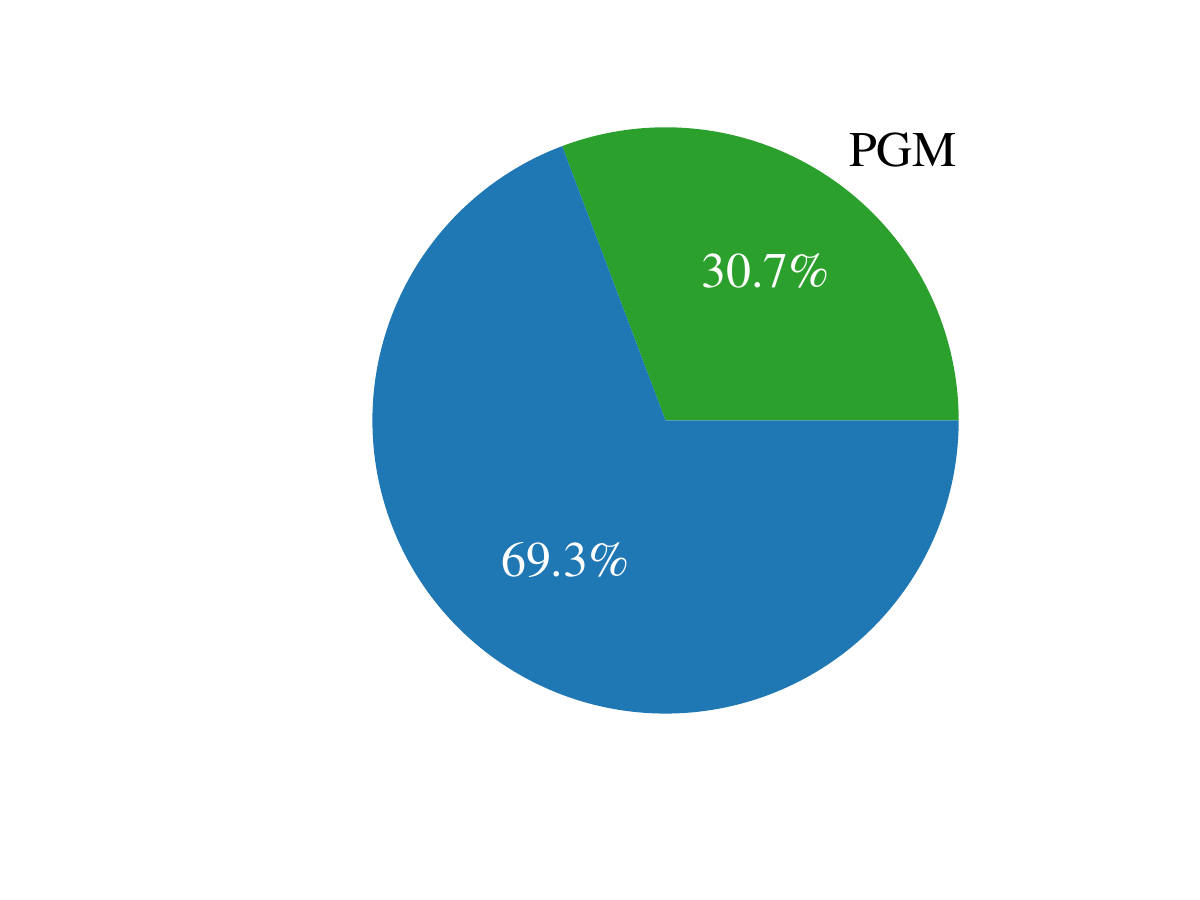}
       \label{fig:fractions_sachs}
    }
    \subfloat[Dataset with 50 samples.]{
	   \includegraphics[width=.3\linewidth]{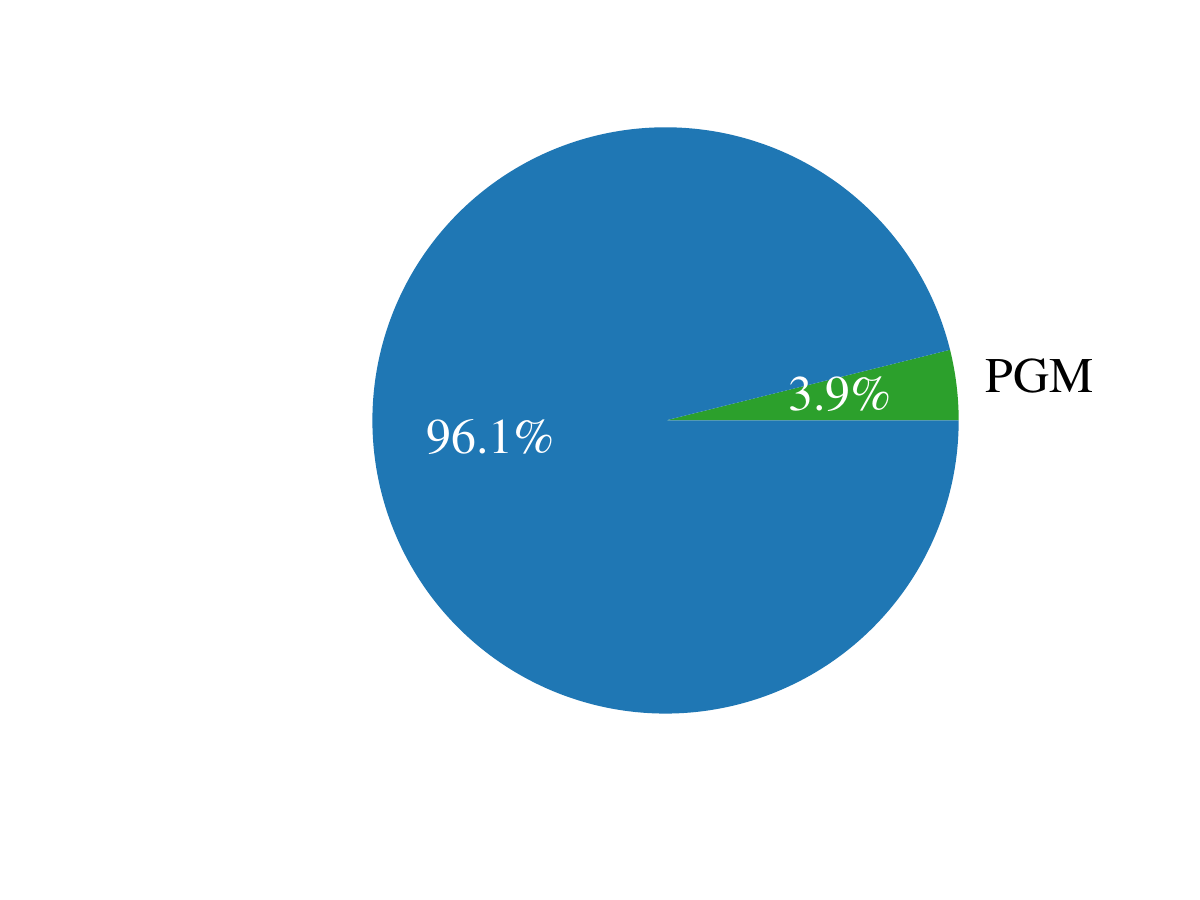}
       \label{fig:fractions_synthetic_small}
    }
    \subfloat[Dataset with 2000 samples.]{
    \includegraphics[width=.3\linewidth]{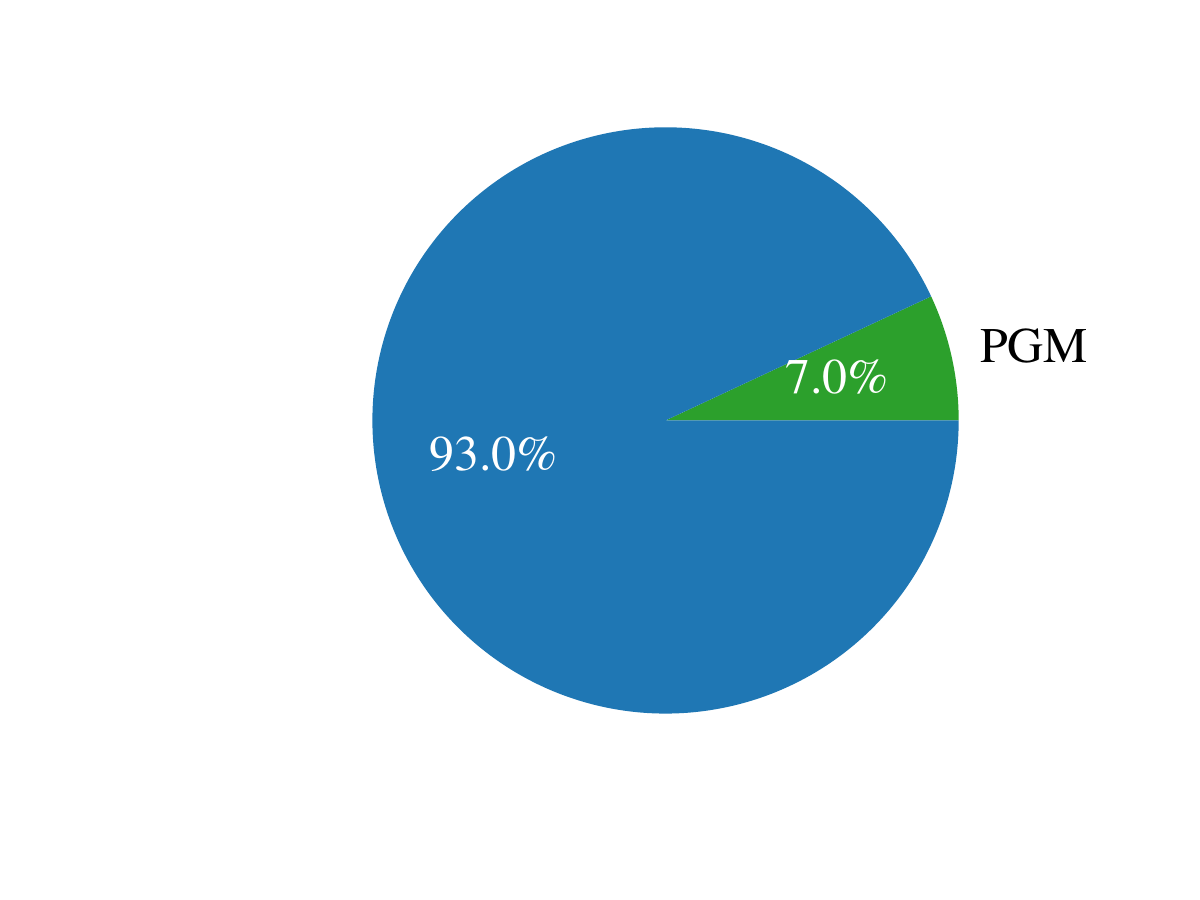}
    \label{fig:fractions_synthetic_large_5}
    }
	\caption{Fraction of all unused tests that were identified as PGM-redundant in the experiments in \cref{fig:graphoid_vs_graphical_sachs,fig:graphoid_vs_graphical}.}
	\label{fig:graphoid_vs_graphical_fractions}
\end{figure}

All experiments, including the ones in \cref{sec:mmd,sec:graphoid_vs_graphical_synthetic}, were run on an Apple M3 Pro processor with 18 GB RAM. The experiment for \cref{fig:p_values_redundant} took roughly a day, %the experiments in \cref{fig:graphoid_vs_graphical_more_nodes} a couple of hours, 
while all other experiments finished in a couple of minutes.

\end{document}